\documentclass{article}

\usepackage[preprint]{neurips_2026}

\usepackage[utf8]{inputenc} % allow utf-8 input
\usepackage[T1]{fontenc}    % use 8-bit T1 fonts
\usepackage{hyperref}       % hyperlinks
\usepackage{url}            % simple URL typesetting
\usepackage{booktabs}       % professional-quality tables
\usepackage{amsfonts}       % blackboard math symbols
\usepackage{nicefrac}       % compact symbols for 1/2, etc.
\usepackage{microtype}      % microtypography
\usepackage{xcolor}         % colors

\usepackage{amsmath}
\usepackage{amssymb}
\usepackage{amsthm}
\usepackage{mathtools}
\usepackage{graphicx}
\usepackage{enumerate}
\usepackage{subcaption}
\usepackage{tikz}
\usetikzlibrary{shapes.geometric, arrows.meta, positioning, fit, calc, backgrounds, matrix}
\usepackage{algorithm}
\usepackage{algorithmic}
\usepackage[capitalize,noabbrev]{cleveref}
\usepackage{enumitem}
\usepackage{wrapfig}

\theoremstyle{plain}
\newtheorem{theorem}{Theorem}[section]
\newtheorem{proposition}[theorem]{Proposition}
\newtheorem{lemma}[theorem]{Lemma}
\newtheorem{corollary}[theorem]{Corollary}
\theoremstyle{definition}
\newtheorem{definition}[theorem]{Definition}
\newtheorem{assumption}[theorem]{Assumption}
\theoremstyle{remark}
\newtheorem{remark}[theorem]{Remark}

\DeclareMathOperator*{\argmax}{arg\,max}

\title{Language-Induced Priors for Domain Adaptation}

\author{%
  Qiyuan Chen \\
  Industrial and Operations Engineering \\
  University of Michigan \\
  Ann Arbor, MI 48109 \\
  \texttt{cqiyuan@umich.edu} \\
  \And
  Jiayu Zhou \\
  School of Information \\
  University of Michigan \\
  Ann Arbor, MI 48109 \\
  \texttt{jiayuz@umich.edu} \\
  \And
  Raed Al Kontar \\
  Industrial and Operations Engineering \\
  University of Michigan \\
  Ann Arbor, MI 48109 \\
  \texttt{alkontar@umich.edu} \\
}

\begin{document}

\maketitle

\begin{abstract}
Domain adaptation faces a fundamental paradox in the cold-start regime. When target data is scarce, statistical methods fail to distinguish relevant source domains from irrelevant ones, which often leads to negative transfer. In this paper, we address this challenge by leveraging expert textual descriptions of the target domain, a resource that is often available but overlooked. We propose a probabilistic framework that translates these semantic descriptions into a choice model, namely a Language-Induced Prior (LIP), that learns the preferences from a pretrained Large Language Model (LLM). The LIP is then integrated into an Expectation-Maximization algorithm to identify source relevance. Methodologically, this framework is compatible with any parametric model where a likelihood is available. It allows the LIP to guide the selection of sources when target signals are weak, while gradually refining these choices as samples accumulate. Theoretically, we prove that the estimator roughly matches an oracle cold-start MSE under a correct prior, while remaining asymptotically consistent regardless of the quality of the LIP. Empirically, we validated the framework on a descriptive (Gaussian estimation), a predictive (C-MAPSS dataset), and a prescriptive task (MuJoCo hopper). % Empirical results show that the language hints help the identification of source relevance, resulting in a more robust adaptation, especially when target data is limited.
\end{abstract}

\section{Introduction}

The success of modern machine learning is driven by the abundance of data, but it can be fragile in the cold-start regime. When an agent starts its task in a new environment, whether it is a new machine deployed to an assembly line or a diagnostic model for a new patient, it becomes statistically challenging to estimate an accurate model from the limited observations. The standard remedy for this problem is Domain Adaptation (DA). DA leverages the abundant data from existing agents that potentially operate under different environments. These environments from which data are borrowed are called source domains, while the new environment we care about is called the target domain.

However, the mere existence of source data is not enough. In realistic settings, source domains are usually heterogeneous, including both relevant and irrelevant domains. If an agent indiscriminately pools data from all available machines, it can suffer from negative transfer, where the borrowed knowledge biases the model and results in even worse performance compared to a simple model trained on the limited target data alone. For example, the corrosion of a machine operating in a humid environment can occur faster compared to one operating in a dry environment. If we train a model for a humid environment with the degradation data generated in a dry one, the remaining life can be significantly longer than its actual value.

Although identifying the relevant sources is important for DA, it is not easier than learning a good model with limited data. In fact, if one can correctly identify the relevant sources, then an accurate model can be trained using the data from these sources. This creates a paradox in the cold-start regime. To reliably borrow data from the source domains, the agent needs a reliable understanding of the target domain, which is unrealistic given the limited data. Fundamentally, regardless of how rich the source datasets are, the performance of the resulting model is bottle-necked by the limited information regarding the target domain, which calls for attention beyond the source datasets.

Fortunately, strictly numerical data is rarely the only information available when deploying a new agent. In many practical applications, we usually have some contextual information about the target domain. This contextual information, usually taking the form of expert textual descriptions, is a rich modality of information that has been overlooked in the DA literature. A maintenance log might note that a machine is ``vibrating more than usual,'' or a clinical chart might describe a patient as ``working irregular night shifts''. While these descriptions do not contain labeled training samples, they induce a strong prior regarding the underlying physics of the environment.

Humans naturally evade the cold-start paradox by leveraging this context. An engineer need not observe a hundred failure events to know that a vibrating machine fails more often. Semantic descriptions immediately remind us of relevant prior experiences. In this paper, we bridge the gap between semantic reasoning and statistical learning. Our contributions are summarized as follows.
\begin{itemize}[leftmargin=*]
    \item \textbf{Language-Induced Prior (LIP).} To the best of our knowledge, this is the first work to define and leverage LIP for identifying source relevance in DA (see related work in the Appendix \ref{sec:appendix_related_work}). We use a choice model to quantify the relevance of the sources, which guides DA in a cold-start setting.
    \item \textbf{LIP-aided EM.} LIP is integrated into MDA through a Bayesian hierarchical model with latent source relevance, which is solved by an Expectation-Maximization (EM) algorithm. This allows the model to be guided by expert intuition when the target signals are weak, while automatically refining the source relevance as data accumulates.
    \item \textbf{Theoretical guarantees.} Our theory verifies the methodological claims. Theorem \ref{thm:fs_main} shows that EM behaves as if it is trained on all the relevant sources under a correct prior, demonstrating why and how semantic knowledge helps. The fall-back analysis shows that both the E-step (Theorem \ref{thm:asymptotic_relevance}) and the M-step (Theorem \ref{thm:asymptotic_consistency}) are consistent under any LIP, including the incorrect ones. 
    \item \textbf{Empirical validation.} We empirically validate our approach on three tasks: an illustrative example of Gaussian estimation (both 1-d and 2-d); a case study on the C-MAPSS dataset when no approximations are needed; and a deep reinforcement learning example for MuJoCo hopper. In all cases, our LIP-aided EM shows superior performance, especially when the target data is scarce. Our code is available at ~\url{https://github.com/Chen-Qiyuan/LIP-EM}.
\end{itemize}

\section{Setting}
\paragraph{Problem Formulation}
Consider $K+1$ independent agents indexed by $k$, where each agent operates under a corresponding domain $k$ whose behavior is controlled by a set of latent parameters $\theta_k$ of dimension $d$. Without loss of generality, we let domain $k=0$, where agent $k=0$ operates, be the target domain, and the rest of the agents $k\in \{1,\dots, K\}$ are considered the source domains. For each domain $k$, we denote their historical dataset as $\{\mathcal{D}_k\}_{k=1}^K$, where each $\mathcal{D}_k$ contains $N_k$ i.i.d. data points generated by parameter $\theta_k$. Note that the source domains are purely for borrowing knowledge, and we care only about the performance in the target domain, which boils down to the estimation of $\theta_0$. We note that this problem setup is a general framework for any parameter estimation problem.

\begin{figure}[htbp]
\centering
\resizebox{0.55\textwidth}{!}{
\begin{tikzpicture}[
    node distance=1.2cm and 2cm,
    >={Stealth[length=2mm]},
    % Node Styles
    observed/.style={circle, draw=black, fill=gray!20, thick, minimum size=1.0cm},
    latent/.style={circle, draw=black, thick, minimum size=1.0cm},
    fixed/.style={rectangle, draw=black, fill=black!5, thick, minimum size=0.9cm},
    plate/.style={draw=gray, dashed, thick, rounded corners, inner sep=0.4cm, label={[text=gray, anchor=south east]south east:#1}}
]

% --- 1. THE HORIZONTAL ALIGNMENT (Source k) ---
% Left: pi_k (Fixed Prior)
\node[fixed] (pi) {$\pi_k$};

% Right of pi: c^k (Selector)
\node[latent, right=of pi] (c) {$c_k$};

% Right of c: theta_k (Local Parameter)
\node[latent, right=of c] (theta_k) {$\theta_k$};

% Right of theta_k: D^k (Source Data)
\node[observed, right=of theta_k] (Dk) {$\mathcal{D}_k$};

% --- 2. THE TOP SIDE (Target) ---
% Above theta_k: theta (Global Target Parameter)
\node[latent, above=of theta_k] (theta) {$\theta_0$};

% Right of theta: D^0 (Target Data)
\node[observed, right=of theta] (D0) {$\mathcal{D}_0$};

% --- 3. THE BOTTOM SIDE (Null) ---
% Below theta_k: phi_null (Fixed Null Parameter)
\node[fixed, above=of c] (phi) {$\phi_{\text{null}}$};

% --- EDGES ---
% Horizontal Chain (Source)
\draw[->] (pi) -- (c);
\draw[->] (c) -- (theta_k);
\draw[->] (theta_k) -- (Dk);

% Target Connections (Soft transfer via tau^2)
\draw[->] (theta) -- (theta_k) node[midway, right] {\scriptsize $\mathcal{N}(\theta_0,\tau^2I)$};
\draw[->] (theta) -- (D0);

% Null Connection
\draw[->] (phi) -- (theta_k);

% --- PLATE ---
% Enclose the horizontal row (Source specific variables)
\node[plate={$k=1, \dots, K$}, fit=(pi)(c)(theta_k)(Dk)] {};

\end{tikzpicture}
}
\caption{Generative Directed Acyclic Graph}
\label{fig:dag}
\end{figure}

We model the data generating process of a multi-source domain adaptation (MDA) problem as a Bayesian hierarchical model shown in Fig.~\ref{fig:dag}. A prior $\pi_k$ generates a latent indicator variable $c_k \in \{0,1\}$, determining whether a source domain $k$ shares a similar distribution with the target, i.e.,
\begin{equation}
    \theta_k \sim \begin{cases}
        \mathcal{N}(\theta_0, \tau^2 I) & c_k=1\\
        \phi_{\text{null}} & c_k=0
    \end{cases}
\end{equation}

If $c_k=1$, then the parameter of domain $k$ is sampled from an isotropic normal distribution centered at $\theta_0$ with per-coordinate variance $\tau^2$, where the hyperparameter $\tau$ is a \emph{small} number that defines how similar the relevant domains are. In the special case where $\tau=0$, it is enforced that all the relevant source domains should have the exact same parameter $\theta_k=\theta_0$. If $c_k=0$, the agent operates under a domain-specific noise parameter $\theta_k$ generated by the noise component $\phi_{\text{null}}$. 

\paragraph{Motivation of the LIP} As highlighted earlier, determining membership $c_k$ in a cold-start regime presents a circular challenge: when $N_0$ is scarce, identifying relevant sources requires a reliable estimate of $\theta_0$, yet estimating $\theta_0$ requires knowing which sources to borrow from. We break this dilemma by supplying additional information through a \emph{Language-Induced Prior} (LIP) $\pi = (\pi_1, \ldots, \pi_K)$, where $\pi_k = \mathbb{P}(c_k = 1)$. LIP leverages a textual description of the target domain to encode source relevance before any target data is processed. We note that our framework requires a text description \textit{only} for the target environment. This reflects how such information arises in practice: domain experts typically describe the current system of interest (e.g., ``the machine is vibrating abnormally''), but equivalent descriptions for historical source datasets are rarely recorded and are infeasible to reconstruct retrospectively. While experts could, in principle, assess each source's relevance from the target description, doing so manually is costly and does not scale with size $K$. We propose to use pretrained LLMs as a practical alternative, leveraging their reasoning ability and broad pretrained knowledge to parse the description and assess relevance.

\section{Methodology}

\subsection{LIP Construction}
\label{sec:lip}
To construct the LIP, we use Empirical Bayes to find the prior that maximizes the likelihood of LLM responses. In this step, one can be creative in the elicitation method, and any prior (even one that is not created by an LLM) is compatible with the proposed EM algorithm in the next section. Here, we propose one exemplary method through subgroup selection (see details in Appendix~\ref{sec:appendix_lip}). Roughly, this consists of two steps (see Fig. \ref{fig:lip_pipeline}). First, we elicit a comparison dataset $M$ from an LLM judge. Then, the LIP $\pi$ is fitted by maximizing its likelihood on $M$.

\begin{figure}[b]
    \centering
    \includegraphics[width=\linewidth]{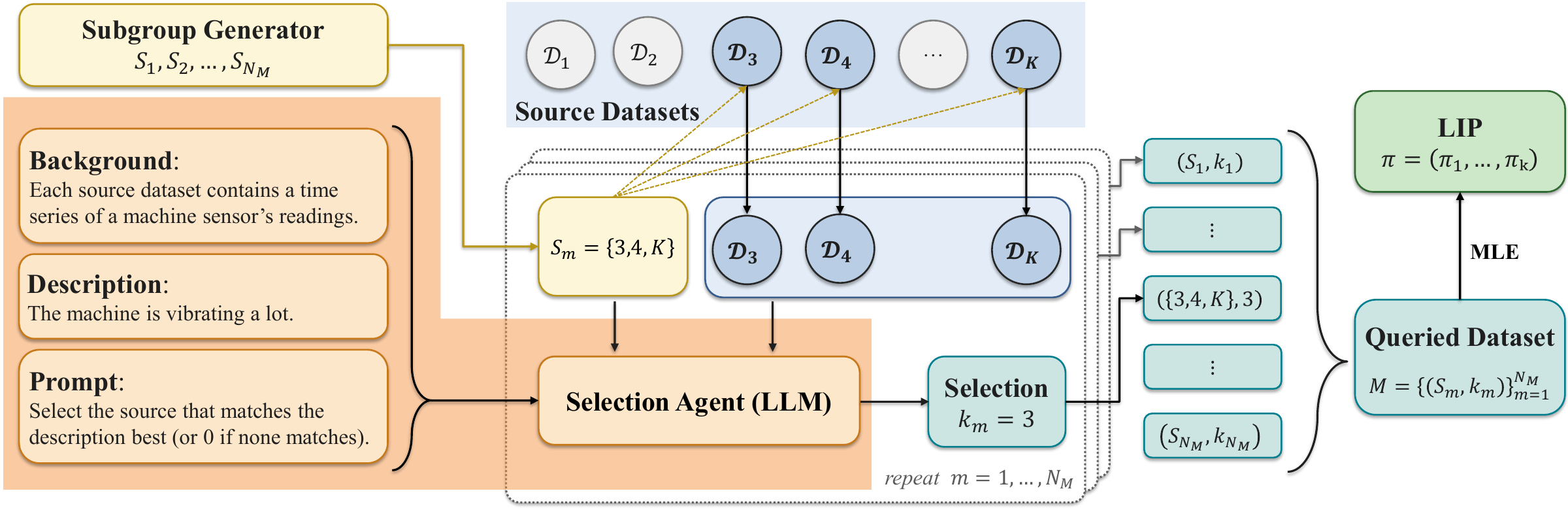}
    \caption{LIP Pipeline}
    \label{fig:lip_pipeline}
\end{figure}

\paragraph{Subgroup Preference Elicitation}
We pose the problem to the LLM as a multiple choice question. For each query $m = 1, \dots, N_M$, we sample a subgroup $S_m \subseteq \{1, \ldots, K\}$ and ask the LLM to select the most relevant source from $S_m$ given sufficient background knowledge about the data structure, the target description $h$, and the corresponding datasets $\{D_k\}_{k \in S_m}$, or return $k_m = 0$ if no candidate is deemed relevant. We record the responses as $M = \{(S_m, k_m)\}_{m=1}^{N_M}$.

\paragraph{Empirical Bayes for LIP}
We now estimate the prior probability distribution by maximizing its likelihood on $M$. LIP is parameterized as $\pi_k = \sigma(\alpha_k)$, where $\alpha_k$ is a real-valued latent variable associated with the source $k$. Because the LLM is restricted to selecting at most one candidate per query, we model its choice as a \emph{conditional logit with an outside option} \citep{mcfadden1974conditional,luce1959individual}. The null option is a virtual alternative whose worth $\alpha_0$ represents the selection threshold. Concretely, the probability that the LLM returns choice $k_m \in S_m \cup \{0\}$ given subgroup $S_m$ is 
\begin{equation}
\label{eq:lip_choice}
    p(k_m \mid S_m) = \frac{e^{\alpha_{k_m}}}{e^{\alpha_0} + \sum_{j \in S_m} e^{\alpha_j}},
\end{equation}
where $\alpha_{k_m} = \alpha_0$ when $k_m = 0$ (the null is selected) and $\alpha_{k_m} = \alpha_k$ when a source $k$ is selected.

To find the most probable LIP, we minimize the regularized negative log-likelihood:
\begin{equation}
\label{eq:LIP_opt}
    \min_{\alpha_0, \alpha_1, \dots, \alpha_K}\quad -\sum_{m=1}^{N_M} \log p(k_m \mid S_m) + \epsilon \sum_{k=1}^K \left(\alpha_k - \log\frac{p_0}{1-p_0}\right)^2,
\end{equation}
where $\epsilon$ is an $L_2$ regularization coefficient and $p_0$ is the default probability for a uniform prior.

\subsection{Optimization with LIP-aided EM}
\label{sec:em_summary}
Recall that the goal in our problem setup is to estimate an accurate $\theta_0$. At this stage, we have both the LIP $\pi$ and the observed data $\mathbf{O} := \{\mathcal{D}_0, \dots, \mathcal{D}_K\}$. Under a probabilistic model with latent parameters, a natural choice to infer the memberships is EM. We denote the latent memberships as $\mathbf{c} := \{c_1, \dots, c_K\}$ and optimize them jointly with $\theta_0$ by maximizing the complete-data likelihood
\begin{equation}
    p(\mathbf{O}, \mathbf{c} \mid \theta) = \mathcal{L}(\theta; \mathcal{D}_0)\prod_{k=1}^K p(\mathcal{D}_k\mid c_k=1,\theta)^{c_k}\,p(\mathcal{D}_k\mid c_k=0)^{1-c_k}\,\pi_k^{c_k}(1-\pi_k)^{1-c_k}.
\end{equation}
EM alternates between an E-step (updating the posterior of $\mathbf{c}$) and an M-step (maximizing over $\theta$), where each iteration aims to maximize the expectation of the complete-data log-likelihood with respect to the posterior of $\mathbf{c}$, conditional on the current iterate:
\begin{equation}
\label{eq:raw_opt}
    \theta^{(t+1)} \gets \argmax_{\theta} \mathbb{E}_{\mathbf{c}\mid\mathbf{O},\theta^{(t)}}\left[\log p(\mathbf{O}, \mathbf{c} \mid \theta)\right].
\end{equation}

\subsubsection{The E-Step (Expectation)}
\label{sec:E-step}
To optimize \eqref{eq:raw_opt}, we need to first compute the objective. Using the Bernoulli form of $c_k$ ($\mathbb{P}(c_k) = \pi_k^{c_k}(1-\pi_k)^{1-c_k}$) and dropping $\theta$-independent terms (see Appendix \ref{sec:E-step-derive}), we can reduce \eqref{eq:raw_opt} to
\begin{equation}
\label{eq:opt}
    \max_{\theta}~ \log\mathcal{L}(\theta;\mathcal{D}_0) + \sum_{k=1}^K w_k^{(t)}\log p(\mathcal{D}_k\mid c_k=1, \theta),
\end{equation}
where the objective function in \eqref{eq:raw_opt} can be written as a weighted sum of likelihoods with weights $w_k^{(t)} := \mathbb{E}[c_k\mid\mathcal{D}_k,\theta^{(t)}] = \mathbb{P}(c_k=1\mid\mathcal{D}_k,\theta^{(t)})$. To compute these weights $w_k^{(t)}$, we apply Bayes' rule and write it in a numerically stable sigmoid form:
\begin{equation}
\label{eq:E-step}
    w_k^{(t)} = \sigma\left( \log\frac{p(\mathcal{D}_k \mid c_k=1, \theta^{(t)})}{p(\mathcal{D}_k \mid c_k=0)} + \log\frac{\pi_k}{1-\pi_k}\right),
\end{equation}
where $\sigma$ is the sigmoid function. In a nutshell, the log-Likelihood Ratio (log-LR) $\log [p(\mathcal{D}_k\mid c_k=1,\theta^{(t)})/p(\mathcal{D}_k \mid c_k=0)]$ corrects the prior $\pi_k$ by the empirical evidence. When the relevant likelihood $p(\mathcal{D}_k\mid c_k=1,\theta^{(t)})$ is larger than the null likelihood $p(\mathcal{D}_k \mid c_k=0)$, the ratio is positive and raises the weight and conversely lowers it. Now what remains is computing these two likelihoods.

\paragraph{Relevant Likelihood} Given $\theta^{(t)}$, $p(\mathcal{D}_k\mid c_k=1, \theta^{(t)})$ can be computed by marginalizing out $\theta_k$
\begin{equation*}
    p(\mathcal{D}_k\mid c_k=1, \theta^{(t)}) = \int \mathcal{L}(\theta_k;\mathcal{D}_k)\,\mathcal{N}(\theta_k\mid\theta^{(t)},\tau^2 I)\,d\theta_k,
\end{equation*}
which is generally intractable. For low-dimensional $\theta_k$, one may efficiently approximate the integral using Monte Carlo methods with $\theta_s\sim\mathcal{N}$. However, when $d$ is large, sampling $\theta_s$ can be inefficient, so we approximate the likelihood with a second-order Taylor expansion of $\log\mathcal{L}(\,\cdot\,;\mathcal{D}_k)$ at $\theta^{(t)}$.
\begin{equation}
\label{prox:taylor}
    \log\mathcal{L}(\theta;\mathcal{D}_k)\approx \log\mathcal{L}(\theta^{(t)};\mathcal{D}_k) + {g_k^{(t)}}^\intercal (\theta-\theta^{(t)}) - \tfrac{1}{2}(\theta-\theta^{(t)})^\intercal H_k^{(t)} (\theta-\theta^{(t)}),
\end{equation}
where $g_k^{(t)} := \nabla_\theta\log\mathcal{L}(\theta;\mathcal{D}_k)|_{\theta=\theta^{(t)}}$ and $H_k^{(t)} := -\nabla^2_\theta\log\mathcal{L}(\theta;\mathcal{D}_k)|_{\theta=\theta^{(t)}}$. Under this approximation, the relevant likelihood can be approximated by \eqref{eq:rel_prox} (Proposition \ref{prop:marginal_likelihood_thetat}, proved in the Appendix).

\begin{proposition}
\label{prop:marginal_likelihood_thetat}
Fix $\tau > 0$ and $\theta^{(t)} \in \mathbb{R}^d$, and assume $H_k^{(t)} \succeq 0$. Approximation \eqref{prox:taylor} yields
\begin{equation}
\label{eq:rel_prox}
    \log p(\mathcal{D}_k \mid c_k=1, \theta^{(t)}) \approx \log\mathcal{L}(\theta^{(t)};\mathcal{D}_k) + \tfrac{\tau^2}{2}\,{g_k^{(t)}}^\intercal(I+\tau^2 H_k^{(t)})^{-1}g_k^{(t)} - \tfrac{1}{2}\log\det(I+\tau^2 H_k^{(t)}).
\end{equation}
\end{proposition}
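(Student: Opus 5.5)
The approach is to substitute the quadratic approximation \eqref{prox:taylor} into the marginalization integral and evaluate the resulting Gaussian integral in closed form. Write $\delta := \theta_k - \theta^{(t)}$, $g := g_k^{(t)}$, $H := H_k^{(t)}$ and $\ell_0 := \log\mathcal{L}(\theta^{(t)};\mathcal{D}_k)$. Under the approximation, $\mathcal{L}(\theta_k;\mathcal{D}_k)\approx e^{\ell_0}\exp\!\big(g^\intercal\delta - \tfrac12\delta^\intercal H\delta\big)$. The prior density is
\begin{equation*}
\mathcal{N}(\theta_k\mid\theta^{(t)},\tau^2 I) = (2\pi\tau^2)^{-d/2}\exp\!\big(-\tfrac{1}{2\tau^2}\delta^\intercal\delta\big).
\end{equation*}
Changing variables from $\theta_k$ to $\delta$ (unit Jacobian) gives
\begin{equation*}
p(\mathcal{D}_k\mid c_k=1,\theta^{(t)}) \approx e^{\ell_0}(2\pi\tau^2)^{-d/2}\int_{\mathbb{R}^d}\exp\!\Big(g^\intercal\delta - \tfrac12\delta^\intercal A\delta\Big)\,d\delta, \qquad A := H + \tau^{-2}I .
\end{equation*}

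Next I check that the integral converges. Since $H\succeq 0$ and $\tau>0$, we have $A\succeq \tau^{-2}I\succ 0$, so $A$ is invertible and the integrand is a non-degenerate Gaussian kernel. This is exactly where the hypotheses $H_k^{(t)}\succeq 0$ and $\tau>0$ are used. I then complete the square:
\begin{equation*}
g^\intercal\delta - \tfrac12\delta^\intercal A\delta = -\tfrac12(\delta - A^{-1}g)^\intercal A(\delta-A^{-1}g) + \tfrac12 g^\intercal A^{-1}g .
\end{equation*}
Applying the standard identity $\int e^{-\frac12 u^\intercal A u}\,du = (2\pi)^{d/2}\det(A)^{-1/2}$, the integral equals $(2\pi)^{d/2}\det(A)^{-1/2}\exp(\tfrac12 g^\intercal A^{-1} g)$.

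The remaining step, which is the only mildly delicate one, is to rewrite this in the form \eqref{eq:rel_prox}. Factor $A = \tau^{-2}(I+\tau^2 H)$. This gives $A^{-1} = \tau^2(I+\tau^2H)^{-1}$, so $\tfrac12 g^\intercal A^{-1}g = \tfrac{\tau^2}{2}g^\intercal(I+\tau^2H)^{-1}g$. It also gives $\det A = \tau^{-2d}\det(I+\tau^2H)$. Combining constants, $(2\pi\tau^2)^{-d/2}(2\pi)^{d/2}\det(A)^{-1/2} = \tau^{-d}\cdot\tau^{d}\det(I+\tau^2H)^{-1/2} = \det(I+\tau^2H)^{-1/2}$. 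All the $2\pi$ and $\tau$ powers therefore cancel. Taking logarithms yields $\ell_0 + \tfrac{\tau^2}{2}g^\intercal(I+\tau^2H)^{-1}g - \tfrac12\log\det(I+\tau^2H)$, which is \eqref{eq:rel_prox}.

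Finally, I note that the only approximation is \eqref{prox:taylor} itself; every step after the substitution is exact. Because $I+\tau^2H\succ 0$, the log-determinant is well defined.
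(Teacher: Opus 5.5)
Your proposal is correct and follows the paper's proof essentially step for step. Both arguments substitute the Taylor expansion and shift to $\theta_k-\theta^{(t)}$, merge the quadratics into $A=H_k^{(t)}+\tau^{-2}I\succ 0$, apply the completed-square Gaussian identity, and use $A=\tau^{-2}(I+\tau^2 H_k^{(t)})$ to cancel the $2\pi$ and $\tau$ factors before taking logarithms.
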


\begin{remark}[Zero-shift limit]
\label{rmk:zero_shift_limit}
At $\tau = 0$, both sides of \eqref{eq:rel_prox} reduce to $\log \mathcal{L}(\theta^{(t)}; \mathcal{D}_k)$, so the approximation holds with equality.
\end{remark}

\begin{remark}[Exactness for Gaussian likelihood]
\label{rmk:exact_gaussian_marginal_general}
Suppose $p(x \mid \theta_k) = \mathcal{N}(x;\, \theta_k, \Sigma_k)$ with known $\Sigma_k \succ 0$. Then \eqref{prox:taylor} holds with equality at every $\theta^{(t)}$, and consequently \eqref{eq:rel_prox} holds with equality.
\end{remark}

Remarks~\ref{rmk:zero_shift_limit} and \ref{rmk:exact_gaussian_marginal_general} are proved in the Appendix. When $\tau$ is small, the approximation in 
\eqref{eq:rel_prox} is justified even for non-Gaussian likelihoods. Note that a larger $\tau$ relaxes the relevance criterion and includes more sources, but at the cost of additional estimation error (see the discussion in Sec.~\ref{sec:theory}).

\paragraph{Null Likelihood} $p(\mathcal{D}_k\mid c_k=0)$ 
is determined by the null prior $\phi_{\mathrm{null}}$, a user-specified hyperparameter that acts as a background noise level, where a source $k$ is deemed relevant only if its likelihood exceeds this threshold. The precise choice of $\phi_{\mathrm{null}}$ is uncritical when $N_k$ is large, as the relevance weight commits to the correct value regardless (Theorem~\ref{thm:asymptotic_relevance}). However, a null floor set too low fails to suppress irrelevant sources and triggers negative transfer. It is therefore safer to err on the side of a 
slightly larger null likelihood. Two practical methods for setting $\phi_{\mathrm{null}}$ are discussed in Appendix~\ref{sec:null-heuristic}.

\subsubsection{The M-Step (Maximization)}
\label{sec:M-step}
Having computed the relevance weights $w_k^{(t)}$ in the E-step, we can now solve the maximization problem in \eqref{eq:opt}. The objective $Q$-function with respect to $\theta$ is
\begin{equation}
    Q(\theta\mid\theta^{(t)}) = \log\mathcal{L}(\theta;\mathcal{D}_0) + \sum_{k=1}^K w_k^{(t)}\log p(\mathcal{D}_k\mid c_k=1, \theta).
\end{equation}
Replacing the relevant log-likelihood with \eqref{eq:rel_prox} gives
\begin{align}
\max_{\theta^{(t+1)}}~ &\log\mathcal{L}(\theta^{(t+1)};\mathcal{D}_0) + \sum_k w_k^{(t)}\log\mathcal{L}(\theta^{(t+1)};\mathcal{D}_k) \notag \\
&+ \sum_k w_k^{(t)}\left[\,\frac{\tau^2}{2}{g_k^{(t+1)}}^\intercal\left(I+\tau^2 H_k^{(t+1)}\right)^{-1}g_k^{(t+1)} - \frac{1}{2}\log\det\left(I +\tau^2H_k^{(t+1)}\right)\right],\label{eq:final_q_function}
\end{align}
which can be interpreted as maximizing the log likelihood regularized by a $\tau^2$-weighted term. Yet, solving \eqref{eq:final_q_function} exactly requires computing the gradient $g_k$ and inverting a costly $d\times d$ matrix, which is not practical for high-dimensional $\theta_0$. As such, we introduce two approaches for fast computations. 

First, if we re-expand each $\log\mathcal{L}(\,\cdot\,;\mathcal{D}_k)$ around the local Maximum Likelihood Estimation (MLE) $\hat\theta_k := \arg\max_{\theta_k}\mathcal{L}(\theta_k;\mathcal{D}_k)$ rather than around the current iterate $\theta^{(t)}$. This shift of the expansion center kills the gradient term (since the gradient is $0$ at MLE) and replaces the iteration-dependent Hessian $H_k^{(t)}$ with the iteration-\emph{independent} Hessian at the MLE:
\begin{equation*}
    \log\mathcal{L}(\theta;\mathcal{D}_k) \approx \mathrm{const} - \tfrac{1}{2}(\theta-\hat\theta_k)^\intercal H_k^{\mathrm{MLE}}(\theta-\hat\theta_k),
\end{equation*}
where $H_k^{\mathrm{MLE}} := -\nabla^2_\theta\log\mathcal{L}(\theta;\mathcal{D}_k)|_{\theta=\hat\theta_k}$. Hereafter, we drop the superscript and write $H_k := H_k^{\mathrm{MLE}}$ when the expansion center is unambiguous. Marginalizing over $\theta_k$ as in Proposition~\ref{prop:marginal_likelihood_thetat} gives
\begin{equation*}
    \log p(\mathcal{D}_k\mid c_k=1, \theta) \approx \mathrm{const} - \tfrac{1}{2}(\theta-\hat\theta_k)^\intercal(I+\tau^2 H_k)^{-1} H_k(\theta-\hat\theta_k).
\end{equation*}
This expansion makes the objective quadratic, which admits a closed-form update
\begin{equation}
\label{eq:m_step_update}
    \theta^{(t+1)} = \Big(I + \sum_{k=1}^K \Lambda_k^{(t)}\Big)^{-1}\Big(\hat\theta_0 + \sum_{k=1}^K \Lambda_k^{(t)}\hat\theta_k\Big), \quad \Lambda_k^{(t)} := w_k^{(t)} H_0^{-1}(I+\tau^2 H_k)^{-1} H_k.
\end{equation}
The update is a relative-precision-weighted blend of the source MLEs that requires no gradient-descent inner loop. Because $\hat\theta_k, g_k, H_k$ depend only on $\mathcal{D}_k$ and not on the EM iterate, they are computed \emph{only once and reused} across iterations $t$.

Second, when $\tau$ is a small number, its square is numerically negligible in practice. We further approximate $H_k^{\mathrm{MLE}} \sim N_k\mathcal{I}$ and $H_0^{\mathrm{MLE}} \sim N_0\mathcal{I}$ using the expected Fisher information, which reduces \eqref{eq:m_step_update} to a clean weighted average of source MLEs (Appendix~\ref{sec:appendix_practical_em}):
\begin{equation}
\label{eq:prox_update}
    \theta^{(t+1)} = \frac{N_0\hat\theta_0 + \sum_{k=1}^K w_k^{(t)} N_k\, \hat\theta_k}{N_0 + \sum_{k=1}^K w_k^{(t)} N_k},
\end{equation}

For over-parameterized models like neural networks, weight-space averaging is meaningless, so we directly perform gradient ascent on \eqref{eq:final_q_function}. Appendices~\ref{sec:appendix_practical_em} and~\ref{sec:appendix_algorithms} provide the pseudocode for all variants.

\paragraph{Bayesian Tempering in EM} The E-step weight \eqref{eq:E-step} can be unstable in early iterations when $\theta^{(t)}$ is a poor estimate of $\theta_0$ because the log-likelihood ratio in \eqref{eq:E-step} can have high variance and overwhelm the LIP correction (see detailed discussion in Appendix~\ref{sec:bayesian_tempering}). To mitigate this, we introduce a tempering schedule that suppresses the log-likelihood ratio in early iterations and gradually increases it as the EM iterations approach $\theta_0$. This is achieved by multiplying the log-likelihood ratio by a tempering parameter $\beta_k^{(t)}$ that grows with $t$ and shrinks inversely with the standard deviation of the log-likelihood ratio. As a result, EM relies more on the LIP when the estimate is noisy and delegates to data otherwise. The effect of tempering will be discussed in the theoretical analysis in Sec.~\ref{sec:theory}.

\section{Theoretical Results}
\label{sec:theory}

This section provides two complementary guarantees for LIP-aided EM. First, when the LIP is well-calibrated, the MSE of the surrogate update \eqref{eq:prox_update} matches an oracle cold-start MSE (Sec.~\ref{sec:fs_correct_lip}). Second, even when the LIP is misspecified, the algorithm remains asymptotically consistent (Sec.~\ref{sec:asymptotic_consistency}). The first result formalizes the role of LIP in cold start, while the second part serves as a safety net.

\subsection{Finite-sample cold-start MSE under a correct LIP}
\label{sec:fs_correct_lip}

We consider the Gaussian-mean estimation problem where each source $k$ has $N_k = N$ i.i.d.\ samples from $p_k \triangleq \mathcal{N}(\theta_k, \sigma^2 I)$, the null prior $\phi_{\mathrm{null}}$ determines the null density $q(x) \triangleq p(x \mid c_k = 0)$, and $R \triangleq \{k : c_k = 1\}$, $\bar R \triangleq \{k : c_k = 0\}$ denote the relevant and irrelevant sets. Define the per-sample log-LR of the relevant model against the null on a single observation, i.e., $\ell(x;\theta) \;\triangleq\; \log\frac{p(x \mid \theta)}{q(x)}$, and its expectation under $p_k$ at the truth $\theta_0$,
\begin{equation}
\label{eq:fs_rho}
    \rho_k(\theta^{(t)}) \triangleq \mathbb{E}_{x \sim p_k}\left[\ell(x;\theta^{(t)})\right] = \mathrm{KL}(p_k \,\|\, q) - \mathrm{KL}(p_k \,\|\, p(\cdot \mid \theta^{(t)})).
\end{equation}
Arguably, the overarching \emph{oracle} one can aim for in a cold start identifies all the relevant sources:
\begin{equation}
\label{eq:fs_oracle}
    \theta^\star \triangleq \frac{N_0\hat\theta_0 + N\sum_{k\in R}\hat\theta_k}{N_0 + N|R|}.
\end{equation}
Under our model assumption in Fig.~\ref{fig:dag}, Proposition~\ref{prop:fs_one_step} in Appendix characterizes the oracle MSE as
\begin{equation}
\label{eq:fs_oracle_mse_body}
    \mathrm{MSE}_\star \triangleq \mathbb{E}\|\theta^\star - \theta_0\|^2 = \frac{d\sigma^2}{N_0 + N|R|} + \frac{d\tau^2\, N^2|R|}{(N_0+N|R|)^2},
\end{equation}
which is much smaller than the target-only MSE $d\sigma^2/N_0$ when $N|R| \gg N_0$. The second term is the irreducible $\tau$-shift contribution from the relevant cluster spread (variance $\tau^2$ per coordinate, $|R|$ relevant sources averaged). 
Motivated by \eqref{eq:fs_oracle_mse_body}, the rest of the analysis \emph{specialize to $\tau = 0$}. The general $\tau > 0$ case follows the same arguments with the additive $d\tau^2 N^2|R|/(N_0+N|R|)^2$ correction in the oracle term and an analogous correction term in the resulting theorem.

Theorem \ref{thm:fs_main} below rests on four assumptions, deferred to Appendix~\ref{sec:appendix_finite_sample}: \ref{ass:gen_model} fixes $\tau = 0$ and assumes a null prior $\phi_{\mathrm{null}}$ with a finite second moment $\bar D^2 \triangleq \mathbb{E}_{\theta \sim \phi_{\mathrm{null}}}\|\theta - \theta_0\|^2$; \ref{ass:regularity} states Fisher-information regularity and $V$-sub-Gaussian and $L$-Lipschitz per-sample log-LR; \ref{ass:separation} requires probabilistic separation $\mathbb{P}_{\theta \sim \phi_{\mathrm{null}}}(\|\theta - \theta_0\| < r_{\mathrm{sep}}) \leq \alpha$; and \ref{ass:margin} assumes an identifiability margin $\Delta^* > 0$ such that $\min_k|\rho_k|\geq \Delta^*$ on the well-separated event $\mathcal{S} \triangleq \{\|\theta_k - \theta_0\| \geq r_{\mathrm{sep}}\text{ for all } k \in \bar R\}$. With the assumptions, we show that LIP-aided EM matches the oracle MSE up to terms that are exponentially small in $N$. An unconditional result that removes the conditioning on the well-separated event $\mathcal{S}$ (see Corollary \ref{cor:fs_main_unconditional}) also holds at an extra price of $K\alpha r_{\mathrm{sep}}^2$.

\begin{theorem}[Cold-start MSE under a correct LIP]
\label{thm:fs_main}
Under Assumptions~\ref{ass:gen_model}--\ref{ass:margin}, suppose the EM iterate of \eqref{eq:prox_update} enters and remains in the basin $\{\|\theta - \theta_0\| \leq c_1\Delta^*/L\}$ across iterations. Let $B \triangleq \max_k|\log[\pi_k/(1-\pi_k)]|$. Then, there exist absolute constants $c_1, c_2, c_3 > 0$ such that with probability at least $1 - K\alpha$ over the prior draw of $\{\theta_k\}_{k=1}^K$, the EM fixed point $\theta^{(\infty)}$ satisfies
\begin{equation}
\label{eq:fs_main}
    \mathbb{E}\bigl\|\theta^{(\infty)} - \theta_0\bigr\|^2 \;\leq\; \underbrace{\frac{2d\sigma^2}{N_0+N|R|}}_{\text{oracle rate}} \;+\; \underbrace{C_1\,e^{-c_2\beta N\Delta^*}}_{\substack{\text{weight-error}\\\text{residual}}} \;+\; \underbrace{C_2\,e^{-c_3 N(\Delta^*)^2/V^2}}_{\substack{\text{concentration}\\\text{failure}}},
\end{equation}
where $C_1 = O\bigl(K^2(N/N_0)^2\,(\bar D^2 + d\sigma^2/N)\,e^{2B}\bigr)$ and $C_2 = O\bigl(K^{5/2}(N/N_0)^2\,(\bar D^2 + d\sigma^2/N)\bigr)$.
\end{theorem}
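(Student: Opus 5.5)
The plan is to compare the EM fixed point with the oracle $\theta^\star$ of \eqref{eq:fs_oracle} through the decomposition
\[
\theta^{(\infty)}-\theta_0=(\theta^\star-\theta_0)+(\theta^{(\infty)}-\theta^\star),
\]
and then to use $\|a+b\|^2\le 2\|a\|^2+2\|b\|^2$. With $\tau=0$, Proposition~\ref{prop:fs_one_step} gives $\mathbb{E}\|\theta^\star-\theta_0\|^2=d\sigma^2/(N_0+N|R|)$. Doubling this produces exactly the oracle rate in \eqref{eq:fs_main}, so the factor $2$ comes from this split. What remains is to bound $\mathbb{E}\|\theta^{(\infty)}-\theta^\star\|^2$ by the two exponential residuals. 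Writing $W=N_0+N\sum_k w_k$ and $W^\star=N_0+N|R|$, the fixed point of \eqref{eq:prox_update} is
\[
\theta^{(\infty)}-\theta^\star=\frac{N\sum_k (w_k-c_k)(\hat\theta_k-\theta^\star)}{W}.
\]
Since $W\ge N_0$, this yields
\[
\|\theta^{(\infty)}-\theta^\star\|^2\le (N/N_0)^2\,K\sum_k (w_k-c_k)^2\|\hat\theta_k-\theta^\star\|^2 .
\]
This already accounts for the $(N/N_0)^2$ factor and one power of $K$. Next, I take expectations over the Gaussian data and over the prior draw of $\theta_k$, restricted to the well-separated event $\mathcal{S}$. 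Assumption~\ref{ass:separation} and a union bound show that $\mathcal{S}$ holds with probability at least $1-K\alpha$. On $\mathcal{S}$, $\mathbb{E}\|\hat\theta_k-\theta^\star\|^2\lesssim \bar D^2+d\sigma^2/N$, using the finite second moment in Assumption~\ref{ass:gen_model}.

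The core step is to control the weight error $|w_k-c_k|$. Fix $k$, and let $\beta$ be the tempering parameter on the log-LR. The tempered E-step \eqref{eq:E-step} gives
\[
w_k=\sigma\!\Bigl(\beta\sum_{i=1}^N\ell(x_{k,i};\theta^{(\infty)})+\log\tfrac{\pi_k}{1-\pi_k}\Bigr).
\]
Its mean drift is $\beta N\rho_k(\theta^{(\infty)})$. I compare $\rho_k(\theta^{(\infty)})$ with $\rho_k(\theta_0)$. By the $L$-Lipschitz property in Assumption~\ref{ass:regularity} and the basin hypothesis $\|\theta^{(\infty)}-\theta_0\|\le c_1\Delta^*/L$, they differ by at most $c_1\Delta^*$. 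By Assumption~\ref{ass:margin}, $|\rho_k(\theta_0)|\ge\Delta^*$, with sign $+$ for $k\in R$ and $-$ for $k\in\bar R$ (for relevant sources $\rho_k=\mathrm{KL}(p_k\|q)>0$). Taking $c_1\le 1/4$ leaves a drift of magnitude at least $\tfrac34 N\Delta^*$ with the correct sign.

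I then define a good event $G_k$ on which the empirical log-LR sum deviates from its mean by at most $N\Delta^*/4$ uniformly over the basin. The uniformity is needed because $\theta^{(\infty)}$ depends on $\mathcal{D}_k$. It follows from the $V$-sub-Gaussian tail, together with a Lipschitz/covering argument over the basin or a direct Gaussian computation since $\ell$ is quadratic in $\theta$. This gives $\mathbb{P}(G_k^c)\le C e^{-c_3N(\Delta^*)^2/V^2}$. On $G_k$, the sigmoid argument has the correct sign with magnitude at least $\beta N\Delta^*/2-B$. Using $1-\sigma(z)\le e^{-z}$, this gives $|w_k-c_k|\le e^{B}e^{-\beta N\Delta^*/2}$, so $(w_k-c_k)^2\le e^{2B}e^{-c_2\beta N\Delta^*}$. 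Summing over $k$ with the prefactor above gives the $C_1$ term, with $K^2$, $(N/N_0)^2$, $(\bar D^2+d\sigma^2/N)$ and $e^{2B}$ as stated.

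On $G_k^c$, I bound $|w_k-c_k|\le1$ and split the expectation $\mathbb{E}[\|\hat\theta_k-\theta^\star\|^2\mathbf{1}_{G_k^c}]$ with Cauchy--Schwarz. This requires a fourth-moment bound on $\|\hat\theta_k-\theta^\star\|$, which holds for Gaussians given the prior moments (absorbed into the constant). Summing over $k$ gives a $\sqrt{\mathbb{P}(G_k^c)}$ factor, which halves the exponent and is absorbed into $c_3$. This yields the $C_2$ term. The extra $K^{1/2}$ in $C_2$ comes from a union bound over sources inside the square root when the concentration events are handled jointly.

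The main obstacle is the dependence between $\theta^{(\infty)}$ and each $\mathcal{D}_k$, which makes a pointwise concentration bound insufficient. I would first exploit the Gaussian structure: $\sum_i\ell(x_{k,i};\theta)$ equals a $\theta$-independent term plus $N\theta^\top\bar x_k/\sigma^2-N\|\theta\|^2/(2\sigma^2)$. Uniform control over the basin then reduces to concentrating $\bar x_k$ and a scalar term, which avoids a covering argument.
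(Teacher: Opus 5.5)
Your proposal is correct and follows essentially the same route as the paper: the oracle decomposition of Lemma~\ref{lem:fs_decomp} with the factor $2$ and the Cauchy--Schwarz factor $K(N/N_0)^2$, the Lipschitz-drift plus sub-Gaussian weight bound of Theorem~\ref{thm:fs_weight} with $c_1=1/4$ and a $\Delta^*/4$ deviation budget, and the good/bad-event split using Cauchy--Schwarz and fourth moments on the bad event, which halves the exponent. The one place you go beyond the paper is the dependence of $\theta^{(\infty)}$ on $\mathcal{D}_k$. The paper simply applies the fixed-$\theta$ bound of Lemma~\ref{lem:fs_concentration} at $\theta^{(t)}$, whereas your uniform control via the quadratic-in-$\theta$ Gaussian log-LR actually handles that dependence, at the price of an extra tail term $\mathbb{P}(\|\bar x_k-\theta_k\|\gtrsim\sigma^2L/c_1)$; this term is exponentially small in $N$ and can be absorbed into the concentration-failure term once $N$ is large relative to $d$, possibly after shrinking $c_3$.
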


At first glance, one may question the necessity of a correct LIP since the theorem is seemingly unrelated to LIP correctness. The connection lies in the basin-entry hypothesis: the EM iterate must enter $\{\|\theta - \theta_0\| \leq c_1\Delta^*/L\}$, and the LIP determines whether the very first M-step does. Under the Bayesian-tempering schedule of Sec.~\ref{sec:bayesian_tempering}, the E-step weights collapse to $w_k^{(0)} = \pi_k$, and therefore
\begin{equation}
\label{eq:fs_bias_noise}
    \theta^{(1)} - \theta_0 \;=\; \underbrace{\frac{\sum_{k\in \bar R} \pi_k\,(\theta_k - \theta_0)}{N_0/N + \sum_k\pi_k}}_{\text{bias},\;\|\cdot\|=O(1)\text{ in }N} \;+\; \underbrace{\frac{N_0(\hat\theta_0 - \theta_0) + \sum_k N\pi_k(\hat\theta_k - \theta_k)}{N_0 + N\sum_k\pi_k}}_{\text{noise: mean }0,\ \mathbb{E}\|\cdot\|^2 = O(d\sigma^2/N)}.
\end{equation}
When the LIP is correct, $\pi_k$ puts higher emphasis on relevant sources, so the bias is small and noise from the irrelevant sources is also suppressed. As such, under a correct LIP, basin-entry failure decays exponentially in the source size $N$ rather than the target size $N_0$ (per the last two terms of \eqref{eq:fs_main}).

\subsection{Asymptotic consistency under any LIP}
\label{sec:asymptotic_consistency}

Under the unfortunate scenario where the LIP is misspecified, such that the EM iterates fail to enter the basin of attraction of $\theta_0$, finite-sample MSE bounds become intractable. That said, asymptotic consistency still holds in two regimes that are robust to arbitrary LIP.

\begin{theorem}[Consistency as \texorpdfstring{$N_0 \to \infty$}{N0 to infinity}]
\label{thm:asymptotic_consistency}
Fix any iterate $t$, sizes $\{N_k\}$, and prior $\pi$. Under the standard regularity conditions of \citep[Theorem 5.39]{van1998asymptotic} for the target log-likelihood, both the exact M-step iterate \eqref{eq:m_step_update} and the surrogate iterate \eqref{eq:prox_update} satisfy $\theta^{(t+1)} \xrightarrow{p} \theta_0$ as $N_0 \to \infty$.
\end{theorem}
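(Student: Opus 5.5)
The plan is to show that, with $t$, $\{N_k\}$, $\{\mathcal{D}_k\}_{k\ge1}$ and $\pi$ held fixed, both updates are continuous combinations of $\hat\theta_0$ and of source quantities that stay bounded as $N_0\to\infty$. Every source term carries a weight that vanishes relative to the target term, so the update inherits the consistency of the target MLE. The first step is the standard consistency of $\hat\theta_0$ under \citep[Theorem 5.39]{van1998asymptotic}: $\hat\theta_0\xrightarrow{p}\theta_0$. Those regularity conditions also give $H_0/N_0 \xrightarrow{p} \mathcal{I}(\theta_0)\succ 0$, where the Fisher information is nonsingular. Consequently $N_0 H_0^{-1} = O_p(1)$ and $H_0$ is invertible with probability tending to one.

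For the surrogate update \eqref{eq:prox_update}, I would write
\begin{equation*}
\theta^{(t+1)}-\hat\theta_0 = \frac{\sum_k w_k^{(t)} N_k(\hat\theta_k-\hat\theta_0)}{N_0+\sum_k w_k^{(t)}N_k}.
\end{equation*}
The weights satisfy $w_k^{(t)}\in[0,1]$ for any $\theta^{(t)}$, any prior $\pi$ and any tempering $\beta_k^{(t)}$, since each is a sigmoid. Hence the norm of the right-hand side is at most $N_0^{-1}\sum_k N_k\|\hat\theta_k-\hat\theta_0\|$. Here $\hat\theta_k$ is fixed, or at least does not depend on $N_0$, and $\hat\theta_0=O_p(1)$, so this bound is $O_p(1/N_0)\to 0$. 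Slutsky then gives $\theta^{(t+1)}\xrightarrow{p}\theta_0$. The key point is that this bound needs no control of $w_k^{(t)}$ or of the earlier iterates, which is exactly why the result holds under any LIP and any $t$.

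For the exact update \eqref{eq:m_step_update}, I would use the same decomposition:
\begin{equation*}
\theta^{(t+1)}-\hat\theta_0=\Big(I+\sum_k\Lambda_k^{(t)}\Big)^{-1}\sum_k\Lambda_k^{(t)}(\hat\theta_k-\hat\theta_0).
\end{equation*}
Write $\Lambda_k^{(t)} = w_k^{(t)}\,(N_0H_0^{-1})\,N_0^{-1}(I+\tau^2H_k)^{-1}H_k$. The factor $(I+\tau^2H_k)^{-1}H_k$ is fixed and bounded, and for $\tau>0$ its norm is at most $1/\tau^2$. It follows that $\|\Lambda_k^{(t)}\|=O_p(1/N_0)$ uniformly in $w_k^{(t)}\in[0,1]$, so $\sum_k\Lambda_k^{(t)}\xrightarrow{p}0$ and $(I+\sum_k\Lambda_k^{(t)})^{-1}\xrightarrow{p}I$ by continuity of matrix inversion. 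The right-hand side is therefore $o_p(1)\cdot O_p(1)$, and Slutsky again gives the claim.

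The main obstacle is that $I+\sum_k\Lambda_k$ need not be invertible, because $\Lambda_k$ is a product of positive semidefinite matrices and is not itself symmetric. I would handle this by conditioning on the event that $\|\sum_k\Lambda_k\|<1/2$, which has probability tending to one, and on which a Neumann series makes the inverse well defined and bounded by $2$. A secondary subtlety is that $\theta^{(t)}$, and hence $w_k^{(t)}$, depends on $\mathcal{D}_0$. The argument above sidesteps this entirely by using only $0\le w_k^{(t)}\le1$, so no induction over $t$ is needed.
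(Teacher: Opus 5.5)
Your proposal is correct and follows essentially the same route as the paper: you use only $w_k^{(t)}\in[0,1]$ to show every source term vanishes at rate $1/N_0$ (via $H_0^{-1}\xrightarrow{p}0$ for the exact step and $w_k^{(t)}N_k/N_0\le N_k/N_0$ for the surrogate), and then conclude by Slutsky and continuity of inversion at $I$; your centering at $\hat\theta_0$ only makes the $O_p(1/N_0)$ rate explicit. Your ``main obstacle'' is actually moot, since $I+\sum_k\Lambda_k^{(t)} = H_0^{-1}\bigl(H_0+\sum_k w_k^{(t)}(I+\tau^2H_k)^{-1}H_k\bigr)$ is invertible whenever $H_0\succ 0$ and $H_k\succeq 0$, though your Neumann-series argument on a high-probability event works as well.
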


At $\tau = 0$, $p(\mathcal{D}_k \mid c_k = 1, \theta) = \prod_i p(x_i \mid \theta)$ (Remark~\ref{rmk:zero_shift_limit}), so the E-step weight reduces to a per-sample log-LR.

\begin{theorem}[Asymptotic dichotomy of weights]
\label{thm:asymptotic_relevance}
Fix iteration $t$, source $k$, and prior $\pi_k \in (0,1)$. Assume $\mathbb{E}_{x \sim p_k}|\log p(x \mid \theta^{(t)})| < \infty$ and $\mathbb{E}_{x \sim p_k}|\log q(x)| < \infty$, so that $\rho_k(\theta^{(t)})$ in \eqref{eq:fs_rho} is finite. If $\rho_k(\theta^{(t)}) \neq 0$, the relevance weight satisfies
$w_k^{(t)} \xrightarrow{p} \mathbf{1}\left\{\rho_k(\theta^{(t)}) > 0\right\} \text{as } N_k \to \infty$.
\end{theorem}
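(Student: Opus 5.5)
The plan is to show that the argument of the sigmoid in \eqref{eq:E-step} diverges linearly in $N_k$, with a sign fixed by $\rho_k(\theta^{(t)})$, while the prior term stays bounded.

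First, by Remark~\ref{rmk:zero_shift_limit}, at $\tau=0$ the relevant likelihood factorizes as $p(\mathcal{D}_k\mid c_k=1,\theta^{(t)})=\prod_{i=1}^{N_k}p(x_i\mid\theta^{(t)})$. Because the null model draws a single $\theta_k\sim\phi_{\mathrm{null}}$, the null likelihood is in general not a product. Following the paper's convention that $q$ is the per-sample null density, we take $p(\mathcal{D}_k\mid c_k=0)=\prod_i q(x_i)$; this is the same convention under which \eqref{eq:fs_rho} is stated. Then
\begin{equation*}
    w_k^{(t)}=\sigma\Bigl(N_k\,\bar\ell_{N_k}+\log\tfrac{\pi_k}{1-\pi_k}\Bigr),\qquad \bar\ell_{N_k}\triangleq\frac{1}{N_k}\sum_{i=1}^{N_k}\ell(x_i;\theta^{(t)}).
\end{equation*}

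Second, we treat $\theta^{(t)}$ as fixed, consistent with ``fix iteration $t$''. The integrability assumptions give $\mathbb{E}_{p_k}|\ell(x;\theta^{(t)})|\le \mathbb{E}|\log p(x\mid\theta^{(t)})|+\mathbb{E}|\log q(x)|<\infty$. The weak law of large numbers for the i.i.d.\ terms $\ell(x_i;\theta^{(t)})$ then yields $\bar\ell_{N_k}\xrightarrow{p}\rho_k(\theta^{(t)})$.

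Third, set $b=\log[\pi_k/(1-\pi_k)]$, which is finite since $\pi_k\in(0,1)$, and suppose $\rho_k>0$. On the event $\{\bar\ell_{N_k}>\rho_k/2\}$, whose probability tends to $1$, the sigmoid argument exceeds $N_k\rho_k/2+b$, which tends to $\infty$. Given $\varepsilon>0$, choose $M$ with $\sigma(M)>1-\varepsilon$; for $N_k$ large, $\mathbb{P}(w_k^{(t)}>1-\varepsilon)\to1$. The case $\rho_k<0$ is symmetric, using the event $\{\bar\ell_{N_k}<\rho_k/2\}$, and gives $w_k^{(t)}\xrightarrow{p}0$. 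Together these give $w_k^{(t)}\xrightarrow{p}\mathbf{1}\{\rho_k(\theta^{(t)})>0\}$.

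The main obstacle is the dependence of $\theta^{(t)}$ on $\mathcal{D}_k$ through earlier iterations, and the i.i.d.\ structure of the null likelihood. I would handle the first by interpreting $\theta^{(t)}$ as a fixed point at which the E-step is evaluated, as the statement intends. If uniformity is needed, a uniform LLN over a compact neighborhood of $\theta^{(t)}$ would suffice, and the $L$-Lipschitz condition of Assumption~\ref{ass:regularity} would supply it.

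For the second, if the null is the genuine mixture $\int\prod_i p(x_i\mid\theta)\,\phi_{\mathrm{null}}(d\theta)$, I would instead invoke a Laplace/BIC argument: $\tfrac1{N_k}\log p(\mathcal{D}_k\mid c_k=0)\to\mathbb{E}_{p_k}\log p(x\mid\theta_k)$. In that case $\rho_k$ is replaced by $-\mathrm{KL}(p_k\|p(\cdot\mid\theta^{(t)}))$, with $O(\log N_k)$ corrections, and the same three steps go through.
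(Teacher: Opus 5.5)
Your proposal is correct and follows essentially the paper's route. Both arguments reduce $w_k^{(t)}$ to a sigmoid of $\sum_i \ell(x_i;\theta^{(t)}) + \log[\pi_k/(1-\pi_k)]$ using Remark~\ref{rmk:zero_shift_limit} and the product-of-$q$ null convention, then apply a law of large numbers to send that argument to $\pm\infty$ with the sign of $\rho_k(\theta^{(t)})$; you use the weak law and argue convergence in probability directly, whereas the paper uses the strong law and deduces convergence in probability from almost-sure convergence.

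One small caveat on your optional fallback: the Lipschitz clause of Assumption~\ref{ass:regularity} concerns only the population map $\theta\mapsto\rho_k(\theta)$, so by itself it would not give a uniform LLN for $\bar\ell_{N_k}$. You would need something like a Lipschitz-in-$\theta$ bound on $\ell(x;\theta)$ with an integrable constant.
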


Theorem~\ref{thm:asymptotic_relevance} shows that the relevance weight $w_k^{(t)}$ asymptotically commits to $1$ if the relevant model better explains the source data than the null and commits to $0$ if the null is better. This also echoes our discussion in the null model selection: when source $k$ is relevant, the second KL divergence eventually vanishes, so any null density $q \neq p_k$ yields $\rho_k(\theta^{(t)}) > 0$ and $w_k^{(t)} \to 1$.

\section{Empirical Results}
\label{sec:experiments}
We evaluate our method on three tasks: descriptive, predictive, and prescriptive. Details about benchmark methods and hyperparameter settings are provided in Appendix \ref{sec:appendix_benchmark} and \ref{sec:appendix_hyperparameter}.

\subsection{1-d and 2-d Gaussian Estimation}

Our first experiment is to estimate Gaussian distributions.  Data points are drawn from 1-d (Fig. \ref{fig:1dgaus}) and 2-d (Fig. \ref{fig:2dgaus}) Gaussian distributions.  The target samples are labeled with black crosses. For both experiments, we pick three source domains (plotted as histograms labeled in purple, red, and green), while only the green one matches the target.  To simulate the cold-start regime, we take $N_k=200$ samples from each source and $4$ from each target. As a proof of concept, we provide unambiguous descriptions to the LLM, such that any reasonable LLM can label the relevant source. In the 1-d case, we provide the description: ``\textit{The target domain has a larger mean}''. In the 2-d case, we provide the description: ``\textit{The target domain has a larger x mean}''.

The results show the significant role of using the LIP. LIP-aided EM (purple curve) effectively identifies the relevant domain with the help of the language prior compared to EM with a uniform prior of $p_0$ (denoted as Uniform EM, orange curve).
As a result, the fitted distribution almost recovers the ground truth (black dashed curve). In comparison, when relying solely on the scarce target data (Target Only, gray curve), the estimator suffers from high variance, yielding a high deviation. Meanwhile, indiscriminately pooling all the data (Pooled, brown curve) suffers from negative transfer and is biased towards irrelevant sources. 
\begin{figure}[hbt]
    \centering
    \begin{subfigure}[t]{0.49\linewidth}
        \centering
        \includegraphics[width=\linewidth]{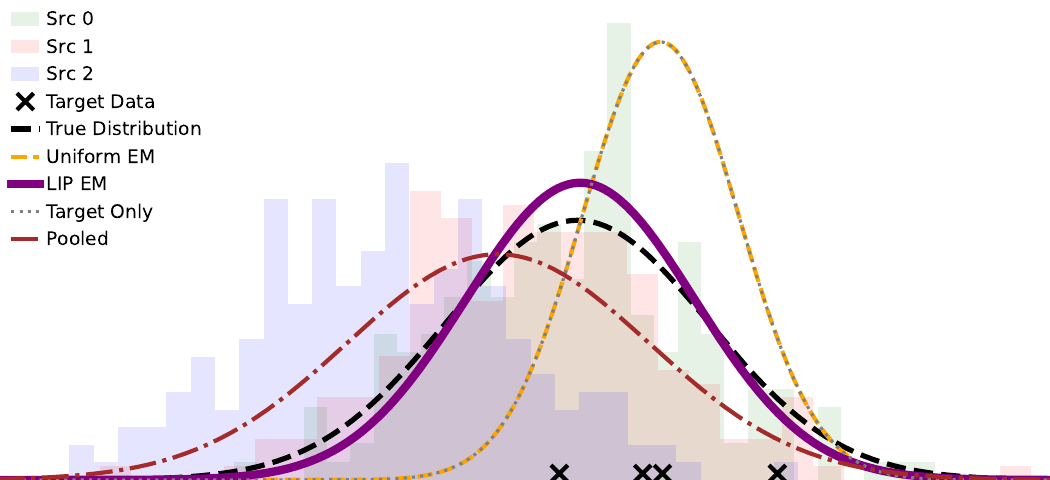}
        \caption{1-d Gaussian mean estimation.}
        \label{fig:1dgaus}
    \end{subfigure}
    \hfill
    \begin{subfigure}[t]{0.49\linewidth}
        \centering
        \includegraphics[width=\linewidth]{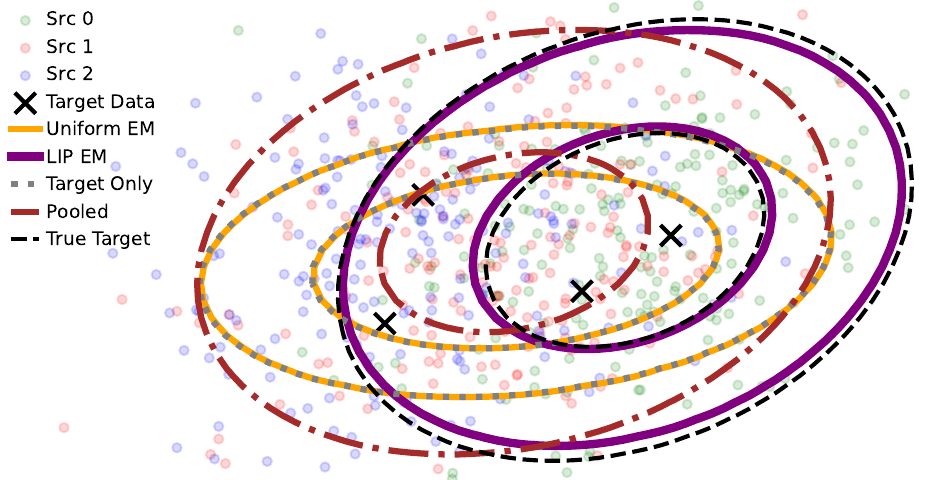}
        \caption{2-d Gaussian estimation.}
        \label{fig:2dgaus}
    \end{subfigure}
    \caption{Gaussian estimation.  LIP-aided EM (in purple) best fits the target (in black dashed curve).}
    \label{fig:gaus}
\end{figure}
\subsection{Case Study on C-MAPSS Dataset}

To evaluate our method on real-world physical systems, we apply the LIP-aided EM framework to the NASA Commercial Modular Aero-Propulsion System Simulation (C-MAPSS) dataset FD001 \citep{saxena2008damage}. The dataset consists of $100$ run-to-failure time-series trajectories simulating the life cycles of various turbofan engines under heterogeneous operational conditions. The source trajectories vary in length as they terminate when their Remaining Useful Life (RUL) reaches zero. In this case study, the task is to predict the physical core speed (sensor 9) of the High-Pressure Compressor (HPC) over time (measured in flight cycles). Each machine is treated as one domain. Among the $100$ machines, we select one engine from the pile that exhibits a fast decay trend as the target domain, with its first $N_0$ cycles of physical core speed used to predict its physical core speed in the future. The rest of the $99$ machines are treated as sources, where we have their full trajectory of physical core speed until RUL reaches zero. We provide the following language prompt that hints at a fast degradation in HPC efficiency: \textit{This aircraft is operating in a desert environment. Routine visual inspections confirm severe aerodynamic degradation of the high-pressure compressor, likely due to continuously ingesting fine, dry abrasives at high velocities.} The original paper \citep{saxena2008damage} that produced this dataset is provided alongside the text description to the LLM to help understand the underlying physics.

The target domains in our $10$ replications are labeled in red in Fig. \ref{fig:selected_sources}. For each selected engine, we test the performance of benchmark algorithms for different percentages of RULs (measured in Root Mean Square Error, RMSE). For example, $70\%$ RUL means the target has only seen the first $30\%$ of the data, which is a typical cold-start scenario. A Generalized Linear Model (GLM), namely a natural cubic spline, is used as the backbone statistical model for all the benchmark methods \citep{yue2024federated}. GLM naturally admits Gaussian likelihood and does not need approximation, so we use the \textit{no approximation version} (Algorithm \ref{alg:lip_em} in Appendix \ref{sec:appendix_algorithms}). 

The experimental results are summarized in Table \ref{tab:cmapss}. Here, LIP-G means running EM with LIP generated by Gemini 3 Flash API, and LIP-C means running EM with LIP generated by Claude Opus 4.7 local agent (see Appendix \ref{sec:appendix_agent}). Again, LIP significantly reduced the prediction error during the cold-start phase (RUL$=90\%,70\%,50\%$). We report the mean of the $10$ replications, along with their standard error in parentheses. As an example, we plot the predictions of engine 80 at its $70\%$ RUL. At the early stage of a machine RUL, the machine is still operating under relatively healthy conditions, and the degradation trend is statistically hidden by the noise. As such, with only the target data, one might confuse it with a healthy machine and predict relatively stable HPC (see gray, brown, and yellow lines). This indicates the importance of leveraging language as a modality in DA. 

\begin{figure}[hbt]
    \centering
    \begin{minipage}[t]{0.49\linewidth}
        \centering
        \includegraphics[width=\linewidth]{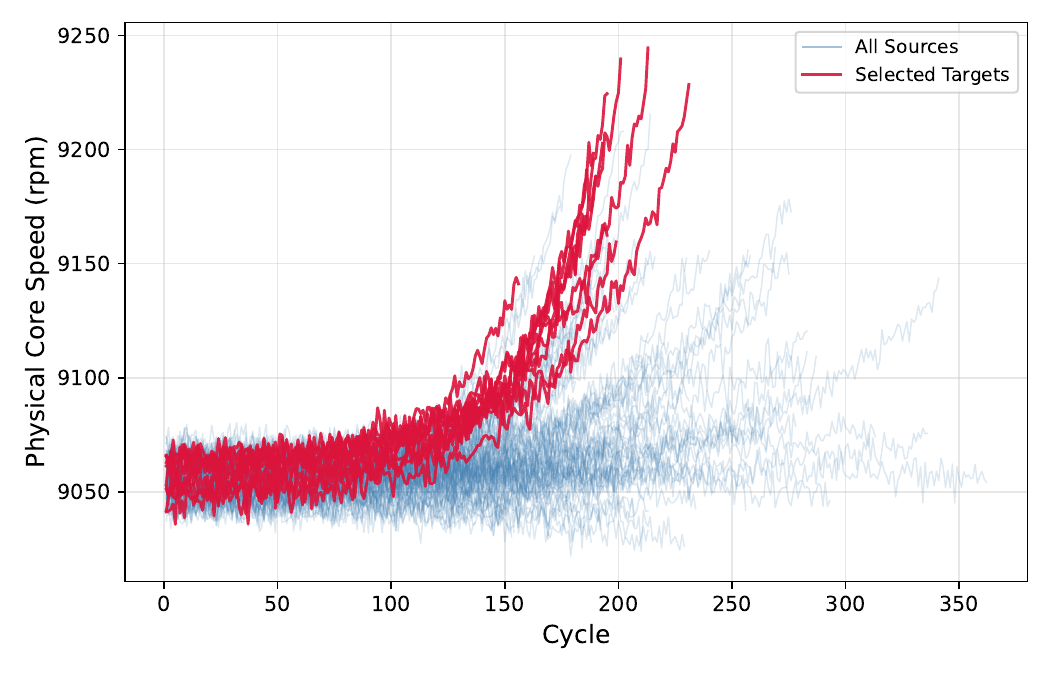}
        \captionof{figure}{Selected Sources}
        \label{fig:selected_sources}
    \end{minipage}
    \hfill
    \begin{minipage}[t]{0.49\linewidth}
        \centering
        \includegraphics[width=\linewidth]{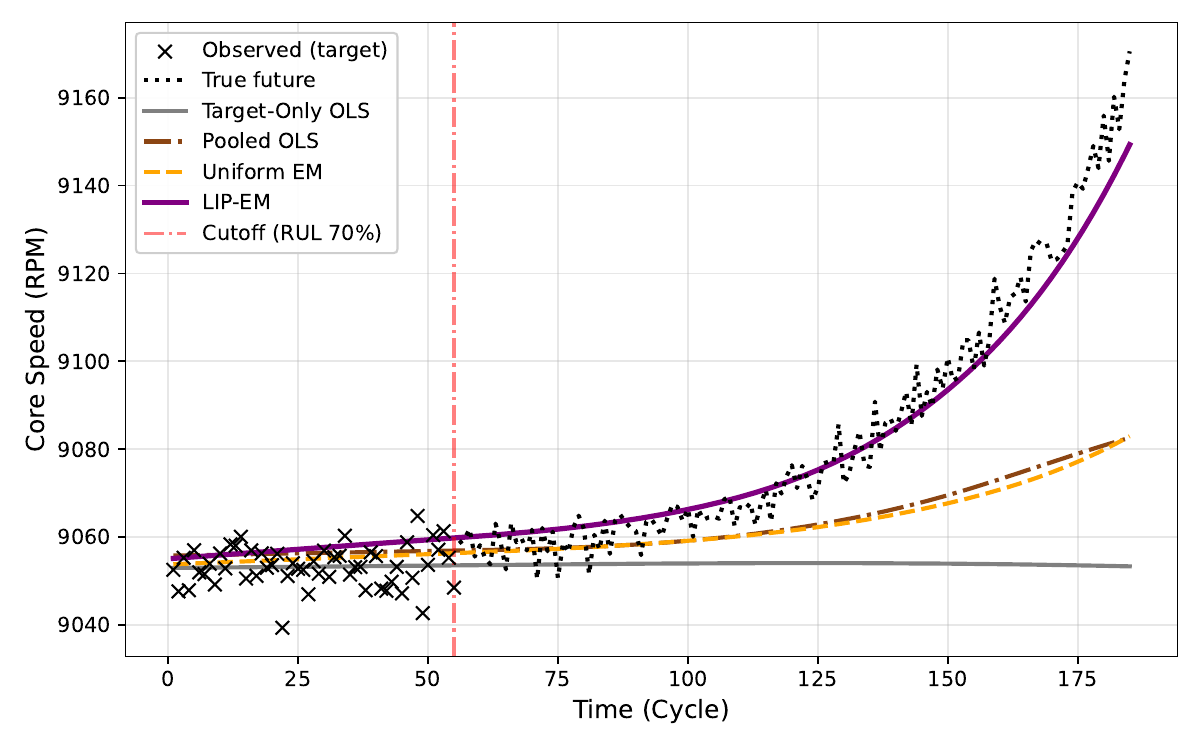}
        \captionof{figure}{Engine 80 at 70\% RUL}
        \label{fig:engine80}
    \end{minipage}
\end{figure}

\begin{table}[htb]
  \begin{minipage}{0.49\textwidth}
    \centering
    \scriptsize
    \setlength{\tabcolsep}{3pt}
    \captionof{table}{C-MAPSS core speed RMSE}
    \begin{tabular}{lccccc}
    \toprule
    RUL & LIP-C & LIP-G & Uniform EM & Pooled & Target Only\\
    \midrule
    90\% & \textbf{14.3\,(2.0)} & \textbf{15.0\,(3.3)} & 21.4\,(3.0) & 33.9\,(2.6) & 43.7\,(2.7) \\
    70\% & \textbf{15.9\,(2.0)} & \textbf{15.2\,(3.6)} & 35.2\,(3.1) & 38.1\,(3.0) & 58.7\,(10.8) \\
    50\% & \textbf{17.7\,(3.5)} & \textbf{17.9\,(6.1)} & 24.9\,(4.4) & 44.7\,(3.5) & 36.6\,(5.1) \\
    30\% & 20.1\,(3.2) & \textbf{16.5\,(3.3)} & \textbf{15.2\,(2.2)} & 56.5\,(4.4) & 27.0\,(5.8) \\
    10\% & 11.5\,(1.6) &  9.0\,(1.3) &  \textbf{7.3\,(0.8)} & 82.9\,(6.5) & 10.1\,(2.7) \\
    \bottomrule
    \end{tabular}
    \label{tab:cmapss}
  \end{minipage}
  \hfill
  \begin{minipage}{0.49\textwidth}
    \centering
    \scriptsize
    \setlength{\tabcolsep}{3pt}
    \caption{MuJoCo hopper reward}
    \begin{tabular}{rccccc}
    \toprule
    $N_0$ & LIP-C & LIP-G & Uniform EM & Pooled & Target Only\\
    \midrule
    128  & \textbf{2670\,(57)} & 1627\,(69) & 1283\,(34) & 1610\,(48) &  19\,(0)  \\
    256  & \textbf{2539\,(61)} & 1647\,(70) & 1578\,(51) & 1298\,(32) &  24\,(4)  \\
    512  & \textbf{2586\,(55)} & 1760\,(76) & \textbf{2491\,(57)} & 1364\,(38) & 223\,(3)  \\
    1024 & \textbf{2599\,(57)} & \textbf{2636\,(56)} & \textbf{2604\,(55)} & 1336\,(33) & 179\,(9)  \\
    2048 & \textbf{2607\,(56)} & \textbf{2607\,(56)} & \textbf{2666\,(56)} & 1496\,(42) & 164\,(13) \\
    \bottomrule
    \end{tabular}
    \label{tab:hopper}
  \end{minipage}
\end{table}

\subsection{Case Study on MuJoCo Hopper Dynamics}
\label{sec:hopper}

To validate the our framework in deep neural networks, we apply the LIP-aided EM algorithm to a deep reinforcement learning task in the MuJoCo hopper environment \citep{todorov2012mujoco}. MuJoCo allows users to change the simulation environment of a hopper (a three joint jumping robot). The agent in this task is deployed to a new ``planet'' (the target environment). It only has a small dataset $\mathcal{D}_0$ collected in the deployment environment and a much larger dataset $\{\mathcal{D}_k\}_{k=1}^{10}$ collected under $10$ different gravities $g\in \{1,2,\dots, 10\} ~\mathrm{m/s^2}$. For each source dataset, we collect a \emph{replay} dataset of one million transitions by running soft actor-critic (SAC) \citep{haarnoja2018soft} from scratch and saving the entire training replay buffer. It is worth noting that for the target dataset $\mathcal{D}_0$, not only is the dataset small, but the data quality is also poor in the sense that most of the early explorations of the SAC are falling off, making domain adaptation more critical. The target domain is selected to have gravity similar to that of Venus ($g = 8.87 \mathrm{m/s^2}$). We give the target description: \textit{The hopper is deployed to a planet with Venus-like gravity}. We then follow Sec.~\ref{sec:lip} by asking the LLM to select one source from the sampled subgroups with which physics is most consistent with the description.

We chose IQL \citep{kostrikov2022offline} as our backbone algorithm for this experiment due to the well-known fact that model-free RL performs empirically better than model-based RL \citep{yu2020mopo} when the offline dataset is abundant. However, model-free RL does not natively provide the likelihood function $\mathcal{L}$ needed for our framework, so we first follow the standard practice of Model-based RL \citep{yu2020mopo} and model the dynamics as a Gaussian distribution whose mean and variance are parameterized by a Multi-layer Perceptron, optimizing it using MLE. With the dynamics model, we run our EM and extract the weights $w_k$ at convergence. Then, in the second step, we run IQL on the $w_k$-weighted source dataset and the unweighted target dataset to train RL policies. We used Algorithm \ref{alg:lip_em_nn} in Appendix \ref{sec:appendix_algorithms} for the EM iterations. Thanks to the efficient approximation proposed in Appendix \ref{sec:appendix_practical_em}, the EM itself converges within a few minutes on a single GPU (see Appendix \ref{sec:appendix_hardware}).

Table~\ref{tab:hopper} reports the mean episodic rewards over $200$ evaluation episodes, along with the standard error. Again, LIP-C achieves a substantially better return in the cold start phase ($N_0\in\{128,256\}$). Interestingly, LIP-G uses a false LIP produced due to limited reasoning capabilities (see failure mode in Appendix \ref{sec:appendix_failure_modes}). This false LIP prefers $g = 7~\mathrm{m/s^2}$ over the closer matches $g \in \{8, 9, 10\}~\mathrm{m/s^2}$ to Venus's gravity. This false prior helps outperform the uniform prior in the cold start, but at the price of a slower recovery (see $N_0=512$). This again emphasizes the quality of the prior, but also showcases the auto-correcting mechanism of EM, echoing our theoretical analysis.

\section{Discussion}
\label{sec:discussion}
We have shown that textual descriptions of a target domain carry information that statistical methods cannot recover from scarce target data. A pretrained LLM can translate this information into a Bayesian prior (LIP) over source relevance. Theoretically, a correct LIP yields a finite-sample MSE that matches the oracle precision-weighted rate, while any LIP, even misspecified, still maintains asymptotic consistency via EM. These theoretical claims are verified by our experiments. Nonetheless, the quality of the LIP is crucial for cold-start (see Appendix~\ref{sec:appendix_compare_LLM}). Future work includes exploring more sophisticated elicitation methods, relaxing the basin-entry hypothesis in Theorem~\ref{thm:fs_main}, and extending the framework to non-parametric models and to models without explicit likelihood functions.

\bibliographystyle{plainnat}
\bibliography{references}

@article{zhang2022survey,
  title={A survey on negative transfer},
  author={Zhang, Wen and Deng, Lingfei and Zhang, Lei and Wu, Dongrui},
  journal={IEEE/CAA Journal of Automatica Sinica},
  volume={10},
  number={2},
  pages={305--329},
  year={2022},
  publisher={IEEE}
}

@book{van1998asymptotic, 
    title={Asymptotic Statistics}, 
    author={Van der Vaart, Aad W}, 
    volume={3}, 
    year={1998}, 
    publisher={Cambridge University Press} 
}

@article{sun2011two,
  title={A two-stage weighting framework for multi-source domain adaptation},
  author={Sun, Qian and Chattopadhyay, Rita and Panchanathan, Sethuraman and Ye, Jieping},
  journal={Advances in neural information processing systems},
  volume={24},
  year={2011}
}

@article{zhao2018adversarial,
  title={Adversarial multiple source domain adaptation},
  author={Zhao, Han and Zhang, Shanghang and Wu, Guanhang and Moura, Jos{\'e} MF and Costeira, Joao P and Gordon, Geoffrey J},
  journal={Advances in neural information processing systems},
  volume={31},
  year={2018}
}

@inproceedings{yao2024improving,
  title={Improving Domain Generalization with Domain Relations},
  author={Yao, Huaxiu and Yang, Xinyu and Pan, Xinyi and Liu, Shengchao and Koh, Pang Wei and Finn, Chelsea},
  booktitle={The Twelfth International Conference on Learning Representations (ICLR)},
  year={2024}
}

@inproceedings{petroni2019language,
  title={Language models as knowledge bases?},
  author={Petroni, Fabio and Rockt{\"a}schel, Tim and Riedel, Sebastian and Lewis, Patrick and Bakhtin, Anton and Wu, Yuxiang and Miller, Alexander},
  booktitle={Proceedings of the 2019 conference on empirical methods in natural language processing and the 9th international joint conference on natural language processing (EMNLP-IJCNLP)},
  pages={2463--2473},
  year={2019}
}

@article{brown2020language,
  title={Language models are few-shot learners},
  author={Brown, Tom and Mann, Benjamin and Ryder, Nick and Subbiah, Melanie and Kaplan, Jared D and Dhariwal, Prafulla and Neelakantan, Arvind and Shyam, Pranav and Sastry, Girish and Askell, Amanda and others},
  journal={Advances in neural information processing systems},
  volume={33},
  pages={1877--1901},
  year={2020}
}

@incollection{mcfadden1974conditional,
  title={Conditional logit analysis of qualitative choice behavior},
  author={McFadden, Daniel},
  booktitle={Frontiers in Econometrics},
  editor={Zarembka, P.},
  pages={105--142},
  year={1974},
  publisher={Academic Press},
  address={New York}
}

@inproceedings{todorov2012mujoco,
  title={Mujoco: A physics engine for model-based control},
  author={Todorov, Emanuel and Erez, Tom and Tassa, Yuval},
  booktitle={2012 IEEE/RSJ international conference on intelligent robots and systems},
  pages={5026--5033},
  year={2012},
  organization={IEEE}
}

@article{yu2020mopo,
  title={Mopo: Model-based offline policy optimization},
  author={Yu, Tianhe and Thomas, Garrett and Yu, Lantao and Ermon, Stefano and Zou, James Y and Levine, Sergey and Finn, Chelsea and Ma, Tengyu},
  journal={Advances in neural information processing systems},
  volume={33},
  pages={14129--14142},
  year={2020}
}

@inproceedings{haarnoja2018soft,
  title={Soft actor-critic: Off-policy maximum entropy deep reinforcement learning with a stochastic actor},
  author={Haarnoja, Tuomas and Zhou, Aurick and Abbeel, Pieter and Levine, Sergey},
  booktitle={International conference on machine learning},
  pages={1861--1870},
  year={2018},
  organization={Pmlr}
}

@book{luce1959individual,
  title={Individual choice behavior},
  author={Luce, R Duncan and others},
  volume={4},
  year={1959},
  publisher={Wiley New York}
}

@article{pan2009survey,
  title={A survey on transfer learning},
  author={Pan, Sinno Jialin and Yang, Qiang},
  journal={IEEE Transactions on knowledge and data engineering},
  volume={22},
  number={10},
  pages={1345--1359},
  year={2009},
  publisher={IEEE}
}

@article{garthwaite2005statistical,
  title={Statistical methods for eliciting probability distributions},
  author={Garthwaite, Paul H and Kadane, Joseph B and O'Hagan, Anthony},
  journal={Journal of the American Statistical Association},
  volume={100},
  number={470},
  pages={680--701},
  year={2005},
  publisher={Taylor \& Francis}
}

@book{ohagan2006uncertain,
  title={Uncertain judgements: eliciting experts' probabilities},
  author={O'Hagan, Anthony and Buck, Caitlin E and Daneshkhah, Alireza and Eiser, J Richard and Garthwaite, Paul H and Jenkinson, David J and Oakley, Jeremy E and Rakow, Tim},
  year={2006},
  publisher={John Wiley \& Sons}
}

@article{wang2019characterizing,
  title={Characterizing and avoiding negative transfer},
  author={Wang, Zirui and Dai, Zihang and P{\'o}czos, Barnab{\'a}s and Carbonell, Jaime},
  journal={Proceedings of the IEEE/CVF conference on computer vision and pattern recognition},
  pages={11293--11302},
  year={2019}
}

@article{kojima2022large,
  title={Large language models are zero-shot reasoners},
  author={Kojima, Takeshi and Gu, Shixiang Shane and Reid, Machel and Matsuo, Yutaka and Iwasawa, Yusuke},
  journal={Advances in neural information processing systems},
  volume={35},
  pages={22199--22213},
  year={2022}
}

@article{yue2024federated,
  title={Federated data analytics: A study on linear models},
  author={Yue, Xubo and Kontar, Raed Al and Gomez, Ana Maria Estrada},
  journal={IISE Transactions},
  volume={56},
  number={1},
  pages={16--28},
  year={2024},
  publisher={Taylor \& Francis}
}

@inproceedings{saxena2008damage,
  title={Damage propagation modeling for aircraft engine run-to-failure simulation},
  author={Saxena, Abhinav and Goebel, Kai and Simon, Don and Eklund, Neil},
  booktitle={2008 international conference on prognostics and health management},
  pages={1--9},
  year={2008},
  organization={IEEE}
}

@article{zhu2026domain,
  title={Domain Generalization Under Posterior Drift},
  author={Zhu, Yilun and Deng, Naihao and Shi, Naichen and Gangrade, Aditya and Scott, Clayton},
  journal={arXiv preprint arXiv:2510.04441},
  year={2026}
}

@inproceedings{wen2020domain,
  title={Domain aggregation networks for multi-source domain adaptation},
  author={Wen, Junfeng and Greiner, Russell and Schuurmans, Dale},
  booktitle={International conference on machine learning},
  pages={10214--10224},
  year={2020},
  organization={PMLR}
}

@inproceedings{peng2019moment,
  title={Moment matching for multi-source domain adaptation},
  author={Peng, Xingchao and Bai, Qinxun and Xia, Xide and Huang, Zijun and Saenko, Kate and Wang, Bo},
  booktitle={Proceedings of the IEEE/CVF international conference on computer vision},
  pages={1406--1415},
  year={2019}
}

@inproceedings{zhao2024more,
  title={More is better: deep domain adaptation with multiple sources},
  author={Zhao, Sicheng and Chen, Hui and Huang, Hu and Xu, Pengfei and Ding, Guiguang},
  booktitle={Proceedings of the Thirty-Third International Joint Conference on Artificial Intelligence},
  pages={8354--8362},
  year={2024}
}

@article{wang2025landa,
  title={Landa: Language-guided multi-source domain adaptation},
  author={Wang, Zhenbin and Zhang, Lei and Wang, Lituan and Zhu, Minjuan},
  journal={IEEE Transactions on Artificial Intelligence},
  year={2025},
  publisher={IEEE}
}

@inproceedings{mata2024copt,
  title={Copt: Unsupervised domain adaptive segmentation using domain-agnostic text embeddings},
  author={Mata, Cristina and Ranasinghe, Kanchana and Ryoo, Michael S},
  booktitle={European conference on computer vision},
  pages={424--440},
  year={2024},
  organization={Springer}
}

@inproceedings{
capstick2025autoelicit,
title={AutoElicit: Using Large Language Models for Expert Prior Elicitation in Predictive Modelling},
author={Alexander Capstick and Rahul Krishnan and Payam Barnaghi},
booktitle={Forty-second International Conference on Machine Learning},
year={2025},
url={https://openreview.net/forum?id=GekXB58ZS7}
}

@article{balakrishnan2017statistical,
  title={Statistical guarantees for the {EM} algorithm: From population to sample-based analysis},
  author={Balakrishnan, Sivaraman and Wainwright, Martin J. and Yu, Bin},
  journal={The Annals of Statistics},
  volume={45},
  number={1},
  pages={77--120},
  year={2017},
  publisher={Institute of Mathematical Statistics}
}

@article{ueda1998deterministic,
  title={Deterministic annealing {EM} algorithm},
  author={Ueda, Naonori and Nakano, Ryohei},
  journal={Neural Networks},
  volume={11},
  number={2},
  pages={271--282},
  year={1998},
  publisher={Elsevier}
}

@inproceedings{hoffman2012discovering,
  title={Discovering latent domains for multisource domain adaptation},
  author={Hoffman, Judy and Kulis, Brian and Darrell, Trevor and Saenko, Kate},
  booktitle={European Conference on Computer Vision (ECCV)},
  pages={702--715},
  year={2012},
  organization={Springer}
}

@article{xiong2014latent, 
  title={Latent Domains Modeling for Visual Domain Adaptation}, 
  volume={28},  
  number={1}, 
  journal={Proceedings of the AAAI Conference on Artificial Intelligence}, 
  author={Xiong, Caiming and McCloskey, Scott and Hsieh, Shao-Hang and Corso, Jason}, 
  year={2014}
}

@inproceedings{mansour2008domain,
  title={Domain adaptation with multiple sources},
  author={Mansour, Yishay and Mohri, Mehryar and Rostamizadeh, Afshin},
  booktitle={Advances in Neural Information Processing Systems (NeurIPS)},
  volume={21},
  year={2008}
}

@article{gu2024probabilistic,
  title={Probabilistic medical predictions of large language models},
  author={Gu, Bowen and Desai, Rishi J and Lin, Kueiyu Joshua and Yang, Jie},
  journal={npj Digital Medicine},
  volume={7},
  number={1},
  pages={367},
  year={2024},
  publisher={Nature Publishing Group UK London}
}

@inproceedings{tian2023just,
  title={Just ask for calibration: Strategies for eliciting calibrated confidence scores from language models fine-tuned with human feedback},
  author={Tian, Katherine and Mitchell, Eric and Zhou, Allan and Sharma, Archit and Rafailov, Rafael and Yao, Huaxiu and Finn, Chelsea and Manning, Christopher D},
  booktitle={Proceedings of the 2023 Conference on Empirical Methods in Natural Language Processing},
  pages={5433--5442},
  year={2023}
}

@article{sekulovski2026llm,
author = {Sekulovski, Nikola and Waaijers, Meike and Arena, Giuseppe},
title = {LLM-based prior elicitation for Bayesian graphical modeling},
journal = {British Journal of Mathematical and Statistical Psychology},
year = {2026}
}

@article{kontar2020minimizing,
  author={Kontar, Raed and Raskutti, Garvesh and Zhou, Shiyu},
  journal={IEEE Transactions on Pattern Analysis and Machine Intelligence}, 
  title={Minimizing Negative Transfer of Knowledge in Multivariate Gaussian Processes: A Scalable and Regularized Approach}, 
  year={2021},
  volume={43},
  number={10},
  pages={3508-3522},
}

@inproceedings{
  kostrikov2022offline,
  title={Offline Reinforcement Learning with Implicit Q-Learning},
  author={Ilya Kostrikov and Ashvin Nair and Sergey Levine},
  booktitle={International Conference on Learning Representations},
  year={2022},
}

%%%%%%%%%%%%%%%%%%%%%%%%%%%%%%%%%%%%%%%%%%%%%%%%%%%%%%%%%%%%

\appendix

\section{Related Work}
\label{sec:appendix_related_work}

Our work sits at an unexplored intersection of three lines, namely multi-source domain adaptation, LLM-based knowledge elicitation, and metadata-aware domain generalization. We are the first to model source relevance as a \emph{binary latent variable} in a Bayesian hierarchical EM, with the prior over this indicator \emph{elicited from a pretrained LLM via subgroup-restricted preference queries}, and using \emph{target-domain context only} as input. The four paragraphs below organize prior work along these three axes of differentiation.

\paragraph{Multi-Source Domain Adaptation.}
Our problem setup is closest to Multi-Source Domain Adaptation (MDA), which leverages labeled data from $K$ distinct source domains to learn a model for a target domain. A fundamental challenge in MDA is \emph{negative transfer}, where the inclusion of irrelevant or adversarial sources degrades target performance \citep{pan2009survey, wang2019characterizing, kontar2020minimizing,zhang2022survey, zhao2024more}. Two prior lines distinguish sources by exploiting target features. The first reweights sources by feature proximity \citep{sun2011two}, moment alignment \citep{peng2019moment}, adversarial training \citep{zhao2018adversarial}, generalization bounds \citep{wen2020domain}, or theoretically optimal convex combinations \citep{mansour2008domain}. The second uses EM-style alternation to \emph{discover} unknown domain structure within source data \citep{hoffman2012discovering, xiong2014latent}. Both lines are data-driven and become unreliable when target data is scarce, since the discrepancies and latent clusters they rely on cannot be reliably estimated from a small target sample. We instead model source relevance as a \emph{binary} latent indicator and place a non-data prior on it, which lets source selection proceed in the cold-start regime where existing methods are bottle-necked by the target sample size.

\paragraph{LLMs as Expert Reasoners.}
Historically, translating qualitative expert intuition into quantitative probability distributions required extensive manual labor \citep{garthwaite2005statistical, ohagan2006uncertain}. The premise of leveraging natural language for statistical inference is rooted in treating LLMs as implicit knowledge bases and expert reasoners \citep{petroni2019language}. Because LLMs are pretrained on a vast corpus of digital textual knowledge, they exhibit strong zero-shot reasoning capabilities across specialized domains \citep{brown2020language, kojima2022large}. The closest precursor is AutoElicit \citep{capstick2025autoelicit}, which queries an LLM directly for the numerical values of a prior over linear-model parameters. Our elicitation primitive is fundamentally different. We ask the LLM to answer a multiple choice question over subgroups of sources rather than directly producing numerical probabilities. We then fit a conditional-logit-with-null choice model to the resulting preference data to recover the prior. The multiple-choice route exploits the well-documented fact that LLMs are more reliable at discrete classification than at numerical estimation \citep{gu2024probabilistic, tian2023just}, and produces a calibrated prior even when the LLM cannot articulate explicit numbers, as similarly observed in concurrent work using LLM-elicited binary judgments for Bayesian graphical modeling \citep{sekulovski2026llm}.

\paragraph{Domain Generalization with Metadata.}
Closely related to our motivation of using contextual information is Domain Generalization (DG) with metadata. Unlike standard DG, which seeks a purely data-driven domain-invariant representation, metadata-driven DG leverages supplementary domain attributes to capture inter-domain relationships. \citet{zhu2026domain} demonstrate the necessity of metadata when the physics of the domain changes, showing that pooling-ERM is provably suboptimal under posterior shift, and incorporate domain metadata as an additional input during training and inference. Other methods such as $\mathrm{D^3G}$ \citep{yao2024improving} establish domain similarity matrices from structured metadata to reweigh domain-specific models. These works share a structural assumption that we relax: metadata must be consistently available across \emph{all} training domains. In engineering practice, such records exist only for the present target system, while historical source datasets predate the metadata convention. Our framework operates on unstructured natural language descriptions available \emph{only} for the target and delegates the relevance reasoning to the LLM rather than to a similarity computation over fixed metadata fields.

\paragraph{Language-Guided Domain Adaptation.}
The emergence of Vision-Language Models (VLMs) like CLIP has enabled adaptation methods to use natural language as a bridge between domains. Recent works such as LangDA \citep{wang2025landa} and CoPT \citep{mata2024copt} demonstrate the efficacy of using text descriptions to guide visual alignment by treating text as a feature embedding aligned with image features in a shared space. Two structural commitments distinguish these methods from ours. First, like domain generalization with metadata, they also require text descriptions for source domains as well as for the target, which is impractical when the source datasets predate the natural-language pipeline. Second, this field is built upon massive VLM architectures, making the methods task-specific and restricted to modalities that the VLM was pretrained on. We instead use the LLM as a \emph{reasoning agent} producing discrete preferences, require text only for the target, and remain model-agnostic, applicable to general parameter estimation tasks beyond vision-language alignment.

\section{LIP Construction via Subgroup Preference Elicitation}
\label{sec:appendix_lip}
\subsection{Subgroup Preference Elicitation}
Directly prompting an LLM for numerical probabilities $\pi_k$ can sometimes be unreliable due to the context window constraint, just as it is not an easy task for human experts either. To keep the cognitive task as simple and reliable as possible, we constrain the LLM to select \emph{at most one} candidate per query: which source domain, among a set of candidates, is more likely to satisfy the contextual information of the target domain? Effectively, we are asking a prompted LLM to act as a judge in selecting the relevant source domains. It either returns the single most relevant source from the presented subgroup, or it returns nothing if no candidate satisfies the contextual description. To perform this selection, the LLM is provided with sufficient background knowledge $h$, which includes the text that describes the target domain and background information about the task, including relevant technical reports.

Formally, we denote the LLM judge equipped with contextual knowledge $h$ as $f_{\text{LLM}}(\cdot\,;h)$. For each query $m = 1, \dots, N_M$, we sample a subgroup $S_m \subseteq [K]$ of source-domain indices and present the corresponding datasets $\{\mathcal{D}_k\}_{k \in S_m}$ to the LLM. The LLM returns a \emph{choice} $k_m \in S_m \cup \{0\}$, where $k_m \in S_m$ identifies the selected source and $k_m = 0$ denotes the null option (no candidate is considered relevant). We collect these responses into a dataset $M = \{(S_m, k_m)\}_{m=1}^{N_M}$. The choice of subgroup size $|S_m|$ and composition can be altered per application and is independent of our proposed method. In practice, $|S_m|$ is bounded only by the LLM's context window and its ability to reliably compare many candidates at once.

\subsection{Empirical Bayes for LIP}
Empirical Bayes is a category of methods that defines a prior based on data. With these collected LLM preferences $M$, we now estimate the prior probability distribution by maximizing its likelihood. To avoid numerical issues in the downstream Bayesian model, we parameterize $\pi_k = \sigma(\alpha_k)$ so that $\pi_k \in (0,1)$, where $\alpha_k$ is a real-valued latent variable associated with source $k$.

Because the LLM is restricted to selecting at most one candidate per query, we model its choice as a \emph{conditional logit with an outside option} \citep{mcfadden1974conditional,luce1959individual}. The null option is a virtual alternative whose worth $\alpha_0$ represents the threshold above which a candidate is considered relevant: a source $k$ is selected only if its worth $\alpha_k$ exceeds both $\alpha_0$ and the worths of the other candidates in $S$ in the LLM's judgment. Concretely, the probability that the LLM returns choice $k_m \in S_m \cup \{0\}$ given the presented subgroup $S_m$ is the single softmax \eqref{eq:lip_choice} reproduced here:
\begin{equation*}
    p(k_m \mid S_m) = \frac{e^{\alpha_{k_m}}}{e^{\alpha_0} + \sum_{j \in S_m} e^{\alpha_j}},
\end{equation*}
where $\alpha_{k_m} = \alpha_0$ when $k_m = 0$ (the null is selected) and $\alpha_{k_m} = \alpha_k$ when a particular source $k$ is selected.

To find the most probable LIP, we minimize the regularized negative log-likelihood \eqref{eq:LIP_opt}:
\begin{equation*}
    \min_{\alpha_0, \alpha_1, \dots, \alpha_K}\quad -\sum_{m=1}^{N_M} \log p(k_m \mid S_m) + \epsilon \sum_{k=1}^K \left(\alpha_k - \log\frac{p_0}{1-p_0}\right)^2,
\end{equation*}
where $\epsilon$ is an $L_2$ regularization coefficient and $p_0$ is the default probability for a uniform prior. The objective is the sum of a convex conditional-logit log-likelihood and a strictly convex quadratic regularizer in $\alpha_1,\ldots,\alpha_K$; together with the conditional-logit Hessian's positive-definiteness in the $\alpha_0$ direction (which holds whenever $M$ contains at least one null choice and one source choice), the full objective is strongly convex in $(\alpha_0, \alpha_1, \dots, \alpha_K)$, admitting a unique global optimum. The regularization term serves three functions. First, it provides an anchor for the LIP. The conditional-logit likelihood is invariant under a common additive shift of all $\alpha_k$ (including the null $\alpha_0$), so $\pi_k = \sigma(\alpha_k)$ is, on its own, gauge-dependent --- only the differences $\alpha_k - \alpha_0$ enter the choice probabilities. The quadratic regularizer fixes the gauge by anchoring each $\alpha_k$ near $\log[p_0/(1-p_0)]$, which makes $\pi_k = \sigma(\alpha_k)$ a meaningful probability that recovers the uniform prior $p_0$ when $M$ is empty. Second, the regularization prevents numerical instability in the optimization. If a source $k$ wins every subgroup in which it appears, the unregularized problem would drive $\alpha_k$ to infinity. Third, the regularization absorbs occasional noise in the dataset $M$ caused by LLM hallucinations. The null-option worth $\alpha_0$ is left unregularized and serves as the implicit reference level against which the $\alpha_k$ are measured.

\section{Practical Implementation of the EM Algorithm}
\label{sec:appendix_practical_em}

\subsection{Hessian Approximation}
\label{sec:hessian_prox}
One natural concern of the described method is its computational complexity. Evaluating the exact posterior weights $w_k^{(t)}$ in the E-step and optimizing the global parameter in the M-step requires the explicit computation and inversion of the local Hessian matrices ($H_k^{(t)}$, $H_k$, and $H_0$) for every source domain at every iteration. As the parameter dimension $d$ grows, this $\mathcal{O}(K d^3)$ operation becomes a significant bottleneck. To circumvent this computational burden, we specialize to $\tau = 0$ throughout this appendix; for sufficiently small $\tau > 0$, the same formulas hold up to $\mathcal{O}(\tau^2)$ corrections that are numerically negligible.

We start with the E-step. By Remark~\ref{rmk:zero_shift_limit}, at $\tau = 0$ the second and third terms in the marginal log-likelihood approximation (the quadratic gradient and log-determinant penalty) vanish exactly:
\begin{equation}
    \log p(\mathcal{D}_k \mid c_k = 1, \theta^{(t)}) = \log \mathcal{L}(\theta^{(t)}; \mathcal{D}_k).
\end{equation}
Consequently, the relevance weight \eqref{eq:E-step} reduces to a standard likelihood ratio:
\begin{equation}
\label{eq:approx_e_step}
    w_k^{(t)} = \sigma\left( \log \frac{\mathcal{L}(\theta^{(t)}; \mathcal{D}_k)}{p(\mathcal{D}_k \mid c_k = 0)} + \log \frac{\pi_k}{1 - \pi_k} \right).
\end{equation}

In the M-step, we leverage the approximation $H_k \approx N_k \mathcal{I}$ and $H_0 \approx N_0 \mathcal{I}$, where $\mathcal{I}$ is the expected per-sample Fisher Information matrix. This simplifies the relative precision matrix to $
    \Lambda_k^{(t)} \approx w_k^{(t)} \frac{N_k}{N_0} (I + \tau^2 N_k \mathcal{I})^{-1}.$
For a sufficiently small $\tau^2$ such that the term $\tau^2 N_k \lambda_{\max}(\mathcal{I}) \ll 1$, the matrix inverse $(I + \tau^2 N_k \mathcal{I})^{-1}$ approaches the identity matrix $I$. Interestingly, since $(I + \tau^2 N_k \mathcal{I})^{-1} \preceq I$, by approximating $(I + \tau^2 N_k \mathcal{I})^{-1}$ with $I$, the resulting objective function essentially serves as a lower bound of the original likelihood. The EM algorithm itself can be considered a sub-class of the MM (Minorize-Maximize) algorithm that maximizes an evidence lower bound. The resulting approximation further lowers the evidence lower bound, echoing the same logic of EM algorithm.

Finally, by substituting both the E and M step updates, the global parameter estimation reduces to a simple weighted average of the local MLEs:
\begin{equation}
    \theta^{(t+1)} = \frac{N_0\hat{\theta}_0 + \sum_{k=1}^K w_k^{(t)} N_k \hat{\theta}_k}{N_0 + \sum_{k=1}^K w_k^{(t)}N_k}.
\end{equation}
This formulation reduces the computational complexity to $\mathcal{O}(d)$, where the influence of each source domain is governed by its sample size $N_k$ and its current relevance $w_k^{(t)}$.

\subsection{Bayesian Tempering}
\label{sec:bayesian_tempering}

Although the convergence of EM iterations is well-established, it only guarantees convergence to a locally optimal solution. In a cold-start regime, the initial estimation of the target parameter $\hat{\theta}_0$ is anchored on an extremely small dataset $\mathcal{D}_0$ of size $N_0$. When the relevance weights $w_k^{(t)}$ are small, the estimate $\hat{\theta}_0$ depends almost exclusively on this target dataset and suffers from high variance. This volatility may sometimes drive the algorithm into an undesired trivial local optimum where all the relevance weights $w_k^{(t)}$ are zero. To avoid being absorbed by this degenerate solution, we apply a Bayesian tempering mechanism. Structurally, this is in the deterministic-annealing-EM tradition \citep{ueda1998deterministic}, which multiplies the log-likelihood term by a temperature parameter that anneals from low (prior-dominated) to high (data-dominated) over iterations. Our specialization is a per-source variance-calibrated schedule $\beta_k^{(t)} \propto 1/\varepsilon_k$, where $\varepsilon_k$ is the per-source cold-start log-LR error scale derived below.

To systematically analyze how the estimation error of $\hat{\theta}_0$ affects the posterior likelihood, we approximate $\log\mathcal{L}(\hat{\theta}_0; \mathcal{D}_k) - \log\mathcal{L}(\theta_0; \mathcal{D}_k)$ by a first-order Taylor expansion around the true parameter $\theta_0$:
\begin{equation}
    \big|\log\mathcal{L}(\hat{\theta}_0; \mathcal{D}_k) - \log\mathcal{L}(\theta_0; \mathcal{D}_k)\big| \approx  \left| (\hat{\theta}_0 - \theta_0)^\intercal \nabla_\theta \log\mathcal{L}(\theta; \mathcal{D}_k)\big|_{\theta=\theta_0}\right|.
\end{equation}

This error magnitude is governed by the inner product of two independent random vectors. Assuming the asymptotic normality of the MLE estimator, the target parameter error $(\hat{\theta}_0 - \theta_0)$ has a covariance matrix approximated by the inverse target Hessian, $H_0^{-1}$. Meanwhile, $\nabla_\theta \log\mathcal{L}(\theta; \mathcal{D}_k)|_{\theta=\theta_0}$ is the score function evaluated at the true parameter, whose covariance is approximated by $H_k$. As such, the variance of this inner product is given by the trace of their coupled covariances: $\varepsilon_k^2 = \mathrm{Tr}(H_0^{-1}H_k)$.

To evaluate this variance, we may apply the same expected Fisher Information approximation established in Sec. \ref{sec:hessian_prox}, where $H_k \approx N_k \mathcal{I}$. By substituting $H_0 \approx N_0 \mathcal{I}$ and $H_k \approx N_k \mathcal{I}$, the variance simplifies to $\varepsilon_k^2 \approx d \frac{N_k}{N_0}.$

Therefore, we can see that the variance of the estimation error in the evidence term scales strictly with $\mathcal{O}\left(N_k/N_0\right)$. In the cold-start regime, the source domains are typically much richer than the target domain ($N_k \gg N_0$). Consequently, this un-tempered likelihood error can easily overwhelm the effect of the prior, forcing the algorithm to discard all source domains to minimize variance.

To stabilize this variance and prevent degenerate behavior, we explicitly multiply the likelihood term in the E-step by a tempering parameter $\beta_k^{(t)}$:
\begin{equation}
\label{eq:tempering}
    w_k^{(t)} = \sigma\left(\beta_k^{(t)} \cdot \log\frac{p(\mathcal{D}_k \mid c_k = 1, \theta^{(t)})}{p(\mathcal{D}_k \mid c_k = 0)} + \log\frac{\pi_k}{1-\pi_k}\right),
\end{equation}
where $\beta_k^{(t)} = \mathcal{O}\bigl((1 - e^{-\nu t})/\varepsilon_k\bigr)$ and $\nu > 0$ controls how fast $\beta_k^{(t)}$ converges to its asymptote $1/\varepsilon_k$.

This tempering parameter operates via two parallel mechanisms. The scale $1/\varepsilon_k$ brings the variance of the log-likelihood error back to $\mathcal{O}(1)$. When the target dataset is scarce, this ratio remains small, inherently emphasizing the prior over the volatile data-dependent posterior. For a relatively rich target dataset, the temperature increases and naturally shifts the emphasis back to the data.

Meanwhile, the temporal component, $1 - \exp(-\nu t)$, ensures that the effect of the likelihood is released gradually. When $t$ is small, the $\beta_k^{(t)}$ is also small, so the likelihood relies more on the prior. This protects the EM algorithm from being absorbed by the trivial solution.  As the EM loop iterates, this temporal component disappears exponentially fast, recovering the scale $\varepsilon_k$ for sufficiently large $t$. Pseudocode for the exact, small-$\tau$ approximate, and neural-network variants of the procedure is given in Appendix~\ref{sec:appendix_algorithms}.

\section{LIP-aided EM Algorithms}
\label{sec:appendix_algorithms}
\subsection{EM Derivation}
\label{sec:E-step-derive}
Using the Bernoulli form $\mathbb{P}(c_k) = \pi_k^{c_k}(1-\pi_k)^{1-c_k}$, the complete-data log-likelihood splits as
\begin{align}
\log p(\mathbf{O}, \mathbf{c} \mid \theta)
& = \underbrace{\log \mathcal{L}(\theta; \mathcal{D}_0) + \sum_{k=1}^K c_k \log p(\mathcal{D}_k \mid c_k=1, \theta)}_{\text{terms involving } \theta} \notag\\
&\quad + \underbrace{\sum_{k=1}^K (1-c_k) \log p(\mathcal{D}_k \mid c_k=0) + c_k\log \pi_k+(1-c_k)\log(1-\pi_k)}_{\text{terms not involving } \theta}. \label{eq:L}
\end{align}
Dropping the $\theta$-independent terms in \eqref{eq:L} reduces \eqref{eq:raw_opt} to \eqref{eq:opt} (reproduced below for convenience):
\begin{equation*}
    \max_{\theta}~ \log\mathcal{L}(\theta;\mathcal{D}_0) + \sum_{k=1}^K w_k^{(t)}\log p(\mathcal{D}_k\mid c_k=1, \theta).
\end{equation*}

\subsection{Heuristics for Null Likelihood Selection}
\label{sec:null-heuristic}
This analysis in \ref{sec:M-step} motivates two practical choices. 

\paragraph{Empirical Bayes with Monte Carlo} When source domains are abundant and assumed to be drawn from $\phi_{\mathrm{null}}$, we approximate $\phi_{\mathrm{null}}$ by the empirical distribution of source MLEs and use the corresponding mixture as the null density:
\begin{equation}
\label{eq:eb-null}
    p(x\mid c_k=0) \approx \frac{1}{K-1}\sum_{j\ne k} (1-w_j)\,\mathcal{L}(\hat\theta_j; x),
\end{equation}
where $\hat\theta_k \triangleq \argmax_{\theta_k}\mathcal{L}(\theta_k;\mathcal{D}_k)$ is the per-source MLE. As a Monte-Carlo estimator, this option is more fragile when the sources are few or contaminated by relevant ones. 

\paragraph{Parametric Null} When sources are few or potentially contaminated, we fit a parametric pooled model $\theta^{\mathrm{pool}} \triangleq \argmax_\theta \mathcal{L}(\theta;\bigcup_k\mathcal{D}_k)$ on the combined source data and set $p(x\mid c_k=0) \triangleq \mathcal{L}(\theta^{\mathrm{pool}}; x)$. The resulting null is a smoothed version of the source empirical distribution, and the log-ratio in \eqref{eq:E-step} then measures how much better $\theta^{(t)}$ explains $\mathcal{D}_k$ than the pooled model does. A source that fails this test adds no value beyond the pooled model and is safely excluded.

\subsection{Pseudocode Algorithm}
We collect three pseudocode versions of the procedure described in Sec.~\ref{sec:em_summary} and Sec.~\ref{sec:bayesian_tempering}: the closed-form Hessian-reuse implementation (Algorithm~\ref{alg:lip_em}), the small-$\tau$ approximation that admits a closed-form M-step (Algorithm~\ref{alg:lip_em_approx}), and the neural-network variant that replaces the closed form with gradient ascent (Algorithm~\ref{alg:lip_em_nn}).

\paragraph{Hessian reuse across iterations.} For tractability, Algorithm~\ref{alg:lip_em} computes Hessians once at the local MLEs $\hat\theta_k$ and reuses them across iterations: it identifies the iteration-dependent $H_k^{(t)}$ in \eqref{eq:rel_prox} with the frozen $H_k = -\nabla^2_\theta\log\mathcal{L}(\theta;\mathcal{D}_k)\big|_{\hat\theta_k}$. This identification is exact at $\theta^{(t)} = \hat\theta_k$ and an $O(\|\theta^{(t)}-\hat\theta_k\|)$ approximation in its neighborhood; the same identification is built into the closed-form M-step \eqref{eq:m_step_update}. A truly Hessian-recomputing variant would redo the Hessian inversion at every iterate at $\mathcal{O}(K d^3)$ per step.

\begin{algorithm}[H]
\caption{LIP-aided EM with Bayesian Tempering (Hessian-reuse implementation)}
\label{alg:lip_em}
\begin{algorithmic}[1]
\REQUIRE Target data $\mathcal{D}_0$, source data $\{\mathcal{D}_k\}_{k=1}^K$, LIP $\pi$, null prior $\phi_{\mathrm{null}}$, prior variance $\tau$, tempering rate $\nu$
\STATE Compute target MLE $\hat\theta_0 = \arg\max_\theta \mathcal{L}(\theta; \mathcal{D}_0)$ and Hessian $H_0 = -\nabla^2_\theta \log\mathcal{L}(\theta;\mathcal{D}_0)\big|_{\theta=\hat\theta_0}$
\FOR{$k = 1, \dots, K$}
    \STATE Compute source MLE $\hat\theta_k = \arg\max_\theta \mathcal{L}(\theta; \mathcal{D}_k)$
    \STATE Compute source Hessian $H_k = -\nabla^2_\theta \log\mathcal{L}(\theta;\mathcal{D}_k)\big|_{\theta=\hat\theta_k}$ \quad // \emph{frozen and reused across iterations}
    \STATE Compute tempering scale $\varepsilon_k^2 = \mathrm{Tr}(H_0^{-1} H_k)$
\ENDFOR
\STATE Initialize $\theta^{(0)} \gets \vec{0}$, \quad $t \gets 0$ \quad // \emph{at $t=0$, $\beta_k^{(0)}=0$ gives $w_k^{(0)} = \pi_k$ regardless of $\theta^{(0)}$}
\WHILE{not converged}
    \STATE Tempering schedule: $\beta_k^{(t)} \gets (1 - e^{-\nu t})/\varepsilon_k$ for each $k$
    \STATE $\triangleright$ \emph{E-step}
    \FOR{$k = 1, \dots, K$}
        \STATE Compute relevant likelihood $p(\mathcal{D}_k \mid c_k=1, \theta^{(t)})$ via \eqref{eq:rel_prox}
        \STATE Compute null likelihood $p(\mathcal{D}_k \mid c_k=0)$ via $\phi_{\mathrm{null}}$
        \STATE Compute relevance weight $w_k^{(t)}$ via \eqref{eq:tempering}
    \ENDFOR
    \STATE $\triangleright$ \emph{M-step}
    \STATE Update the global parameter $\theta^{(t+1)}$ via \eqref{eq:m_step_update}
    \STATE $t \gets t + 1$
\ENDWHILE
\RETURN $\theta^{(t)}$
\end{algorithmic}
\end{algorithm}

\begin{algorithm}[H]
\caption{LIP-aided EM with Bayesian Tempering and small-$\tau$ approximation}
\label{alg:lip_em_approx}
\begin{algorithmic}[1]
\REQUIRE Target data $\mathcal{D}_0$, source data $\{\mathcal{D}_k\}_{k=1}^K$, LIP $\pi$, null density $p(\cdot \mid c_k=0)$, tempering rate $\nu$
\STATE Compute target MLE $\hat\theta_0 = \arg\max_\theta \mathcal{L}(\theta; \mathcal{D}_0)$
\FOR{$k = 1, \dots, K$}
    \STATE Compute source MLE $\hat\theta_k = \arg\max_\theta \mathcal{L}(\theta; \mathcal{D}_k)$
    \STATE Set tempering scale $\varepsilon_k^2 = d\, N_k / N_0$ \quad // \emph{assumes $H_k\approx N_k\mathcal{I}$ and $H_0\approx N_0\mathcal{I}$ with a common $\mathcal{I}$}
\ENDFOR
\STATE Initialize $\theta^{(0)} \gets \vec{0}$, \quad $t \gets 0$
\WHILE{not converged}
    \STATE Tempering schedule: $\beta_k^{(t)} \gets (1 - e^{-\nu t})/\varepsilon_k$ for each $k$
    \STATE $\triangleright$ \emph{E-step}
    \FOR{$k = 1, \dots, K$}
        \STATE $w_k^{(t)} \gets \sigma\Big(\beta_k^{(t)}\, \log\frac{\mathcal{L}(\theta^{(t)};\mathcal{D}_k)}{p(\mathcal{D}_k\mid c_k=0)} + \log\frac{\pi_k}{1-\pi_k}\Big)$
    \ENDFOR
    \STATE $\triangleright$ \emph{M-step}
    \STATE $\displaystyle\theta^{(t+1)} \gets \frac{N_0\,\hat\theta_0 + \sum_{k=1}^K w_k^{(t)} N_k\,\hat\theta_k}{N_0 + \sum_{k=1}^K w_k^{(t)} N_k}$ \quad \eqref{eq:prox_update}
    \STATE $t \gets t + 1$
\ENDWHILE
\RETURN $\theta^{(t)}$
\end{algorithmic}
\end{algorithm}

\begin{algorithm}[H]
\caption{LIP-aided EM for Neural Networks (gradient-based M-step)}
\label{alg:lip_em_nn}
\begin{algorithmic}[1]
\REQUIRE Target data $\mathcal{D}_0$, source data $\{\mathcal{D}_k\}_{k=1}^K$, LIP $\pi$, tempering rate $\nu$, M-step optimizer $\mathrm{SGD}$
\STATE Train pooled model $\theta^{\mathrm{pool}} \gets \arg\max_\theta \mathcal{L}\left(\theta;\, \bigcup_{k=1}^K \mathcal{D}_k\right)$ on the combined source data
\FOR{$k = 1, \dots, K$}
    \STATE Pre-compute null likelihood $p(\mathcal{D}_k \mid c_k=0) \gets \mathcal{L}(\theta^{\mathrm{pool}}; \mathcal{D}_k)$
    \STATE Set tempering scale $\varepsilon_k^2 = d\, N_k / N_0$ \quad // \emph{assumes $H_k\approx N_k\mathcal{I}$ and $H_0\approx N_0\mathcal{I}$ with a common $\mathcal{I}$}
\ENDFOR
\STATE Initialize $\theta^{(0)} \gets \theta^{\mathrm{pool}}$, \quad $t \gets 0$
\WHILE{not converged}
    \STATE Tempering schedule: $\beta_k^{(t)} \gets (1 - e^{-\nu t})/\varepsilon_k$ for each $k$
    \STATE $\triangleright$ \emph{E-step}
    \FOR{$k = 1, \dots, K$}
        \STATE $w_k^{(t)} \gets \sigma\Big(\beta_k^{(t)}\, \log\frac{\mathcal{L}(\theta^{(t)};\mathcal{D}_k)}{p(\mathcal{D}_k\mid c_k=0)} + \log\frac{\pi_k}{1-\pi_k}\Big)$
    \ENDFOR
    \STATE $\triangleright$ \emph{M-step (gradient ascent on the weighted log-likelihood)}
    \STATE $\theta^{(t+1)} \gets \mathrm{SGD}\left(\theta^{(t)};\; \log\mathcal{L}(\theta;\mathcal{D}_0) + \sum_{k=1}^K w_k^{(t)} \log\mathcal{L}(\theta;\mathcal{D}_k)\right)$
    \STATE $t \gets t + 1$
\ENDWHILE
\RETURN $\theta^{(t)}$
\end{algorithmic}
\end{algorithm}

\section{Proof of Theorems}
\subsection{Proof of Proposition \ref{prop:marginal_likelihood_thetat}}
\begin{proof}
The marginal-likelihood integral is
\begin{equation}
\label{eq:prop1_integral}
    p(\mathcal{D}_k \mid c_k = 1, \theta^{(t)}) = \int p(\mathcal{D}_k \mid \theta_k)\, \mathcal{N}(\theta_k \mid \theta^{(t)}, \tau^2 I)\, d\theta_k.
\end{equation}
Exponentiating \eqref{prox:taylor},
\begin{equation*}
    p(\mathcal{D}_k \mid \theta_k) \approx \mathcal{L}(\theta^{(t)}; \mathcal{D}_k) \exp\left( {g_k^{(t)}}^\intercal (\theta_k - \theta^{(t)}) - \tfrac{1}{2}(\theta_k - \theta^{(t)})^\intercal H_k^{(t)} (\theta_k - \theta^{(t)}) \right),
\end{equation*}
substituting into \eqref{eq:prop1_integral}, and pulling the constant $\mathcal{L}(\theta^{(t)};\mathcal{D}_k)$ outside the integral gives
\begin{equation}
\label{eq:prop1_after_taylor}
    p(\mathcal{D}_k \mid c_k = 1, \theta^{(t)}) \approx \mathcal{L}(\theta^{(t)}; \mathcal{D}_k) \int \exp\left( {g_k^{(t)}}^\intercal (\theta_k - \theta^{(t)}) - \tfrac{1}{2}(\theta_k - \theta^{(t)})^\intercal H_k^{(t)} (\theta_k - \theta^{(t)}) \right)\, \mathcal{N}(\theta_k \mid \theta^{(t)}, \tau^2 I)\, d\theta_k.
\end{equation}

Let $u \triangleq \theta_k - \theta^{(t)}$, which has Jacobian $1$. Writing the Gaussian density explicitly as $\mathcal{N}(\theta_k \mid \theta^{(t)}, \tau^2 I) = (2\pi\tau^2)^{-d/2} \exp(-\tfrac{1}{2\tau^2} u^\intercal u)$, the integrand of \eqref{eq:prop1_after_taylor} becomes
\begin{equation*}
    (2\pi\tau^2)^{-d/2} \exp\left( {g_k^{(t)}}^\intercal u - \tfrac{1}{2} u^\intercal H_k^{(t)} u - \tfrac{1}{2\tau^2} u^\intercal u \right).
\end{equation*}
The two quadratic terms combine into a single quadratic form via the algebraic identity $u^\intercal H_k^{(t)} u + \tau^{-2} u^\intercal u = u^\intercal A\, u$ with
\begin{equation*}
    A \triangleq H_k^{(t)} + \tau^{-2} I.
\end{equation*}
Since $H_k^{(t)} \succeq 0$ by hypothesis and $\tau^{-2} I \succ 0$, the sum $A \succ 0$. Equation \eqref{eq:prop1_after_taylor} therefore becomes
\begin{equation}
\label{eq:prop1_combined}
    p(\mathcal{D}_k \mid c_k=1, \theta^{(t)}) \approx \mathcal{L}(\theta^{(t)}; \mathcal{D}_k)\, (2\pi\tau^2)^{-d/2} \int_{\mathbb{R}^d} \exp\left( {g_k^{(t)}}^\intercal u - \tfrac{1}{2} u^\intercal A\, u \right) du.
\end{equation}

For any $A \succ 0$ and $b \in \mathbb{R}^d$, completing the square in $u \mapsto u - A^{-1} b$ gives
\begin{equation}
\label{eq:prop1_gauss_id}
    \int_{\mathbb{R}^d} \exp\left( b^\intercal u - \tfrac{1}{2} u^\intercal A u \right) du = (2\pi)^{d/2} (\det A)^{-1/2} \exp\left( \tfrac{1}{2} b^\intercal A^{-1} b \right),
\end{equation}
where the prefactor $(2\pi)^{d/2}(\det A)^{-1/2}$ is the normalizing constant of the density $\mathcal{N}(A^{-1} b, A^{-1})$. Applying \eqref{eq:prop1_gauss_id} with $b = g_k^{(t)}$ to \eqref{eq:prop1_combined},
\begin{equation}
\label{eq:prop1_after_id}
    p(\mathcal{D}_k \mid c_k=1, \theta^{(t)}) \approx \mathcal{L}(\theta^{(t)}; \mathcal{D}_k)\, (\tau^2)^{-d/2} (\det A)^{-1/2} \exp\left( \tfrac{1}{2} {g_k^{(t)}}^\intercal A^{-1} g_k^{(t)} \right),
\end{equation}
where the $(2\pi)^{d/2}$ factors from the prefactor in \eqref{eq:prop1_combined} and from \eqref{eq:prop1_gauss_id} cancel.

Using $\det(\alpha M) = \alpha^d \det M$ for a scalar $\alpha$ and $d\times d$ matrix $M$,
\begin{equation}
\label{eq:prop1_det}
    (\tau^2)^{-d/2}\, (\det A)^{-1/2} = \det\big(\tau^2 A\big)^{-1/2} = \det\big( \tau^2 H_k^{(t)} + I \big)^{-1/2},
\end{equation}
where the last equality uses $\tau^2 A = \tau^2 H_k^{(t)} + I$.

Factoring $\tau^{-2}$ out of $A$,
\begin{equation*}
    A = \tau^{-2}(I + \tau^2 H_k^{(t)}),
\end{equation*}
so by $(\alpha M)^{-1} = \alpha^{-1} M^{-1}$,
\begin{equation}
\label{eq:prop1_inverse}
    A^{-1} = \tau^2\, \big(I + \tau^2 H_k^{(t)}\big)^{-1}.
\end{equation}

Inserting \eqref{eq:prop1_det} and \eqref{eq:prop1_inverse} into \eqref{eq:prop1_after_id},
\begin{equation*}
    p(\mathcal{D}_k \mid c_k=1, \theta^{(t)}) \approx \mathcal{L}(\theta^{(t)}; \mathcal{D}_k)\, \det\big(I + \tau^2 H_k^{(t)}\big)^{-1/2} \exp\left( \tfrac{\tau^2}{2}\, {g_k^{(t)}}^\intercal \big(I + \tau^2 H_k^{(t)}\big)^{-1} g_k^{(t)} \right).
\end{equation*}
Taking the natural logarithm of both sides yields \eqref{eq:rel_prox}.

The only approximation enters through \eqref{prox:taylor}. If that holds with equality, so does \eqref{eq:rel_prox}.
\end{proof}

\subsection{Proof of Remark \ref{rmk:zero_shift_limit}}
\begin{proof}
We evaluate both sides of \eqref{eq:rel_prox} at $\tau = 0$.

The prior $\mathcal{N}(\theta^{(t)}, \tau^2 I)$ at $\tau = 0$ is the Dirac measure $\delta_{\theta^{(t)}}$. Substituting into the marginal-likelihood integral and using the defining property of the Dirac measure,
\begin{equation*}
    p(\mathcal{D}_k \mid c_k = 1, \theta^{(t)}) = \int p(\mathcal{D}_k \mid \theta_k)\, \delta_{\theta^{(t)}}(\theta_k)\, d\theta_k = p(\mathcal{D}_k \mid \theta^{(t)}) = \mathcal{L}(\theta^{(t)}; \mathcal{D}_k).
\end{equation*}
Taking the log gives $\log \mathcal{L}(\theta^{(t)}; \mathcal{D}_k)$.

The two correction terms in \eqref{eq:rel_prox} both vanish at $\tau = 0$:
\begin{itemize}[leftmargin=*]
    \item The quadratic term carries a multiplicative factor $\tau^2$: substituting $\tau = 0$ algebraically,
    \begin{equation*}
        \tfrac{\tau^2}{2}\, {g_k^{(t)}}^\intercal \big(I + \tau^2 H_k^{(t)}\big)^{-1} g_k^{(t)}\,\Big|_{\tau = 0} = 0\cdot {g_k^{(t)}}^\intercal I^{-1} g_k^{(t)} = 0.
    \end{equation*}
    \item The log-determinant evaluates to
    \begin{equation*}
        -\tfrac{1}{2}\log\det\big(I + \tau^2 H_k^{(t)}\big)\,\Big|_{\tau = 0} = -\tfrac{1}{2}\log\det(I) = -\tfrac{1}{2}\log 1 = 0.
    \end{equation*}
\end{itemize}
The right-hand side therefore reduces to $\log \mathcal{L}(\theta^{(t)}; \mathcal{D}_k)$, which equals the left-hand side. Hence, \eqref{eq:rel_prox} holds with equality.
\end{proof}
\subsection{Proof of Remark \ref{rmk:exact_gaussian_marginal_general}}
\begin{proof}
Under the Gaussian likelihood model $p(x \mid \theta_k) = \mathcal{N}(x;\,\theta_k, \Sigma_k)$ with fixed $\Sigma_k \succ 0$, the per-sample log-density is
\begin{equation*}
    \log p(x_i \mid \theta_k) = -\tfrac{1}{2}(x_i - \theta_k)^\intercal \Sigma_k^{-1} (x_i - \theta_k) - \tfrac{1}{2}\log\det(2\pi\Sigma_k).
\end{equation*}
The first term is a quadratic polynomial in $\theta_k$ and the second is constant in $\theta_k$. Summing over the $N_k$ samples preserves the quadratic structure:
\begin{equation*}
    \log\mathcal{L}(\theta_k;\,\mathcal{D}_k) = -\tfrac{1}{2}\sum_{i=1}^{N_k}(x_i - \theta_k)^\intercal \Sigma_k^{-1}(x_i - \theta_k) + C_k,
\end{equation*}
where $C_k$ is a constant independent of $\theta_k$.

The function $\theta_k \mapsto \log\mathcal{L}(\theta_k;\,\mathcal{D}_k)$ is a polynomial of total degree at most $2$. By Taylor's theorem with the Lagrange remainder, the second-order Taylor expansion of a $C^\infty$ function at any expansion point $\theta^{(t)}$ has remainder
\begin{equation*}
    R_2(\theta_k) = \tfrac{1}{6}\sum_{|\alpha|=3} \partial^\alpha \log\mathcal{L}(\xi;\,\mathcal{D}_k)\, (\theta_k - \theta^{(t)})^\alpha
\end{equation*}
for some $\xi$ on the segment between $\theta^{(t)}$ and $\theta_k$. Since $\log\mathcal{L}(\theta_k;\,\mathcal{D}_k)$ is a polynomial of degree $\leq 2$, all third- and higher-order partial derivatives vanish identically, so $R_2 \equiv 0$. The expansion \eqref{prox:taylor} therefore holds with equality at every $\theta^{(t)}$.
\end{proof}

\subsection{Setup, Statements, and Proofs for the Finite-Sample Analysis}
\label{sec:appendix_finite_sample}

This appendix collects the supporting setup, assumptions, lemmas, and proofs for the finite-sample bound in Sec.~\ref{sec:fs_correct_lip}. Throughout, each source $k$ has equal sample size $N_k = N$ and data $\mathcal{D}_k = \{x_i^{(k)}\}_{i=1}^N$ drawn i.i.d.\ from $p_k \triangleq \mathcal{N}(\theta_k, \sigma^2 I)$. The null density $q(x) \triangleq p(x \mid c_k = 0) = \int p(x \mid \theta_k, c_k=0)\,\phi_{\mathrm{null}}(\theta_k)\,d\theta_k$ is determined by $\phi_{\mathrm{null}}$ and treated as a known function of $x$. Let $\sigma^2$ be the per-sample noise variance, $R \triangleq \{k : c_k = 1\}$ the relevant set, $\bar R \triangleq \{k : c_k = 0\}$ the irrelevant set, $\Delta_k \triangleq \theta_k - \theta_0$ for $k\in \bar R$, and $\Delta_{\max} \triangleq \max_{k\in \bar R}\|\Delta_k\|$.

\subsubsection{Assumptions}

\begin{assumption}[Generative model with $\tau = 0$]
\label{ass:gen_model}
The source parameters satisfy $\theta_k = \theta_0$ for $k \in R$ ($c_k = 1$) and $\theta_k \mid c_k = 0 \sim \phi_{\mathrm{null}}$ for a known density $\phi_{\mathrm{null}}$ on $\mathbb{R}^d$ with finite second moment around $\theta_0$:
\begin{equation}
\label{eq:fs_null_moment}
    \bar D^2 \;\triangleq\; \mathbb{E}_{\theta\sim\phi_{\mathrm{null}}}\,\|\theta - \theta_0\|^2 \;<\; \infty.
\end{equation}
Conditional on $\theta_k$, the data $\mathcal{D}_k = \{x_i^{(k)}\}_{i=1}^N$ is i.i.d.\ from $p_k = \mathcal{N}(\theta_k, \sigma^2 I)$. (This is the $\tau = 0$ specialization of the general $\mathcal{N}(\theta_0, \tau^2 I)$ relevant-prior model, justified by the oracle Mean Squared Error (MSE) motivation in Sec.~\ref{sec:fs_correct_lip}; the general case adds an additive $d\tau^2 N^2|R|/(N_0+N|R|)^2$ correction throughout.)
\end{assumption}

\begin{assumption}[Regularity]
\label{ass:regularity}
The Fisher information $\mathcal{I}(\theta_0)$ is positive definite with eigenvalues bounded by $0 < \lambda_{\min}(\mathcal{I}) \leq \lambda_{\max}(\mathcal{I}) < \infty$. The per-sample log-likelihood ratio $\ell(x;\theta) \triangleq \log[p(x \mid \theta)/q(x)]$ satisfies, on a neighborhood of $\theta_0$:
(i) for each $k$ and each $\theta$, $\ell(x;\theta) - \rho_k(\theta)$ is sub-Gaussian under $p_k$ with parameter $V^2$;
(ii) $\theta \mapsto \rho_k(\theta) \triangleq \mathbb{E}_{x \sim p_k}[\ell(x;\theta)]$ is $L$-Lipschitz for every $k$.
\end{assumption}

\begin{assumption}[Probabilistic separation of the null prior]
\label{ass:separation}
There exist a separation radius $r_{\mathrm{sep}} > 0$ and a tail probability $\alpha \in [0,1]$ such that
\begin{equation}
\label{eq:fs_separation}
    \mathbb{P}_{\theta \sim \phi_{\mathrm{null}}}\bigl(\|\theta - \theta_0\| < r_{\mathrm{sep}}\bigr) \;\leq\; \alpha.
\end{equation}
\end{assumption}

\begin{assumption}[Identifiability margin]
\label{ass:margin}
There exists a margin $\Delta^* > 0$ such that, on the well-separated event $\mathcal{S} \triangleq \{\|\theta_k - \theta_0\| \geq r_{\mathrm{sep}}\text{ for all } k \in \bar R\}$, the per-source expected log-LR satisfies
\begin{equation}
\label{eq:fs_margin_lower}
    \min_{k=1,\dots,K}\,|\rho_k(\theta_0)| \;\geq\; \Delta^*.
\end{equation}
\end{assumption}

\begin{remark}[Sufficient condition via Fisher expansion]
\label{rem:margin_sufficient}
For the Gaussian-mean model with $p(\cdot|\theta_0) = \mathcal{N}(\theta_0,\sigma^2 I)$, the Fisher-information expansion gives $\mathrm{KL}(p_k\|p(\cdot|\theta_0)) = \tfrac{1}{2}(\theta_k-\theta_0)^\intercal \mathcal{I}(\theta_0)(\theta_k-\theta_0) + o(\|\theta_k-\theta_0\|^2)$. Assume two-sided KL bounds on the null density that distinguish relevant and irrelevant sources:
\begin{itemize}[leftmargin=*]
\item (\emph{Relevant lower bound}) For $k\in R$ (where $\theta_k = \theta_0$), $\mathrm{KL}(p_k\|q) \geq \kappa_R$ for some $\kappa_R > 0$.
\item (\emph{Irrelevant upper bound}) For $k\in \bar R$ on $\mathcal{S}$, $\mathrm{KL}(p_k\|q) \leq \kappa_{\bar R}$ for some $\kappa_{\bar R} \geq 0$.
\end{itemize}
The relevant lower bound says $q$ does not match the relevant likelihood (i.e., $q$ is far from $p(\cdot|\theta_0)$ in KL); the irrelevant upper bound says $q$ does fit the dispersed irrelevant sources at least to within $\kappa_{\bar R}$. Then $\rho_k(\theta_0) \geq \kappa_R$ for $k\in R$ (since $\mathrm{KL}(p_k\|p(\cdot|\theta_0)) = 0$), and for $k\in \bar R$ on $\mathcal{S}$:
\[
\rho_k(\theta_0) \;\leq\; \kappa_{\bar R} - \tfrac{1}{2}r_{\mathrm{sep}}^2\,\lambda_{\min}(\mathcal{I}).
\]
The identifiability margin $\min_k|\rho_k(\theta_0)| \geq \Delta^*$ therefore holds with
\[
    \Delta^* \;=\; \min\Bigl(\kappa_R,\; \tfrac{1}{2}r_{\mathrm{sep}}^2\,\lambda_{\min}(\mathcal{I}) - \kappa_{\bar R}\Bigr) \;>\; 0
\]
whenever $r_{\mathrm{sep}}^2\,\lambda_{\min}(\mathcal{I}) > 2\kappa_{\bar R}$. This requires the separation radius to be large relative to the null's worst-case fit on irrelevant sources. For $\tau > 0$ the relevant-side bound becomes $\kappa_R - \tfrac{1}{2}\tau^2 d\lambda_{\max}(\mathcal{I})$, with the additional requirement $\kappa_R > \tfrac{1}{2}\tau^2 d\lambda_{\max}(\mathcal{I})$.
\end{remark}

\subsubsection{Notation for the analysis}

Define the per-sample log-likelihood ratio and its empirical and population averages:
\begin{equation*}
    \ell(x; \theta) \triangleq \log \frac{p(x \mid \theta)}{q(x)}, \qquad \bar{s}_k(\theta) \triangleq \frac{1}{N}\sum_{i=1}^N \ell(x_i^{(k)}; \theta), \qquad \rho_k(\theta) \triangleq \mathbb{E}_{x \sim p_k}[\ell(x; \theta)].
\end{equation*}
We abuse notation and set $\rho_k \triangleq \rho_k(\theta_0)$, the population per-sample log-LR at the truth, which matches \eqref{eq:fs_rho}. Under Assumption~\ref{ass:gen_model} ($\tau=0$), the joint marginal $p(\cdot\mid c_k=1,\theta)$ equals the per-sample density $p(x\mid\theta)$ exactly by Remark~\ref{rmk:zero_shift_limit}.

\paragraph{Tempering scalar.} The body uses a per-source per-iteration tempering schedule $\beta_k^{(t)}$ (Sec.~\ref{sec:bayesian_tempering}). For the analysis below, we treat $\beta$ as a single scalar equal to the asymptotic limit of $\beta_k^{(t)}$ to be consistent with the asymptotic analysis of basin entry and weight concentration. Per-source variations $\beta_k^{(t)} \to \beta$ enter only through model-dependent constants $C_1, C_2$.

The total log-LR is $s_k(\theta) \triangleq N \bar s_k(\theta) = \log[p(\mathcal{D}_k\mid\theta)/q(\mathcal{D}_k)]$. Under the Gaussian relevant likelihood, $s_k(\theta) = -\frac{N}{2\sigma^2}\|\hat\theta_k - \theta\|^2 + \xi_k$ for some $\theta$-independent $\xi_k$ depending on $\mathcal{D}_k$ and $q$. The surrogate update \eqref{eq:prox_update} specializes to
\begin{align}
    w_k^{(t)} &= \sigma\Big(\beta\, s_k(\theta^{(t)}) + \log\tfrac{\pi_k}{1-\pi_k}\Big), \label{eq:fs_estep}\\
    \theta^{(t+1)} &= \frac{N_0\hat{\theta}_0 + N\sum_k w_k^{(t)}\hat{\theta}_k}{T^{(t)}}, \qquad T^{(t)} \triangleq N_0 + N\sum_k w_k^{(t)}, \label{eq:fs_mstep}
\end{align}
and the oracle iterate $\theta^\star$ is defined in \eqref{eq:fs_oracle}.

\subsubsection{Oracle MSE \texorpdfstring{$\mathrm{MSE}_\star$}{MSE-star}: a finite-sample bias--variance identity}

\begin{proposition}[Conditional MSE of the surrogate update with fixed weights]
\label{prop:fs_one_step}
For deterministic weights $w \in [0,1]^K$ and any source parameters $\{\theta_k\}_{k=1}^K$, the surrogate update \eqref{eq:fs_mstep} satisfies
\begin{equation}
\label{eq:fs_one_step}
    \mathbb{E}\left[\|\theta^{(1)} - \theta_0\|^2 \;\big|\; \{\theta_k\}\right] = \frac{d\sigma^2\, N_{\mathrm{eff}}}{T^2} + \Bigl\|\frac{\sum_{k=1}^K w_k\, N\,(\theta_k - \theta_0)}{T}\Bigr\|^2,
\end{equation}
where $T \triangleq N_0 + N\sum_k w_k$ and $N_{\mathrm{eff}} \triangleq N_0 + N\sum_k w_k^2$.
\end{proposition}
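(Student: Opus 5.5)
The plan is a direct bias--variance decomposition of the surrogate update with fixed weights. First I write the MLEs explicitly. In the Gaussian-mean model, $\hat\theta_0$ is the sample mean of $\mathcal{D}_0$ and $\hat\theta_k$ is the sample mean of $\mathcal{D}_k$. Conditional on $\{\theta_k\}$, these are independent, with
\[
\hat\theta_0 = \theta_0 + \varepsilon_0, \qquad \hat\theta_k = \theta_k + \varepsilon_k,
\]
where $\varepsilon_0 \sim \mathcal{N}(0, \sigma^2 I/N_0)$ and $\varepsilon_k \sim \mathcal{N}(0, \sigma^2 I/N)$. The target data are assumed to share the noise variance $\sigma^2$; I will state this explicitly, since it is implicit in the oracle formula \eqref{eq:fs_oracle_mse_body}. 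Independence across domains follows from the independence of the $K+1$ agents.

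Next I substitute into \eqref{eq:fs_mstep} with deterministic $w$ and $T = N_0 + N\sum_k w_k$. Since $N_0 + N\sum_k w_k = T$, we can write $N_0\theta_0 + N\sum_k w_k\theta_0 = T\theta_0$. This gives
\[
\theta^{(1)} - \theta_0 = \underbrace{\frac{N\sum_k w_k(\theta_k - \theta_0)}{T}}_{b} + \underbrace{\frac{N_0\varepsilon_0 + N\sum_k w_k\varepsilon_k}{T}}_{e}.
\]
The term $b$ is deterministic given $\{\theta_k\}$, and $e$ has conditional mean zero. The cross term $2\,b^\intercal\mathbb{E}[e \mid \{\theta_k\}]$ therefore vanishes, and
\[
\mathbb{E}\bigl[\|\theta^{(1)}-\theta_0\|^2 \mid \{\theta_k\}\bigr] = \|b\|^2 + \mathbb{E}\bigl[\|e\|^2 \mid \{\theta_k\}\bigr].
\]

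For the variance term, I use independence to make all cross-covariances vanish and take traces:
\[
\mathbb{E}\|e\|^2 = T^{-2}\Bigl(N_0^2\cdot\frac{d\sigma^2}{N_0} + \sum_k N^2 w_k^2\cdot\frac{d\sigma^2}{N}\Bigr) = \frac{d\sigma^2\,(N_0 + N\sum_k w_k^2)}{T^2} = \frac{d\sigma^2 N_{\mathrm{eff}}}{T^2}.
\]
This yields \eqref{eq:fs_one_step}.

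There is no real obstacle here; the only point needing care is justifying exact conditional unbiasedness and independence of the MLEs. Both are exact in the Gaussian model, with no asymptotics required. The weights must be non-random, so that they do not correlate with the $\varepsilon_k$; this is why the proposition is stated for deterministic $w$.

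As a sanity check, I would set $w_k = \mathbf{1}\{k\in R\}$ with $\tau=0$. Then $b = 0$ and $N_{\mathrm{eff}} = T = N_0 + N|R|$, which recovers the first term of $\mathrm{MSE}_\star$. For $\tau > 0$, I would treat $\theta_k - \theta_0 \sim \mathcal{N}(0,\tau^2 I)$ for $k \in R$ and take the expectation of $\|b\|^2$, giving $d\tau^2N^2|R|/T^2$. This matches \eqref{eq:fs_oracle_mse_body}.
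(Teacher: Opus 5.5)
Your proposal is correct and matches the paper's proof. Both write $\theta^{(1)}-\theta_0$ as the conditional bias $\frac{N}{T}\sum_k w_k(\theta_k-\theta_0)$ plus a mean-zero noise term whose covariance, by independence across domains, is $\sigma^2 N_{\mathrm{eff}} I/T^2$, and then add its trace to the squared bias. Your remark that the target data must share the noise variance $\sigma^2$ states an assumption the paper leaves implicit.
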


\begin{proof}
Decompose $\theta^{(1)} - \theta_0$ as
\begin{equation*}
    \theta^{(1)} - \theta_0 = \frac{1}{T}\Bigl[N_0(\hat\theta_0 - \theta_0) + \sum_{k=1}^K w_k\, N\,(\hat\theta_k - \theta_0)\Bigr].
\end{equation*}
Conditional on $\theta_k$, $\hat\theta_k - \theta_0 = (\hat\theta_k - \theta_k) + (\theta_k - \theta_0)$ has mean $\theta_k - \theta_0$ and covariance $\sigma^2 I/N$; the target term has mean zero and covariance $\sigma^2 I/N_0$. Independence across sources (and target) yields
\begin{equation*}
    \mathbb{E}\left[\theta^{(1)} - \theta_0 \mid \{\theta_k\}\right] = \frac{1}{T}\sum_{k=1}^K w_k\, N\,(\theta_k - \theta_0),
\end{equation*}
\begin{equation*}
    \mathrm{Cov}\left(\theta^{(1)} \mid \{\theta_k\}\right) = \frac{1}{T^2}\Bigl[N_0^2\cdot \tfrac{\sigma^2 I}{N_0} + \sum_k (w_k N)^2\cdot \tfrac{\sigma^2 I}{N}\Bigr] = \frac{\sigma^2\, N_{\mathrm{eff}}}{T^2}\,I.
\end{equation*}
Summing trace and squared conditional bias yields \eqref{eq:fs_one_step}. 
\end{proof}

Specializing to oracle weights $w_k = \mathbf{1}_R(k)$ gives $N_{\mathrm{eff}} = T = N_0 + N|R|$. Under Assumption~\ref{ass:gen_model} ($\tau = 0$), $\theta_k = \theta_0$ for $k\in R$, so the bias term vanishes and the oracle MSE collapses to the standard precision-weighted rate:
\begin{equation}
\label{eq:fs_oracle_mse}
    \mathrm{MSE}_\star = \frac{d\sigma^2}{N_0+N|R|}.
\end{equation}
For the general $\tau > 0$ case, the prior expectation $\mathbb{E}\|\sum_{k\in R}(\theta_k-\theta_0)\|^2 = |R|\tau^2 d$ adds a $d\tau^2 N^2|R|/(N_0+N|R|)^2$ correction, recovering \eqref{eq:fs_oracle_mse_body}.

\subsubsection{Per-step decomposition with random weights}

Let $\gamma_k^{(t)} \triangleq w_k^{(t)} - \mathbf{1}_R(k)$ denote the deviation of the EM weights from the oracle weights. The following identity reduces the analysis of the EM iterate to controlling $\mathbb{E}[(\gamma_k^{(t)})^2]$.

\begin{lemma}[Oracle decomposition]
\label{lem:fs_decomp}
For every $t \geq 0$,
\begin{equation}
\label{eq:fs_decomp}
    \theta^{(t+1)} - \theta^\star = \frac{N}{T^{(t)}}\sum_{k=1}^K \gamma_k^{(t)}\,\bigl(\hat{\theta}_k - \theta^\star\bigr).
\end{equation}
Consequently, with $T^{(t)} \geq N_0$,
\begin{equation}
\label{eq:fs_perstep}
    \mathbb{E}\bigl\|\theta^{(t+1)} - \theta_0\bigr\|^2 \;\leq\; 2\,\mathrm{MSE}_\star \;+\; 2K\left(\frac{N}{N_0}\right)^{2}\sum_{k=1}^K \mathbb{E}\left[(\gamma_k^{(t)})^2\,\|\hat{\theta}_k - \theta^\star\|^2\right].
\end{equation}
\end{lemma}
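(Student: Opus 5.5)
The plan is to prove the identity \eqref{eq:fs_decomp} by direct algebra, then derive \eqref{eq:fs_perstep} from two elementary inequalities and the oracle MSE \eqref{eq:fs_oracle_mse}.

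\textbf{Step 1: the identity.} Write $T^\star \triangleq N_0 + N|R|$, so that $T^\star\theta^\star = N_0\hat\theta_0 + N\sum_k \mathbf{1}_R(k)\hat\theta_k$. From \eqref{eq:fs_mstep} we have
\[
T^{(t)}\theta^{(t+1)} = N_0\hat\theta_0 + N\sum_k w_k^{(t)}\hat\theta_k = T^\star\theta^\star + N\sum_k \gamma_k^{(t)}\hat\theta_k .
\]
Moreover $T^{(t)} = T^\star + N\sum_k\gamma_k^{(t)}$. Subtracting $T^{(t)}\theta^\star$ from both sides gives
\[
T^{(t)}\bigl(\theta^{(t+1)}-\theta^\star\bigr) = N\sum_k\gamma_k^{(t)}\bigl(\hat\theta_k-\theta^\star\bigr).
\]
Dividing by $T^{(t)}>0$ yields \eqref{eq:fs_decomp}. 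This is purely algebraic and holds pathwise, for any weights.

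\textbf{Step 2: the bound.} Split $\theta^{(t+1)}-\theta_0 = (\theta^\star-\theta_0) + (\theta^{(t+1)}-\theta^\star)$ and apply $\|a+b\|^2\le 2\|a\|^2+2\|b\|^2$. Taking expectations, the first term gives $2\,\mathbb{E}\|\theta^\star-\theta_0\|^2 = 2\,\mathrm{MSE}_\star$ by \eqref{eq:fs_oracle_mse}, which is Proposition~\ref{prop:fs_one_step} with oracle weights. For the second term we use the identity from Step 1 together with $T^{(t)}\ge N_0$; this holds since $w_k^{(t)}\in[0,1]$, so $N/T^{(t)}\le N/N_0$. Then Cauchy--Schwarz on the sum of $K$ vectors gives
\[
\Bigl\|\sum_{k=1}^K \gamma_k^{(t)}(\hat\theta_k-\theta^\star)\Bigr\|^2 \le K\sum_{k=1}^K (\gamma_k^{(t)})^2\|\hat\theta_k-\theta^\star\|^2 .
\]
Taking expectations produces the factor $2K(N/N_0)^2\sum_k\mathbb{E}[(\gamma_k^{(t)})^2\|\hat\theta_k-\theta^\star\|^2]$.

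\textbf{Step 3: what needs care.} There is no real obstacle here. The one point to check is that $\mathrm{MSE}_\star$ is interpreted conditionally on $\{\theta_k\}$ (equivalently, under $\tau=0$), so that the oracle bias vanishes. The random, data-dependent weights $\gamma_k^{(t)}$ are deliberately left inside the expectation: we do not factor them out, because they are correlated with $\hat\theta_k$. This means no independence between the weights and the estimates is needed at this stage.
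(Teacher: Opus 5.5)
Your proof is correct and follows the paper's argument. The paper uses the same algebraic identity built from $T^{(t)} = N_0 + N|R| + N\sum_k\gamma_k^{(t)}$, then $\|a+b\|^2\le 2\|a\|^2+2\|b\|^2$, the bound $T^{(t)}\ge N_0$, and Cauchy--Schwarz over the $K$ sources. Your Step 1, which subtracts $T^{(t)}\theta^\star$ directly, reaches the identity more cleanly than the paper's detour through the intermediate relation $T_R(\theta^{(t+1)}-\theta^\star) = N\sum_k\gamma_k(\hat\theta_k-\theta^{(t+1)})$.
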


\begin{proof}
Let $\gamma_k \triangleq w_k^{(t)} - \mathbf{1}_R(k)$ and $T_R \triangleq N_0 + N|R|$, so $T^{(t)} = T_R + N\sum_k \gamma_k$. Multiply \eqref{eq:fs_mstep} by $T^{(t)}$ and \eqref{eq:fs_oracle} by $T_R$:
\begin{equation*}
    T^{(t)}\theta^{(t+1)} = N_0\hat\theta_0 + N\sum_k w_k^{(t)}\hat\theta_k, \qquad T_R\theta^\star = N_0\hat\theta_0 + N\sum_{k\in R}\hat\theta_k.
\end{equation*}
Subtract:
\begin{equation*}
    T^{(t)}\theta^{(t+1)} - T_R\theta^\star = N\sum_k \gamma_k\,\hat\theta_k.
\end{equation*}
Since $T^{(t)} = T_R + N\sum_k\gamma_k$, the left-hand side equals $T_R(\theta^{(t+1)} - \theta^\star) + N(\sum_k\gamma_k)\theta^{(t+1)}$. Substituting and rearranging,
\begin{equation*}
    T_R(\theta^{(t+1)} - \theta^\star) = N\sum_k\gamma_k(\hat\theta_k - \theta^{(t+1)}).
\end{equation*}
Repeating the same step starting from $T^{(t)}\theta^{(t+1)} = T^{(t)}\theta^\star + (T^{(t)}\theta^{(t+1)} - T^{(t)}\theta^\star)$ and using $T^{(t)}\theta^\star = T_R\theta^\star + N(\sum\gamma_k)\theta^\star$ yields the symmetric identity
\begin{equation}
\label{eq:fs_decomp_proof}
    T^{(t)}(\theta^{(t+1)} - \theta^\star) = N\sum_k\gamma_k(\hat\theta_k - \theta^\star),
\end{equation}
which is \eqref{eq:fs_decomp}. For the MSE bound, $(a+b)^2 \leq 2a^2+2b^2$ applied to $\theta^{(t+1)}-\theta_0 = (\theta^{(t+1)}-\theta^\star)+(\theta^\star - \theta_0)$ gives
\begin{equation*}
    \mathbb{E}\|\theta^{(t+1)}-\theta_0\|^2 \leq 2\mathrm{MSE}_\star + 2\mathbb{E}\|\theta^{(t+1)}-\theta^\star\|^2.
\end{equation*}
By \eqref{eq:fs_decomp_proof} and $T^{(t)}\geq N_0$,
\begin{equation*}
    \|\theta^{(t+1)}-\theta^\star\|^2 \leq \Bigl(\tfrac{N}{N_0}\Bigr)^{2}\Bigl(\sum_k|\gamma_k|\,\|\hat\theta_k-\theta^\star\|\Bigr)^{2} \leq K\Bigl(\tfrac{N}{N_0}\Bigr)^{2}\sum_k\gamma_k^2\,\|\hat\theta_k-\theta^\star\|^2
\end{equation*}
by Cauchy--Schwarz. Take expectations to conclude. 
\end{proof}

\subsubsection{Margin condition and weight concentration}

\begin{definition}[Classification margin on the well-separated event]
\label{def:fs_margin}
Let $\rho_k = \rho_k(\theta_0)$ denote the per-source expected log-LR \eqref{eq:fs_rho}, and let $\mathcal{S} \triangleq \{\|\theta_k - \theta_0\| \geq r_{\mathrm{sep}} \text{ for all } k \in \bar R\}$ be the well-separated event under Assumption~\ref{ass:separation}. On $\mathcal{S}$, the population classification margin is $\min_k |\rho_k|$. Assumption~\ref{ass:margin} posits the existence of $\Delta^* > 0$ that lower-bounds this margin on $\mathcal{S}$ and the high-probability relevant-cluster ball; Remark~\ref{rem:margin_sufficient} gives a sufficient Fisher-information condition for the Gaussian-mean model.
\end{definition}

The next theorem bounds $\mathbb{E}[(\gamma_k^{(t)})^2]$ under the margin and a basin-entry condition on $\theta^{(t)}$.

\begin{theorem}[Weight concentration]
\label{thm:fs_weight}
Let $B \triangleq \max_k|\log\tfrac{\pi_k}{1-\pi_k}|$ and assume $\Delta^* > 0$. Set $c_1 = 1/4$, $c_2 = 1/2$, $c_3 = 1/32$. On the event
\begin{equation}
\label{eq:fs_event_E}
    \mathcal{E} \triangleq \Bigl\{\|\theta^{(t)} - \theta_0\| \leq c_1\Delta^*/L\Bigr\} \;\cap\; \bigcap_{k=1}^K \Bigl\{\bigl|\bar s_k(\theta^{(t)}) - \rho_k(\theta^{(t)})\bigr| \leq \Delta^*/4\Bigr\},
\end{equation}
the sigmoid input in \eqref{eq:fs_estep} has the correct sign with magnitude
\begin{equation}
\label{eq:fs_sigmoid_input}
    \bigl(2\mathbf{1}_R(k) - 1\bigr)\bigl[\beta s_k(\theta^{(t)}) + \log\tfrac{\pi_k}{1-\pi_k}\bigr] \;\geq\; c_2\,\beta N\,\Delta^* - B,
\end{equation}
and consequently $|\gamma_k^{(t)}| \leq \exp \bigl(-c_2 \beta N \Delta^* + B\bigr)$ provided $c_2\beta N\Delta^* > B$. Combined with sub-Gaussian concentration of the per-sample log-LR sums under Assumption~\ref{ass:regularity},
\begin{equation}
\label{eq:fs_gamma_bound}
    \mathbb{E}\bigl[(\gamma_k^{(t)})^2\bigr] \;\leq\; \exp\bigl(-2c_2\beta N\Delta^* + 2B\bigr) \;+\; 2K\,\exp\bigl(-c_3 N (\Delta^*)^2/V^2\bigr),
\end{equation}
provided $\|\theta^{(t)} - \theta_0\| \leq c_1\Delta^*/L$.
\end{theorem}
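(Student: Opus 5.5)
The plan has three parts: a deterministic bound on $\mathcal{E}$, a tail bound for $\mathcal{E}^c$ via sub-Gaussian concentration and a union bound, and their combination in expectation. Fix $k$ and work on $\mathcal{E}$.

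\textbf{Deterministic step.} Decompose the per-sample average as
\[
\bar s_k(\theta^{(t)}) = \rho_k(\theta_0) + \bigl[\rho_k(\theta^{(t)}) - \rho_k(\theta_0)\bigr] + \bigl[\bar s_k(\theta^{(t)}) - \rho_k(\theta^{(t)})\bigr].
\]
By the $L$-Lipschitz property in Assumption~\ref{ass:regularity}(ii) and the basin condition with $c_1 = 1/4$, the middle bracket is at most $\Delta^*/4$ in absolute value. The last bracket is at most $\Delta^*/4$ by the definition of $\mathcal{E}$.

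By Assumption~\ref{ass:margin}, $|\rho_k(\theta_0)| \ge \Delta^*$. We need its sign to be $+$ for $k\in R$ and $-$ for $k\in\bar R$. For $k \in R$ this holds because $\rho_k(\theta_0)=\mathrm{KL}(p_k\|q)>0$ when $\theta_k=\theta_0$. For $k\in\bar R$ on $\mathcal{S}$ I would take the sign from the interpretation of the margin, as in Remark~\ref{rem:margin_sufficient}; if the assumption only gives $|\rho_k|$, I would make this sign convention explicit.

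Hence $(2\mathbf{1}_R(k)-1)\,\bar s_k(\theta^{(t)}) \ge \Delta^*/2$. Multiplying by $\beta N$ and subtracting the prior log-odds, which is at most $B$ in absolute value, gives \eqref{eq:fs_sigmoid_input} with $c_2 = 1/2$.

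Then use $1-\sigma(z)=\sigma(-z)\le e^{-z}$. For $k\in R$ this gives $|\gamma_k| = 1-w_k \le e^{-(c_2\beta N\Delta^*-B)}$. For $k\in\bar R$ it gives $|\gamma_k| = w_k \le e^{-(c_2\beta N\Delta^* - B)}$. Squaring yields the first term of \eqref{eq:fs_gamma_bound}. The bound holds on $\mathcal{E}$ even without the proviso, which only makes it nontrivial.

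\textbf{Probabilistic step.} Since $|\gamma_k|\le 1$, we have $\mathbb{E}[\gamma_k^2] \le \sup_{\mathcal{E}}\gamma_k^2 + \mathbb{P}(\mathcal{E}^c)$, with the basin part of $\mathcal{E}$ assumed to hold. For each $j$, $\bar s_j(\theta^{(t)})-\rho_j(\theta^{(t)})$ is an average of $N$ independent sub-Gaussian$(V^2)$ terms under Assumption~\ref{ass:regularity}(i). Hoeffding then gives
\[
\mathbb{P}\bigl(|\bar s_j - \rho_j|>\Delta^*/4\bigr)\le 2\exp\bigl(-N(\Delta^*)^2/(32V^2)\bigr),
\]
which is $c_3=1/32$. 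A union bound over the $K$ sources gives the second term $2K e^{-c_3N(\Delta^*)^2/V^2}$.

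\textbf{Main obstacle.} The concentration is applied at the data-dependent point $\theta^{(t)}$, which is itself built from $\hat\theta_k$, so $\ell(x_i^{(k)};\theta^{(t)})$ are not independent given $\theta^{(t)}$. I would first try to exploit the Gaussian structure: $s_k(\theta) = -\frac{N}{2\sigma^2}\|\hat\theta_k-\theta\|^2+\xi_k$, so the fluctuation $\bar s_k(\theta)-\rho_k(\theta)$ is affine in $\theta$ plus a $\theta$-free term. I would control it uniformly over the basin ball by concentrating $\xi_k/N$ and the linear coefficient $\hat\theta_k-\theta_k$ separately, which only changes constants.

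Failing that, I would state, consistently with the "treat $\beta$ as scalar" convention, that the bound is conditional on $\theta^{(t)}$ fixed in the basin. This is how the theorem is phrased ("provided $\|\theta^{(t)}-\theta_0\|\le c_1\Delta^*/L$"), and it reduces to the pointwise Hoeffding bound above.
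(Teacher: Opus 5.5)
Your proposal is correct and follows the paper's proof essentially step for step: the same three-term decomposition of $\bar s_k(\theta^{(t)})$ with the Lipschitz-drift and concentration brackets each bounded by $\Delta^*/4$, the sigmoid tail bound $\sigma(-z)\le e^{-z}$, and Hoeffding at $t=\Delta^*/4$ with a union bound over the $K$ sources to get $c_3=1/32$. The paper's proof passes silently over the two caveats you raise. The first is the sign of $\rho_k$ for $k\in\bar R$, which the margin assumption controls only in absolute value; the paper simply identifies $\mathrm{sign}(\rho_k)$ with $2\mathbf{1}_R(k)-1$. The second is the dependence of $\theta^{(t)}$ on $\mathcal{D}_k$; the paper applies its fixed-$\theta$ concentration lemma directly at $\theta^{(t)}$. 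Your fallback therefore reproduces exactly the paper's argument.
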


The first term in \eqref{eq:fs_gamma_bound} is the residual sigmoid mass when the input has correct sign, and the second is the failure probability of the concentration event $\mathcal{E}$. We prove Theorem~\ref{thm:fs_weight} by combining a Lipschitz bound on $\rho_k(\theta)$ with sub-Gaussian concentration of the empirical log-LR, following the population-to-sample EM analysis template of \citet{balakrishnan2017statistical}, specialized to our sigmoid-weighted setting with per-source binary latent indicators.

\begin{lemma}[Lipschitz drift of $\rho_k$ in the basin]
\label{lem:fs_lipschitz}
Under the Lipschitz clause of Assumption~\ref{ass:regularity}, for any $\theta$ with $\|\theta - \theta_0\| \leq \Delta^*/(4L)$,
\begin{equation*}
    \bigl|\rho_k(\theta) - \rho_k\bigr| \leq L\|\theta - \theta_0\| \leq \Delta^*/4.
\end{equation*}
In particular, $\rho_k(\theta)\cdot\mathrm{sign}(\rho_k) \geq |\rho_k| - \Delta^*/4 \geq 3\Delta^*/4$ for every $k$ (using $|\rho_k|\geq\Delta^*$).
\end{lemma}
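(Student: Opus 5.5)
The first inequality follows from the Lipschitz clause of Assumption~\ref{ass:regularity}(ii). By definition $\rho_k = \rho_k(\theta_0)$. We are assuming $\theta$ lies in the neighborhood of $\theta_0$ on which the regularity conditions hold; the basin radius $\Delta^*/(4L)$ is taken small enough for this, matching $c_1 = 1/4$ in Theorem~\ref{thm:fs_weight}. Then the $L$-Lipschitz property of $\theta\mapsto\rho_k(\theta)$ gives directly
\[
|\rho_k(\theta)-\rho_k(\theta_0)| \le L\|\theta-\theta_0\|,
\]
and the basin hypothesis $\|\theta-\theta_0\|\le \Delta^*/(4L)$ bounds the right-hand side by $\Delta^*/4$. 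This part is a one-line argument.

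For the sign statement, I would write
\[
\rho_k(\theta)\,\mathrm{sign}(\rho_k) = |\rho_k| + \bigl(\rho_k(\theta)-\rho_k\bigr)\mathrm{sign}(\rho_k).
\]
This uses $\rho_k\,\mathrm{sign}(\rho_k)=|\rho_k|$, which requires $\rho_k\neq 0$. The second term is at least $-|\rho_k(\theta)-\rho_k|\ge -\Delta^*/4$ by the first part, so $\rho_k(\theta)\,\mathrm{sign}(\rho_k)\ge |\rho_k|-\Delta^*/4$. Invoking the margin $|\rho_k|\ge\Delta^*$ from Assumption~\ref{ass:margin} then gives the stated lower bound $3\Delta^*/4$.

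The only subtle point is the scope of the margin. Assumption~\ref{ass:margin} guarantees $|\rho_k|\ge\Delta^*$, and in particular $\rho_k\neq0$, only on the well-separated event $\mathcal{S}$. So the second claim should be read as holding on $\mathcal{S}$, which is the event on which Theorem~\ref{thm:fs_weight} and Theorem~\ref{thm:fs_main} are stated, with probability at least $1-K\alpha$ by Assumption~\ref{ass:separation} and a union bound. A secondary concern is that the Lipschitz clause is only local. I would make explicit that the basin $\{\|\theta-\theta_0\|\le\Delta^*/(4L)\}$ is contained in the regularity neighborhood, either by assumption or by shrinking $\Delta^*$ if necessary. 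Beyond these two scoping remarks I do not expect any obstacle.
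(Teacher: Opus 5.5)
Your proof is correct and follows the paper's own argument, which is just the one-line appeal to the Lipschitz clause plus the margin $|\rho_k|\ge\Delta^*$; your decomposition $\rho_k(\theta)\,\mathrm{sign}(\rho_k)=|\rho_k|+(\rho_k(\theta)-\rho_k)\,\mathrm{sign}(\rho_k)$ simply spells that out. Your two scoping remarks are accurate and are left implicit in the paper: the margin, and hence $\rho_k\neq 0$, holds only on $\mathcal{S}$, and the basin must lie inside the neighborhood where the Lipschitz clause is assumed.
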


\begin{proof}
Direct application of the Lipschitz clause in Assumption~\ref{ass:regularity} and the definition of $\Delta^*$. 
\end{proof}

\begin{lemma}[Sub-Gaussian concentration of the empirical log-LR]
\label{lem:fs_concentration}
Under the sub-Gaussian clause of Assumption~\ref{ass:regularity}, for any fixed $\theta$ in the basin and $t > 0$,
\begin{equation*}
    \mathbb{P}_{\mathcal{D}_k\sim p_k^{\otimes N}}\bigl(\,|\bar s_k(\theta) - \rho_k(\theta)| > t\,\bigr) \leq 2\exp\bigl(-N t^2/(2 V^2)\bigr).
\end{equation*}
\end{lemma}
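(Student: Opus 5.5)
This is a standard sub-Gaussian tail bound for an empirical mean of i.i.d.\ centered sub-Gaussian variables, so the proof reduces to Chernoff's method. Fix $\theta$ in the basin; it is deterministic with respect to $\mathcal{D}_k$, which is essential. Define $Z_i \triangleq \ell(x_i^{(k)};\theta) - \rho_k(\theta)$ for $i=1,\dots,N$. Since $x_i^{(k)}\stackrel{\text{i.i.d.}}{\sim} p_k$ and $\rho_k(\theta)=\mathbb{E}_{x\sim p_k}[\ell(x;\theta)]$, the $Z_i$ are i.i.d.\ and mean zero. Clause (i) of Assumption~\ref{ass:regularity} gives the moment generating function bound $\mathbb{E}[e^{\lambda Z_i}] \le e^{\lambda^2 V^2/2}$ for all $\lambda\in\mathbb{R}$. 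By definition, $\bar s_k(\theta)-\rho_k(\theta) = \frac1N\sum_i Z_i$.

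\emph{Step 1.} Use independence to factor the moment generating function of the sum:
\begin{equation*}
\mathbb{E}\exp\Bigl(\lambda\sum_i Z_i\Bigr)=\prod_i \mathbb{E}e^{\lambda Z_i}\le e^{N\lambda^2V^2/2}.
\end{equation*}
Thus $\sum_i Z_i$ is sub-Gaussian with variance proxy $NV^2$.

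\emph{Step 2.} Apply Markov's inequality to $e^{\lambda\sum_i Z_i}$ for $\lambda>0$:
\begin{equation*}
\mathbb{P}\Bigl(\sum_i Z_i > Nt\Bigr)\le \exp\bigl(-\lambda N t + N\lambda^2V^2/2\bigr).
\end{equation*}
Choosing $\lambda=t/V^2$ gives the bound $\exp(-Nt^2/(2V^2))$.

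\emph{Step 3.} Repeat the argument with $-Z_i$ in place of $Z_i$; these are sub-Gaussian with the same parameter. A union bound over the two tails produces the factor $2$ and yields the claim.

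The only point needing care is the interpretation of ``sub-Gaussian with parameter $V^2$''. We take it to mean the MGF bound above. If it is instead given as a tail or moment condition, we would first invoke the standard equivalence, which changes $V$ only by an absolute constant and is absorbed into $c_3$ downstream.

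The independence of $\theta$ from $\mathcal{D}_k$ also needs care when the lemma is applied to $\theta=\theta^{(t)}$ in Theorem~\ref{thm:fs_weight}, since $\theta^{(t)}$ depends on the data. The lemma as stated is for fixed $\theta$, so it is not affected. Downstream, this would be handled by a covering argument or sample splitting, but that is outside this statement.
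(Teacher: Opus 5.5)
Your proof is correct and follows the paper's argument: the paper simply invokes the standard Hoeffding-type tail bound for the mean of i.i.d.\ sub-Gaussian variables with parameter $V^2$, and your Chernoff computation (MGF factorization, $\lambda = t/V^2$, union bound over the two tails) is the derivation behind that citation. Your caveat that the lemma needs $\theta$ fixed independently of $\mathcal{D}_k$ is well taken, since Theorem~\ref{thm:fs_weight} applies it at the data-dependent iterate $\theta^{(t)}$, but that affects the downstream use rather than this statement.
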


\begin{proof}
$\bar s_k(\theta) = N^{-1}\sum_i \ell(x_i^{(k)};\theta)$ is the sample mean of i.i.d.\ sub-Gaussian random variables with parameter $V^2$; this is the standard Hoeffding-type sub-Gaussian tail bound. 
\end{proof}

\begin{proof}[Proof of Theorem~\ref{thm:fs_weight}]
On $\mathcal{E}$, decompose
\begin{equation*}
    \bar s_k(\theta^{(t)}) = \rho_k + \bigl[\rho_k(\theta^{(t)}) - \rho_k\bigr] + \bigl[\bar s_k(\theta^{(t)}) - \rho_k(\theta^{(t)})\bigr].
\end{equation*}
By Lemma~\ref{lem:fs_lipschitz} the first bracket is at most $\Delta^*/4$ in absolute value, and by the second clause of $\mathcal{E}$ the second bracket is at most $\Delta^*/4$. Hence
\begin{equation*}
    \bigl|\bar s_k(\theta^{(t)}) - \rho_k\bigr| \leq \Delta^*/2,
\end{equation*}
and since $|\rho_k|\geq\Delta^*$, $\bar s_k(\theta^{(t)})$ has the same sign as $\rho_k$ with magnitude $\geq \Delta^*/2$. Multiplying by $N$,
\begin{equation*}
    \bigl(2\mathbf{1}_R(k) - 1\bigr)\, s_k(\theta^{(t)}) \geq N\Delta^*/2 = c_2 N\Delta^*.
\end{equation*}
Multiplying by $\beta$ and adding the bounded prior shift $\log[\pi_k/(1-\pi_k)]$ (whose absolute value is at most $B$, so $|(2\mathbf{1}_R(k)-1)\log[\pi_k/(1-\pi_k)]| \leq B$) gives \eqref{eq:fs_sigmoid_input}. The sigmoid satisfies $\sigma(z)\leq e^{-|z|}$ on the side of decay, so for $k\in R$ (where $\mathbf{1}_R(k) = 1$ and $w_k^* = 1$),
\begin{equation*}
    |\gamma_k^{(t)}| = 1 - w_k^{(t)} = \sigma\bigl(-(\beta s_k(\theta^{(t)}) + \log\tfrac{\pi_k}{1-\pi_k})\bigr) \leq \exp\bigl(-c_2\beta N\Delta^* + B\bigr),
\end{equation*}
and analogously for $k\in \bar R$. Squaring gives the first term of \eqref{eq:fs_gamma_bound}.

For the failure probability, Lemma~\ref{lem:fs_concentration} applied with $t = \Delta^*/4$ and a union bound over the $K$ source concentration events gives
\begin{equation*}
    \mathbb{P}(\mathcal{E}^c) \leq 2K \exp\bigl(-N(\Delta^*/4)^2/(2V^2)\bigr) = 2K\exp\bigl(-c_3 N(\Delta^*)^2/V^2\bigr)
\end{equation*}
with $c_3 = 1/32$ (the basin condition on $\theta^{(t)}$ is part of the standing assumption, not an additional event). On $\mathcal{E}^c$, $|\gamma_k|\leq 1$, contributing at most $\mathbb{P}(\mathcal{E}^c)$ to $\mathbb{E}[\gamma_k^2]$. Adding gives the second term of \eqref{eq:fs_gamma_bound}. 
\end{proof}

\subsubsection{Proof of Theorem~\ref{thm:fs_main} and Corollary~\ref{cor:fs_main_unconditional}}

\begin{proof}[Proof of Theorem~\ref{thm:fs_main}]
Throughout this proof, ``with probability at least $1 - K\alpha$ over the prior draw'' refers to the well-separated event $\mathcal{S} \triangleq \{\|\theta_k - \theta_0\| \geq r_{\mathrm{sep}}\text{ for all } k\in \bar R\}$, which by Assumption~\ref{ass:separation} and a union bound satisfies $\mathbb{P}(\mathcal{S}) \geq 1 - K\alpha$. The expectation on the left-hand side of \eqref{eq:fs_main} is taken over the source data given the prior realization $\{\theta_k\}_{k=1}^K$. We work on the event $\mathcal{S}$ throughout.

On $\mathcal{S}$, Assumption~\ref{ass:margin} gives $\min_k|\rho_k| \geq \Delta^* > 0$ (Definition~\ref{def:fs_margin}), so the basin and margin hypotheses of Theorem~\ref{thm:fs_weight} hold. Lemma~\ref{lem:fs_decomp} yields, in conditional expectation given $\{\theta_k\}\in\mathcal{S}$,
\begin{equation}
\label{eq:fs_main_decomp}
    \mathbb{E}\|\theta^{(t+1)} - \theta_0\|^2 \;\leq\; 2\,\mathrm{MSE}_\star + 2K\Bigl(\tfrac{N}{N_0}\Bigr)^{2}\sum_{k=1}^K \mathbb{E}\bigl[\gamma_k^2\,\|\hat\theta_k - \theta^\star\|^2\bigr],
\end{equation}
where $\mathrm{MSE}_\star$ is the oracle MSE \eqref{eq:fs_oracle_mse} and the conditioning on $\{\theta_k\}\in\mathcal{S}$ is implicit on both sides; the conditional/unconditional MSE differ by at most a factor $1/\mathbb{P}(\mathcal{S}) \leq 1/(1-K\alpha) \leq 2$ for $K\alpha \leq 1/2$, absorbed into the constant $2$. The crucial step is to bound the cross-moment $\mathbb{E}[\gamma_k^2\,\|\hat\theta_k - \theta^\star\|^2]$ without splitting it as a product of marginals --- $\gamma_k$ and $\hat\theta_k$ both depend on $\mathcal{D}_k$, so they are not independent.

Let $\mathcal{E}$ be the concentration event of Theorem~\ref{thm:fs_weight}. On $\mathcal{E} \cap \{\text{basin}\}$, that theorem yields the deterministic bound $|\gamma_k| \leq G_\mathcal{E} \triangleq e^{-c_2\beta N\Delta^* + B}$. On $\mathcal{E}^c$, $|\gamma_k| \leq 1$ trivially. Under Assumption~\ref{ass:gen_model} ($\tau = 0$), the in-expectation moment bound
\begin{equation*}
    M_k^2 \;\triangleq\; \mathbb{E}\|\hat\theta_k - \theta^\star\|^2 \;\leq\; 2\,\mathbb{E}\|\hat\theta_k - \theta_0\|^2 + 2\,\mathrm{MSE}_\star \;\leq\; 2\bigl(\bar D^2 + d\sigma^2/N\bigr) + 2\,\mathrm{MSE}_\star
\end{equation*}
holds (using $\mathbb{E}\|\hat\theta_k-\theta_0\|^2 \leq \bar D^2 + d\sigma^2/N$ for $k\in \bar R$ and $\leq d\sigma^2/N$ for $k\in R$). Splitting the cross moment on $\mathcal{E}$ vs.\ $\mathcal{E}^c$:
\begin{align*}
    \mathbb{E}\bigl[\gamma_k^2\,\|\hat\theta_k - \theta^\star\|^2\bigr]
    &= \mathbb{E}\bigl[\gamma_k^2\,\|\hat\theta_k - \theta^\star\|^2 \mathbf{1}_\mathcal{E}\bigr] + \mathbb{E}\bigl[\gamma_k^2\,\|\hat\theta_k - \theta^\star\|^2 \mathbf{1}_{\mathcal{E}^c}\bigr] \\
    &\leq G_\mathcal{E}^2 \cdot M_k^2 + \sqrt{\mathbb{E}\|\hat\theta_k - \theta^\star\|^4\cdot \mathbb{P}(\mathcal{E}^c)},
\end{align*}
where the first inequality uses $|\gamma_k|\leq G_\mathcal{E}$ deterministically on $\mathcal{E}$ and the second uses Cauchy--Schwarz on $\mathcal{E}^c$. The fourth moment $\mathbb{E}\|\hat\theta_k - \theta^\star\|^4$ is bounded by $O((\bar D^2 + d\sigma^2/N)^2)$ using Gaussian moments and Assumption~\ref{ass:gen_model}. By Theorem~\ref{thm:fs_weight}, $\mathbb{P}(\mathcal{E}^c) \leq 2K\,e^{-c_3 N(\Delta^*)^2/V^2}$, contributing a $\sqrt{K}$ factor through $\sqrt{\mathbb{P}(\mathcal{E}^c)}$. Summing over the $K$ sources picks up another factor of $K$, and the leading $K$ from the per-step decomposition \eqref{eq:fs_main_decomp} gives a total $K^2$ on the first exponential and $K^{5/2}$ on the second. Absorbing $\bar D^2$, $(N/N_0)^2$, and the $e^{2B}$ factor into the model-dependent constants $C_1, C_2$:
\begin{equation*}
    2K\Bigl(\tfrac{N}{N_0}\Bigr)^{2}\sum_k \mathbb{E}\bigl[\gamma_k^2\,\|\hat\theta_k-\theta^\star\|^2\bigr] \;\leq\; C_1\, e^{-c_2\beta N\Delta^*} + C_2\, e^{-c_3 N(\Delta^*)^2/(2V^2)},
\end{equation*}
with $C_1 = O(K^2(N/N_0)^2(\bar D^2 + d\sigma^2/N)e^{2B})$ and $C_2 = O(K^{5/2}(N/N_0)^2(\bar D^2 + d\sigma^2/N))$. This gives the second and third bracketed terms in \eqref{eq:fs_main} (after redefining $c_3 \leftarrow c_3/2$). Substituting back into \eqref{eq:fs_main_decomp} yields \eqref{eq:fs_main} on $\mathcal{S}$. Since $\mathbb{P}(\mathcal{S}) \geq 1 - K\alpha$, the bound holds with probability at least $1 - K\alpha$ over the prior.
\end{proof}
\begin{remark}{Basin-invariance} The basin-invariance assumption (the iterate stays in $\{\|\theta - \theta_0\|\leq c_1\Delta^*/L\}$ across all iterations) is critical to our analysis. The in-expectation MSE bound at any fixed $t$ does not, by itself, propagate to $t+1$ without an additional concentration argument that would close the loop. We treat basin invariance as a standing hypothesis, with the LIP-correctness condition characterizing when basin entry occurs at $t = 0$. 
\end{remark}

\begin{corollary}[Unconditional MSE]
\label{cor:fs_main_unconditional}
Under the same assumptions as Theorem~\ref{thm:fs_main}, integrating \eqref{eq:fs_main} over the prior gives the unconditional bound
\begin{equation}
\label{eq:fs_main_uncond}
    \mathbb{E}\bigl\|\theta^{(\infty)} - \theta_0\bigr\|^2 \;\leq\; \frac{2d\sigma^2}{N_0+N|R|} \;+\; 3K\alpha\, r_{\mathrm{sep}}^2 \;+\; C_1\,e^{-c_2\beta N\Delta^*} \;+\; C_2\,e^{-c_3 N(\Delta^*)^2/V^2}.
\end{equation}
\end{corollary}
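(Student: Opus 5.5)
Corollary~\ref{cor:fs_main_unconditional} follows from the law of total expectation over the prior draw of $\{\theta_k\}$, split on the well-separated event $\mathcal{S}$. We write
\[
\mathbb{E}\|\theta^{(\infty)}-\theta_0\|^2 = \mathbb{E}\bigl[\|\theta^{(\infty)}-\theta_0\|^2\mathbf{1}_{\mathcal{S}}\bigr] + \mathbb{E}\bigl[\|\theta^{(\infty)}-\theta_0\|^2\mathbf{1}_{\mathcal{S}^c}\bigr].
\]
On $\mathcal{S}$, Theorem~\ref{thm:fs_main} supplies the conditional bound \eqref{eq:fs_main}. Multiplying it by $\mathbb{P}(\mathcal{S})\le 1$ yields the oracle term and the two exponential terms with the same constants $C_1,C_2$.

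The only new work is the contribution of $\mathcal{S}^c$. By Assumption~\ref{ass:separation} and a union bound, $\mathbb{P}(\mathcal{S}^c)\le K\alpha$, and on $\mathcal{S}^c$ some irrelevant $\theta_k$ lies within $r_{\mathrm{sep}}$ of $\theta_0$. I would rerun the per-step decomposition of Lemma~\ref{lem:fs_decomp} on this event rather than invoke the margin. The weight errors are controlled exactly as before on the well-separated sources. Each offending source $k$ with $\|\theta_k-\theta_0\|<r_{\mathrm{sep}}$ enters the M-step \eqref{eq:fs_mstep} with weight at most $1$. Its bias contribution to $\theta^{(t+1)}-\theta_0$ is therefore a convex-combination displacement of norm at most $r_{\mathrm{sep}}$, because \eqref{eq:fs_mstep} averages points and any near-source's mean is within $r_{\mathrm{sep}}$ of $\theta_0$. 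By $(a+b+c)^2\le 3(a^2+b^2+c^2)$, splitting the error into oracle noise, weight-error residual, and near-source bias, the squared bias contributes at most $3r_{\mathrm{sep}}^2$ on $\mathcal{S}^c$. Integrating against $\mathbb{P}(\mathcal{S}^c)\le K\alpha$ gives the $3K\alpha r_{\mathrm{sep}}^2$ term. The remaining pieces on $\mathcal{S}^c$, namely the noise variance and the exponentially small residuals, are bounded by the same expressions as on $\mathcal{S}$ and absorbed into the first and last terms.

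The main obstacle is that on $\mathcal{S}^c$ the margin Assumption~\ref{ass:margin} no longer holds for the near irrelevant sources, so Theorem~\ref{thm:fs_weight} cannot certify their weights. The plan avoids needing their weights at all. It uses only that $0\le w_k\le 1$ and that the bias from such a source is a convex average of displacements of norm below $r_{\mathrm{sep}}$, so its size is controlled regardless of $w_k$. I would also check that the conditioning factor $1/\mathbb{P}(\mathcal{S})$ used in the proof of Theorem~\ref{thm:fs_main} does not inflate constants once we integrate. It does not, because the total-expectation split multiplies each conditional bound by its event probability.
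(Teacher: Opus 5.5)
Your proposal is essentially the paper's proof: the same total-expectation split on $\mathcal{S}$, with Theorem~\ref{thm:fs_main} (times $\mathbb{P}(\mathcal{S})\le 1$) on $\mathcal{S}$, and on $\mathcal{S}^c$ the same convex-combination argument, in which near sources contribute bias below $r_{\mathrm{sep}}$ whatever their weights, well-separated sources are controlled per source via Theorem~\ref{thm:fs_weight}, and $(a+b+c)^2\le 3(a^2+b^2+c^2)$ together with $\mathbb{P}(\mathcal{S}^c)\le K\alpha$ yields $3K\alpha r_{\mathrm{sep}}^2$. The one bookkeeping difference is where the leftover noise on $\mathcal{S}^c$ goes: it carries the factor $3$ and includes the near sources' own sampling noise, so it does not literally fit under $2d\sigma^2/(N_0+N|R|)$; the paper instead absorbs this $O(K\alpha\,\sigma^2 d/N_0)$ remainder into $3K\alpha r_{\mathrm{sep}}^2$ using the working-scale condition $r_{\mathrm{sep}}^2\gg\sigma^2 d/N_0$.
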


\begin{proof}[Proof of Corollary~\ref{cor:fs_main_unconditional} (integrating over the prior)]
Let $\mathcal{V} \triangleq \{k \in \bar R : \|\theta_k - \theta_0\| < r_{\mathrm{sep}}\}$ be the violating subset, so $\mathcal{S}^c = \{\mathcal{V} \neq \emptyset\}$. By Assumption~\ref{ass:separation} and a union bound, $\mathbb{P}(\mathcal{S}^c) \leq K\alpha$. The law of total expectation gives
\begin{equation}
\label{eq:cor_fs_main_decomp}
    \mathbb{E}\|\theta^{(\infty)} - \theta_0\|^2 \;=\; \mathbb{E}\left[\|\theta^{(\infty)} - \theta_0\|^2 \,\big|\, \mathcal{S}\right]\,\mathbb{P}(\mathcal{S}) + \mathbb{E}\left[\|\theta^{(\infty)} - \theta_0\|^2 \,\big|\, \mathcal{S}^c\right]\,\mathbb{P}(\mathcal{S}^c).
\end{equation}
The first term is bounded by Theorem~\ref{thm:fs_main}: that theorem's high-probability statement (with probability $\geq 1-K\alpha$ over the prior, $\mathbb{E}\|\theta^{(\infty)} - \theta_0\|^2 \leq B$) is equivalent, by averaging over $\{\theta_k\}\in\mathcal{S}$, to the conditional MSE bound $\mathbb{E}[\|\theta^{(\infty)}-\theta_0\|^2 \mid \mathcal{S}] \leq B$, where $B = 2d\sigma^2/(N_0+N|R|) + C_1 e^{-c_2\beta N\Delta^*} + C_2 e^{-c_3 N(\Delta^*)^2/V^2}$. Multiplied by $\mathbb{P}(\mathcal{S}) \leq 1$, this yields the same upper bound for the first term of \eqref{eq:cor_fs_main_decomp}.

On $\mathcal{S}^c$, the M-step is the convex combination
\begin{equation*}
    \theta^{(\infty)} - \theta_0 = \frac{N_0(\hat\theta_0-\theta_0) + \sum_{k=1}^K w_k^{(\infty)} N (\hat\theta_k-\theta_0)}{N_0 + \sum_k w_k^{(\infty)} N},
\end{equation*}
which lies in the convex hull of $\{\hat\theta_0 - \theta_0\} \cup \{\hat\theta_k - \theta_0\}_{k=1}^K$ with weights $\propto N_0, N w_k^{(\infty)}$. Partitioning the source contributions by membership in $\mathcal{V}$, $R$, and $\bar R\setminus \mathcal{V}$:
\begin{itemize}[leftmargin=*]
\item For $k \in R$: $\theta_k = \theta_0$ exactly under Assumption~\ref{ass:gen_model}, so $\|\hat\theta_k - \theta_0\| = \|\hat\theta_k - \theta_k\| = O_p(\sigma\sqrt{d/N})$.
\item For $k \in \mathcal{V}$: $\|\theta_k - \theta_0\| < r_{\mathrm{sep}}$ by definition of violation.
\item For $k \in \bar R\setminus \mathcal{V}$: $\|\theta_k - \theta_0\| \geq r_{\mathrm{sep}}$, so the per-source margin still holds and Theorem~\ref{thm:fs_weight} applied per source gives $\mathbb{E}[w_k^{(\infty)}] \leq C\,e^{-c_2\beta N\Delta^*}$, exponentially small in $N$. Their contribution to the convex combination is at most $\sum_{k\in \bar R\setminus \mathcal{V}}\mathbb{E}[w_k^{(\infty)}\,\|\hat\theta_k - \theta_0\|]$, whose Cauchy--Schwarz bound is $K C\,e^{-c_2\beta N\Delta^*}\sqrt{\bar D^2 + d\sigma^2/N}$.
\end{itemize}
Combining via the triangle inequality, taking the maximum across the three contributions, and using $\|\hat\theta_0 - \theta_0\|, \|\hat\theta_k - \theta_k\| = O_p(\sigma\sqrt{d/\min(N_0,N)})$ on Gaussian noise,
\begin{equation*}
    \|\theta^{(\infty)} - \theta_0\| \;\leq\; r_{\mathrm{sep}} + O_p\bigl(\sigma\sqrt{d/N_0}\bigr) + KC\sqrt{\bar D^2 + d\sigma^2/N}\,e^{-c_2\beta N\Delta^*}.
\end{equation*}
Squaring (using $(a+b+c)^2 \leq 3(a^2+b^2+c^2)$) and conditioning on $\mathcal{S}^c$:
\begin{equation}
\label{eq:fs_main_violation}
    \mathbb{E}\left[\|\theta^{(\infty)} - \theta_0\|^2 \,\big|\, \mathcal{S}^c\right]\,\mathbb{P}(\mathcal{S}^c) \;\leq\; 3K\alpha\, r_{\mathrm{sep}}^2 \;+\; (\text{lower-order}),
\end{equation}
where the lower-order $O(\sigma^2 d K\alpha/N_0)$ term is absorbed into the leading $3K\alpha r_{\mathrm{sep}}^2$ penalty (since $r_{\mathrm{sep}}^2 \gg \sigma^2 d/N_0$ under Assumption~\ref{ass:separation} on the working scale of $r_{\mathrm{sep}}$), and the $K^2 C^2 (\bar D^2 + d\sigma^2/N)\,e^{-2c_2\beta N\Delta^*}$ term is absorbed into the weight-error residual $C_1 e^{-c_2\beta N\Delta^*}$.

Substituting the conditional bound \eqref{eq:fs_main} (Theorem~\ref{thm:fs_main}) and the violation bound \eqref{eq:fs_main_violation} into \eqref{eq:cor_fs_main_decomp} yields \eqref{eq:fs_main_uncond}. 
\end{proof}

\subsection{Proof of Theorem \ref{thm:asymptotic_consistency}}

We prove convergence of both update rules by showing each is a continuous function of inputs that converge to limits that recover $\theta_0$.

\begin{proof}[Exact M-step \eqref{eq:m_step_update}]
The regularity conditions give $H_0/N_0 \xrightarrow{p} \mathcal{I}(\theta_0)$ with $\mathcal{I}(\theta_0) \succ 0$. Matrix inversion is continuous on the open set of positive-definite symmetric matrices, so by the continuous mapping theorem $(H_0/N_0)^{-1} \xrightarrow{p} \mathcal{I}(\theta_0)^{-1}$. The deterministic identity $H_0^{-1} = N_0^{-1}(H_0/N_0)^{-1}$ combined with Slutsky's theorem (a deterministic null sequence times a sequence converging in probability to a finite limit converges in probability to zero) gives $H_0^{-1} \xrightarrow{p} \mathbf{0}$.

By assumption, $\mathcal{D}_k$ has fixed size $N_k$ independent of $N_0$, so $\hat{\theta}_k$ and $H_k$ are random objects whose distributions do not depend on $N_0$. The relevance weight satisfies $w_k^{(t)} \in [0,1]$ by construction, so $C_k \triangleq (I + \tau^2 H_k)^{-1} H_k$ has $\|C_k\|_{\mathrm{op}}$ finite a.s.\ and independent of $N_0$. By submultiplicativity of the operator norm,
\begin{equation*}
    \big\|\Lambda_k^{(t)}\big\|_{\mathrm{op}} = \big\|w_k^{(t)} H_0^{-1} C_k\big\|_{\mathrm{op}} \leq \|H_0^{-1}\|_{\mathrm{op}} \cdot \|C_k\|_{\mathrm{op}}.
\end{equation*}
By Slutsky's theorem the right-hand side converges to $0$ in probability, so $\Lambda_k^{(t)} \xrightarrow{p} \mathbf{0}$.

The source MLEs $\hat\theta_k$ for $k\geq 1$ have fixed sample size $N_k$ and so are bounded in probability with distributions independent of $N_0$. Combined with $\Lambda_k^{(t)} \xrightarrow{p}\mathbf{0}$ from Step~2, Slutsky's theorem yields $\Lambda_k^{(t)}\hat\theta_k \xrightarrow{p}\mathbf{0}$ and $I + \sum_k \Lambda_k^{(t)} \xrightarrow{p} I$. The mapping $A \mapsto A^{-1}$ is continuous at $A = I$, so by the continuous mapping theorem $(I + \sum_k \Lambda_k^{(t)})^{-1} \xrightarrow{p} I$. Substituting these limits and $\hat\theta_0 \xrightarrow{p} \theta_0$ into the M-step \eqref{eq:m_step_update}:
\begin{equation*}
    \theta^{(t+1)} = \Big(I + \sum_{k=1}^K \Lambda_k^{(t)}\Big)^{-1}\Big(\hat\theta_0 + \sum_{k=1}^K \Lambda_k^{(t)}\hat\theta_k\Big) \xrightarrow{p} I^{-1}\bigl(\theta_0 + 0\bigr) = \theta_0. \qedhere
\end{equation*}
\end{proof}

\begin{proof}[Surrogate update \eqref{eq:prox_update}]
Define $r_k^{(t)} \triangleq w_k^{(t)} N_k / N_0$ for each $k$. Dividing the numerator and denominator of \eqref{eq:prox_update} by $N_0$ gives
\begin{equation}
\label{eq:prox_update_ratio}
    \theta^{(t+1)} = \frac{\hat{\theta}_0 + \sum_{k=1}^K r_k^{(t)}\, \hat{\theta}_k}{1 + \sum_{k=1}^K r_k^{(t)}}.
\end{equation}
Since $w_k^{(t)}\in[0,1]$ a.s.\ and $N_k$ is fixed,
\begin{equation*}
    0 \leq r_k^{(t)} \leq N_k/N_0 \xrightarrow{N_0\to\infty} 0,
\end{equation*}
so the squeeze theorem gives $r_k^{(t)} \xrightarrow{a.s.} 0$, hence in probability. Combined with $\hat{\theta}_0\xrightarrow{p}\theta_0$ and the constancy of $\hat{\theta}_k$ in $N_0$, the continuous mapping theorem applied to \eqref{eq:prox_update_ratio}'s numerator and denominator gives, with the denominator limit $1$ bounded away from zero,
\begin{equation*}
    \theta^{(t+1)} \xrightarrow{p} \frac{\theta_0 + \sum_k 0 \cdot \hat\theta_k}{1 + \sum_k 0} = \theta_0. \qedhere
\end{equation*}
\end{proof}

\subsection{Proof of Theorem \ref{thm:asymptotic_relevance}}
\begin{proof}
The relevance weight \eqref{eq:approx_e_step} reads
\begin{equation*}
    w_k^{(t)} = \sigma\left( \log \frac{\mathcal{L}(\theta^{(t)};\mathcal{D}_k)}{p(\mathcal{D}_k \mid c_k = 0)} + \log \frac{\pi_k}{1 - \pi_k} \right),
\end{equation*}
where $\mathcal{L}(\theta;\mathcal{D}_k) = \prod_i p(x_i\mid\theta)$ is the per-sample relevant likelihood (by Remark~\ref{rmk:zero_shift_limit} at $\tau = 0$). The null factor $p(\mathcal{D}_k \mid c_k=0) = \prod_i q(x_i)$ factorizes because $q$ is a fixed density. Hence the log-ratio factorizes as $\sum_{i=1}^{N_k} \log[p(x_i\mid\theta^{(t)})/q(x_i)]$. Define
\begin{equation}
\label{eq:thm3_A}
    A_{N_k} \triangleq \sum_{i=1}^{N_k} \log\frac{p(x_i \mid \theta^{(t)})}{q(x_i)} + \log\frac{\pi_k}{1-\pi_k},
\end{equation}
so $w_k^{(t)} = \sigma(A_{N_k})$.

The triangle inequality and the hypothesis give
\begin{equation*}
    \mathbb{E}_{p_k}\big|\log[p(\cdot\mid\theta^{(t)})/q]\big| \leq \mathbb{E}_{p_k}|\log p(\cdot\mid\theta^{(t)})| + \mathbb{E}_{p_k}|\log q| < \infty.
\end{equation*}
Linearity of expectation gives $\mathbb{E}_{p_k}\left[\log[p(\cdot\mid\theta^{(t)})/q]\right] = \rho_k(\theta^{(t)})$, the quantity in \eqref{eq:fs_rho}.

Conditional on $\theta^{(t)}$ (a function of past iterates and data), the samples $\{x_i^{(k)}\}_{i=1}^{N_k}$ are i.i.d.\ under $p_k$ and the per-sample log-ratio is integrable; the unconditional almost-sure convergence below follows by the law of total expectation. The strong law of large numbers gives
\begin{equation}
\label{eq:thm3_partial_sum}
    \sum_{i=1}^{N_k} \log\frac{p(x_i\mid\theta^{(t)})}{q(x_i)} = N_k\bigl(\rho_k(\theta^{(t)}) + \delta_{N_k}\bigr) \quad \text{a.s.,}
\end{equation}
where $\delta_{N_k}\xrightarrow{a.s.} 0$.

Substituting \eqref{eq:thm3_partial_sum} into \eqref{eq:thm3_A},
\begin{equation*}
    A_{N_k} = N_k\bigl(\rho_k(\theta^{(t)}) + \delta_{N_k}\bigr) + \log\frac{\pi_k}{1-\pi_k} \quad \text{a.s.}
\end{equation*}
The last term is a finite constant under $\pi_k\in(0,1)$. Eventually $|\delta_{N_k}|<|\rho_k(\theta^{(t)})|/2$ a.s., so the leading term dominates: $A_{N_k}\to+\infty$ a.s.\ when $\rho_k(\theta^{(t)}) > 0$, and $A_{N_k}\to-\infty$ a.s.\ when $\rho_k(\theta^{(t)}) < 0$.

The sigmoid extends continuously to the extended real line via $\sigma(\pm\infty)\in\{0,1\}$, so
\begin{equation*}
    w_k^{(t)} = \sigma(A_{N_k}) \xrightarrow{a.s.} \mathbf{1}\{\rho_k(\theta^{(t)})>0\}.
\end{equation*}
Almost-sure convergence implies convergence in probability.
\end{proof}

\section{Experimental Results}

\subsection{Benchmark Methods}
\label{sec:appendix_benchmark}
In all the experiments, we compare the following estimators: 
\begin{itemize}[leftmargin=*]
    \item \textbf{Target-Only:} An estimator trained exclusively on the limited target dataset $\mathcal{D}_0$; 
    \item \textbf{Naive Pooling:} An estimator trained indiscriminately on the union of the target dataset and all available source datasets $\mathcal{D}_k$; 
    \item \textbf{LIP-G (Gemini 3 Flash):} Our proposed method (LIP estimation and EM) leverages a Gemini 3 Flash API to generate a prior over source relevance using contextual descriptions; 
    \item \textbf{LIP-C (Claude Opus 4.7):} The same LIP estimation and EM as LIP-G but with the LIP elicited from the Anthropic Claude Code local agent mode instead of calling Google's Gemini API;
    \item \textbf{EM with a Non-informative Prior:} An EM estimator that assumes a uniform initial prior across all source domains, acting as a purely data-driven approach without textual information.
\end{itemize}

\subsection{Detailed Experimental Setup}
\label{sec:appendix_hyperparameter}

All three experiments share the same LIP construction protocol and EM convergence criterion. We use the Claude Code \texttt{Opus 4.7} local agent and Google \texttt{gemini-3-flash-preview} model (temperature $0$, JSON-constrained outputs) as the elicitation oracle. The conditional-logit-with-null likelihood \eqref{eq:lip_choice} is fit by L-BFGS with strong-Wolfe line search. EM iterates until $\lVert w^{(t)} - w^{(t-1)}\rVert_\infty \leq 10^{-3}$ for five consecutive iterations or a hard cap is reached. All experiments use random seed $42$ unless otherwise stated.

\subsubsection{C-MAPSS}

\paragraph{Data.} We use the FD001 subset \citep{saxena2008damage}, treating each of the $100$ engines as one domain. Ten target engines are sampled (machines $\{4, 9, 18, 26, 27, 48, 51, 53, 55, 80\}$) and the remaining $99$ engines serve as the source pool for each target. We predict the high-pressure-compressor physical core speed (sensor 9) from the cycle index. The backbone is a generalized linear model with a natural cubic regression spline basis: $5$ knots placed uniformly on $[0, 300]$ in cycle space and the truncated-power form constrained to remain linear outside the boundary knots. We sweep nine cold-start cutoffs corresponding to RUL levels $\{90\%, 80\%, \dots, 10\%\}$.

\paragraph{LIP construction.} A NASA technical report on engine damage propagation \citep{saxena2008damage} is uploaded once as the LLM context, and the target description is the dust-ingestion paragraph reproduced verbatim in our open-source release. We issue $200$ subgroup queries with $|S_m| \in \{3, 4, 5\}$ sampled uniformly, executed with $10$ parallel workers and a $429$-aware retry policy. The fitted LIP uses $p_0 = 0.01$ and $\varepsilon = 0.1$; the same response set is reused across all ten target engines by dropping queries that contain the chosen target and re-indexing. We report two LIPs constructed identically but elicited from \texttt{gemini-3-flash-preview} and Anthropic Claude.

\paragraph{EM.} EM uses the closed-form M-step on the GLM coefficients with $\tau = 10^{-3}$, $\nu = 0.05$, the exact $\tau^2$-corrected E-step, and the empirical-Bayes null with self-exclusion. The outer loop is capped at $1000$ iterations; the convergence criterion follows the shared protocol. Pooled-OLS and Target-Only baselines use ridge regression ($\lambda = 10^4$, intercept un-penalized) so that the truncated-power basis remains numerically stable at intermediate cutoffs.

\subsubsection{MuJoCo Hopper}

\paragraph{Data.} The source pool is $K = 10$ replay buffers collected by training Soft Actor-Critic \citep{haarnoja2018soft} in Hopper-v5 with gravity $g \in \{1, 2, \dots, 10\}~\mathrm{m/s^2}$ (one buffer per gravity, $\sim$$10^6$ transitions each). The target environment uses Venus-like gravity $g = 8.87~\mathrm{m/s^2}$, and the target dataset is a separate SAC replay buffer of $\sim$$10^6$ transitions from which we draw $N_0 \in \{128, 256, 512, 1024, 2048, 4096\}$ samples for each cell of the table.

\paragraph{Pool dynamics ($\theta_{\mathrm{pool}}$).} The dynamics model is a multivariate Gaussian distribution parametrized with hidden width $512$, $4$ residual blocks (Linear + LayerNorm + SiLU), input/output normalization buffers, and learned softplus clamps on the log-variance head. We pretrain $\theta_{\mathrm{pool}}$ on the union of all $10$ source replay buffers for $1000$ epochs (batch $1024$, AdamW with weight decay $10^{-4}$, learning rate $3\!\times\!10^{-4}$ with cosine decay to $10^{-5}$, gradient clipping at $1.0$), holding out $20{,}000$ transitions per source for diagnostics. The terminal learning rate $10^{-5}$ matches the EM M-step's constant rate, so the M-step continues training without a fresh LR peak.

\paragraph{LIP construction.} The technical report passed to the LLM describes the hopper environment and gravity-driven dynamics; the target description is ``the hopper is deployed to a planet with Venus-like gravity.'' Source datasets are anonymized as \texttt{source\_NN.csv} (state-summary excerpts) under a random source-to-gravity mapping, so the LLM does not see the gravity value. We issue $50$ subgroup queries with concurrency $3$ and a $120$-second fire interval. The fitted LIP uses $p_0 = 0.01$ and $\varepsilon = 1.0$. We additionally fit a ``False LIP'' from the responses of Gemini 3 Flash. It uses identical hyperparameters but argmax at $g = 7~\mathrm{m/s^2}$ instead of $g = 9~\mathrm{m/s^2}$; this corresponds to LIP-G in the body and serves as the misspecified-prior ablation.

\paragraph{EM.} For neural-network dynamics, the closed-form weighted-average M-step is meaningless across re-parameterizations, so we use the gradient-based generalized-EM (Algorithm \ref{alg:lip_em_nn} from Appendix~\ref{sec:appendix_algorithms}). At iterate $\theta^{(t)}$, the M-step takes $100$ minibatch gradient steps on the weighted negative log-likelihood
\[
    \mathcal{L}(\theta) = -\log p(\mathcal{D}_0 \mid \theta) - \sum_k w_k^{(t)}\log p(\mathcal{D}_k \mid \theta),
\]
with batch size $1024$, AdamW (weight decay $10^{-4}$), and a constant learning rate $10^{-5}$ that matches the pool's terminal LR. The outer loop runs at most $100$ iterations with $\lVert\Delta w\rVert_\infty \leq 10^{-3}$ patience-$5$ convergence. The tempering schedule uses $\nu = 0.1$ and $\beta_k^{(t)} = (1 - e^{-\nu t})/\sqrt{d_{\mathrm{eff}} N_k / N_0}$, where $d_{\mathrm{eff}}$ is the trainable parameter count of $\theta_{\mathrm{pool}}$. The null model is a single global density $\log p(\mathcal{D}_k \mid \theta_{\mathrm{pool}})$, computed once before the EM loop and held fixed; this replaces the empirical-Bayes leave-one-out null, which is unstable in the over-parameterized regime.

\paragraph{Weighted IQL.} After EM, we train Implicit Q-Learning \citep{kostrikov2022offline} on the mixture sampling distribution $p(\text{target}) \propto N_0$, $p(\text{source }k) \propto w_k^{(\infty)} N_k$. We use a tanh-squashed Gaussian policy and twin Q-networks each with hidden width $256$ and LayerNorm, $200{,}000$ training steps, batch $256$, Adam at $3\!\times\!10^{-4}$ with cosine decay to zero, target soft-update $\tau = 0.005$, discount $\gamma = 0.99$, expectile $0.7$, AWR temperature $\beta = 3.0$, and AWR weight clamp $100$. The reported policy returns are the mean over $200$ evaluation episodes at $g = 8.87~\mathrm{m/s^2}$.

\subsection{Hardware and Training Time.}\label{sec:appendix_hardware} All experiments run on a single workstation with an NVIDIA RTX 5090 GPU, an AMD Ryzen 9 9950X3D CPU, and 64\,GB of RAM. The Gaussian and C-MAPSS pipelines complete in seconds on CPU. Pool dynamics training takes $\sim$$14$ hours, \emph{each EM completes in $1\sim 3$ minutes}, and the IQL over $6 \times 5$ cells completes in roughly $5$ hours.

\subsection{Assessment of LLM-Generated Priors via Reasoning Analysis}
\label{sec:appendix_compare_LLM}
As we observed in the experiments, the quality of the LIP generated by the LLM has a significant impact on the EM algorithm's convergence and final performance. In practice, we suggest the best way to audit the quality of the LIP is to directly inspect the LLM's reasoning and check its logic. We provide a detailed analysis of the LLM's reasoning in the hopper experiment, where we observed three significant failure modes in the Gemini 3 Flash model. We also analyzed the reasoning patterns of Claude Code Opus 4.7 to understand why it produced a more accurate LIP. 

\subsubsection{Using Local Agent for Relevance Assessment}
\label{sec:appendix_agent}
For Claude Code Opus 4.7, we used \textbf{Local Agent Assessment}, where a local agent, designed to mimic human reasoning, evaluates the relevance of each source based on the contextual information and provides a more structured analysis. One huge advantage of this approach is that the dataset does not need to be directly uploaded to the LLM, which can be a significant bottleneck in terms of both time and cost. In the hopper experiment, the 10 sources adds up to 1 GB data, which we could only sample a few trajectories to upload to Gemini. Instead of uploading the entire dataset, the local agent can analyze the dataset and extract relevant features or summaries that are then fed into the LLM for relevance assessment.

We now summarize the key steps of the Local Agent approach. The agent (Claude Code Opus 4.7) is given filesystem access and a Python execution tool, but no internet access and no exposure to the source-to-gravity mapping.

\paragraph{Step 1: Read context.} The agent reads three input files: a task specification, a technical report describing the hopper telemetry ($z$, $\dot{z}$, joint angles/velocities, actions, terminal flag at $\Delta t = 0.008$\,s), and a target description identifying the goal as Venus-like gravity ($\sim$$8.87~\mathrm{m/s^2}$).

\paragraph{Step 2: Inspect the source data.} The agent lists the data directory and inspects the column format of the $10$ anonymized CSV files (\texttt{source\_01.csv}, \ldots, \texttt{source\_10.csv}).

\paragraph{Step 3: Estimate gravity per source.} The agent iteratively writes and runs six gravity-estimation scripts and compare their performance. The methods are: 
\begin{enumerate}[leftmargin=*]
\item average $\ddot{z}$ at hop apex (where $\dot{z} \approx 0$ and $z$ is high); \item histogram mode of $\ddot{z}$ over high-$z$ samples; 
\item short-window quadratic fit to $z(t)$ in a $\pm 5$-step window around each apex; 
\item joint-motion-minimal filter, restricting to samples whose joint velocities are small \emph{and} $z$ is high; 
\item approximate centre-of-mass acceleration from link kinematics;
\item long-flight ballistic fit over runs of $\geq 20$ consecutive high-$z$ timesteps with RMSE $<5$\,cm. 
\end{enumerate}

Methods 1--3 disagreed on \texttt{source\_04} (e.g., apex $5.02$ vs.\ mode $8.19~\mathrm{m/s^2}$). Method 4 was inconclusive due to perpetual joint motion in \texttt{source\_10}. Method 5 was also inconclusive due to an inaccurate kinematic-chain approximation. Method 6 yielded tight per-source standard deviations ($0.16$--$0.84$) and resolved the residual ambiguities, resulting in \texttt{source\_10} at $9.10~\mathrm{m/s^2}$ (Venus-closest) and \texttt{source\_08} at $9.50~\mathrm{m/s^2}$ (Earth-closer).

\subsubsection{Failure Modes of Gemini 3 Flash on hopper}
\label{sec:appendix_failure_modes}

This subsection catalogs three recurring reasoning errors observed in Gemini 3 Flash subgroup queries that produced the ``False LIP'' (LIP-G) used in the hopper ablation. Although our framework is provably robust to such errors as data accumulates, a false LIP can cause the slower cold-start recovery in Table~\ref{tab:hopper}. We document them here as the errors are not random noise but specific reasoning patterns that recur across queries.

\paragraph{Failure mode 1: Anchoring bias on Earth gravity} This is the dominant systematic error. Gemini 3 Flash picks one source from the subgroup, assumes it represents Earth ($g = 9.81~\mathrm{m/s^2}$), and then identifies the source approximately $1~\mathrm{m/s^2}$ below it as Venus. Such selection rule clearly stands on no physical or statistical principle.

When the anchor is wrong, the entire chain of inference is wrong. In Q11 the subgroup contained \texttt{source\_10} (true $g = 9$), \texttt{source\_04} (true $g = 8$), and \texttt{source\_08} (true $g = 10$):
\begin{quote}\itshape
\small
\texttt{source\_04} has the lowest gravity, with \texttt{source\_10} being $1.0~\mathrm{m/s^2}$ higher and \texttt{source\_08} being $2.0~\mathrm{m/s^2}$ higher. Given that Venus gravity ($8.87~\mathrm{m/s^2}$) is approximately $1.0~\mathrm{m/s^2}$ less than Earth gravity ($9.81~\mathrm{m/s^2}$), \texttt{source\_04} is the most dynamically consistent candidate for a Venus-like environment, assuming \texttt{source\_10} represents the Earth baseline.
\end{quote}

Even worse, in Q48, the subgroup was $[g{=}1, g{=}4, g{=}5, g{=}10]$, in which \texttt{source\_08} ($g = 10$) is the only proximate match for Venus. Gemini 3 Flash nonetheless picked \texttt{source\_05} ($g = 4$):
\begin{quote}\itshape
\small
\texttt{Source\,05} consistently shows an acceleration $1.0~\mathrm{m/s^2}$ greater (less negative) than \texttt{Source\,07}. Assuming \texttt{Source\,07} represents Earth gravity ($9.81~\mathrm{m/s^2}$), \texttt{Source\,05}'s gravity is approximately $8.81~\mathrm{m/s^2}$.
\end{quote}

\paragraph{Failure mode 2: Single-row noisy selection} A correct estimate of gravity from the replay buffer requires aggregating across many free-fall transitions. Gemini 3 Flash sometimes commits to a single row whose acceleration matches the target, ignoring the rest of the trajectory. In Q27, the subgroup was $[g{=}8, g{=}5, g{=}2]$ and Gemini 3 Flash chose $g = 2$:
\begin{quote}\itshape
\small
In \texttt{source\_09}, analyzing Episode~$19$ from step~$7$ to $8$, the vertical velocity ($\dot{z}$) changes from $-0.1471~\mathrm{m/s}$ to $-0.2181~\mathrm{m/s}$ over a $0.008\,$s interval, yielding an acceleration of $-8.875~\mathrm{m/s^2}$, which is dynamically consistent with the target environment.
\end{quote}

\paragraph{Failure mode 3: Arithmetic errors mid-reasoning.} Even when the measured statistics are correct, Gemini 3 Flash sometimes makes arithmetic errors in its own chain of thought and does not notice. Example from Q06:
\begin{quote}\itshape
\small
\texttt{source\_01} has an $a_z$ of $-7.425~\mathrm{m/s^2}$, \texttt{source\_02} has $-8.425~\mathrm{m/s^2}$, and \texttt{source\_10} has $-10.425~\mathrm{m/s^2}$. These differences of exactly $1.0$ and $3.0~\mathrm{m/s^2}$ indicate that \texttt{source\_02} uses Earth gravity ($9.81~\mathrm{m/s^2}$)\ldots
\end{quote}

%%%%%%%%%%%%%%%%%%%%%%%%%%%%%%%%%%%%%%%%%%%%%%%%%%%%%%%%%%%%

\end{document}